\newcommand{\bm}[1]{\boldsymbol{#1}}
\newcommand{\dd}{\mathrm{d}}
\newcommand{\E}{\mathbb{E}}
\newcommand{\p}{\mathbb{P}}
\newcommand{\g}{\mathcal{G}}
\newcommand{\f}{\mathcal{F}}
\newtheorem{theorem}{Theorem}[section]
\newtheorem{lemma}[theorem]{Lemma}
\newtheorem{remark}[theorem]{Remark}
\newcommand{\Mj}{(\sum_{j=1}^{m} D_j D_j^\top)^{-1}}
\newcommand{\Sj}{\sum_{j=1}^{m}}
\begin{document}
	
	\title{Data-Driven Exploration for a Class of Continuous-Time Indefinite Linear--Quadratic Reinforcement Learning Problems}

\pagenumbering{arabic}
	
\author{Yilie Huang \thanks{Department of Industrial Engineering and Operations Research, Columbia University, New York, NY 10027, USA. Email: yh2971@columbia.edu.}  ~ ~ ~ Xun Yu Zhou\thanks{Department of Industrial Engineering and Operations Research \& Data Science Institute, Columbia University, New York, NY 10027, USA. Email: xz2574@columbia.edu.}}
	
	\date{}
	\maketitle
	
\begin{abstract}
We study reinforcement learning (RL) for the same  class of continuous-time stochastic linear--quadratic (LQ) control problems as in \cite{huang2024sublinear}, where volatilities depend on both states and controls while states are scalar-valued and there are no running control rewards, the last feature also leading to the so-called \textit{indefinite} LQ controls.  We propose a model-free, data-driven exploration mechanism that adaptively adjusts entropy regularization by the critic and policy variance by the actor. Unlike the constant or deterministic exploration schedules employed in \cite{huang2024sublinear}, which require extensive tuning for implementations and ignore learning progresses during iterations, our adaptive exploratory approach boosts learning efficiency with minimal tuning. Despite its flexibility, our method achieves a sublinear regret bound that matches the best-known model-free results for this class of LQ problems, which were previously derived only with fixed exploration schedules. Numerical experiments demonstrate that adaptive explorations accelerate convergence and improve regret performance compared to the non-adaptive model-free and model-based counterparts.
\end{abstract}
	
	\begin{keywords}
		Stochastic Indefinite Linear–-Quadratic Control, Continuous-Time Reinforcement Learning, Actor, Critic,  Data-Driven Exploration, Model-Free Policy Gradient, Regret Bounds.
	\end{keywords}
	
	\section{Introduction}
	\label{sec_intro}

    Linear–quadratic (LQ) control is a cornerstone of optimal control theory, widely applied in engineering, economics, and robotics among many others due to its analytical tractability and practical relevance. Its theory has been extensively developed in the classical model-based setting where all the LQ coefficients are known and given \cite{anderson2007optimal,YZbook}.
    Assuming the objective is to {\it minimize} the cost functional, a key assumption in the LQ theory is that the control cost weighting matrix must be positive definite. Intuitively, this is because if increasing the level of control is costless or even beneficial, then the optimal control would just be infinitely large leading to an ill-posed problem.
    \cite{chen1998stochastic} finds, however, that certain stochastic control LQ control problems with indefinite cost matrices—including those associated with control—can still be well-posed if the diffusion coefficients depend on the control variable. This seemingly surprising result actually has a simple explanation: while benefiting from an indefinite control cost, a larger control is disadvantageous in terms of increasing the level of uncertainty because the volatility term depends on control multiplicatively. Therefore, one needs to carefully choose the optimal level of control, rendering a well-posed optimization problem. \cite{chen1998stochastic} coins the term ``indefinite stochastic LQ control" for such a problem, and establishes conditions for well-posedness and solvability via newly introduced  generalized Riccati equations. Subsequent works extend the theory in several directions, including infinite-time horizons \cite{ait2000well} and  connections with linear matrix inequalities \cite{rami2000linear}, asymptotic properties of the Riccati solutions \cite{rami2001solvability}, and semidefinite programming  for numerical computations \cite{yao2001primal}. Research on indefinite LQ controls and the related generalized Riccati equations has been active to this day; see, e.g., \cite{li2022stabilization,gashi2023optimal,wang2024indefinite,wu2025stochastic}.

    All the research on indefinite LQ controls, and indeed most studies on stochastic controls,
     are undertaken in a model-based paradigm where the system coefficients and objective/cost functionals are fully known.
    In real-world applications, however, complete knowledge of model parameters is rarely available. Many environments exhibit LQ-like structure yet key parameters in dynamics and objectives may be partially known or entirely unknown. While one can use historical data to estimate those model parameters  as commonly done in the so-called ``plug-in" approach, estimating some of the parameters to a required accuracy may not be possible. For instance, it is statistically impossible to estimate a stock return rate for a very small period -- a phenomenon termed ``mean-blur"  \cite{merton1980estimating, luenberger1998investment}. On the other hand, objective function values  can be highly sensitive to estimated model parameters, rendering inferior performances or even complete irrelevance of model-based solutions \cite{rustem2009algorithms}.

    Given the limitations of model-based controls, particularly the challenges associated with unknown model coefficients, reinforcement learning (RL) rises to provide a promising remedy. Simply put, RL is stochastic control with {\it unknown} model parameters.
    Yet it never attempts to estimate any model coefficients, and solves the problem in a fundamentally different way compared to the classical, model-based stochastic control.  A centerpiece of RL is exploration, with which the agent interacts with the unknown environment and improves her policies. The simplest exploration strategy is perhaps the so-called $\epsilon$-greedy for bandits problem \cite{sutton2011reinforcement}. More sophisticated and efficient exploration strategies include intrinsic motivation methods that reward exploring  novel or unpredictable states \cite{aubret2019survey, schmidhuber2010formal,achiam2017surprise,pathak2017curiosity, burda2018large,burda2018exploration, osband2018randomized}, count-based exploration that encourages the agent to explore less frequently visited states \cite{menard2021fast, tang2017exploration}, go-explore that explicitly stores and revisits promising past states to enhance sample efficiency \cite{ecoffet2019go, ecoffet2021first}, imitation-based exploration which leverages expert demonstrations \cite{vecerik2017leveraging}, and safe exploration methods that rely on predefined safety constraints or risk models \cite{garcia2015comprehensive, saunders2017trial}. While these methods are shown to be successful in their respective contexts, they are all for discrete-time problems and seem  difficult to be extended to the continuous-time counterparts. However, most real-life applications are inherently continuous-time with continuous state spaces and possibly continuous control spaces (e.g., autonomous driving, robot navigation, and video game playing). On the other hand,  transforming a continuous-time problem into a discrete-time one upfront by discretization has shown to be problematic when the time step size becomes very small \cite{munos2006policy,tallec2019making,park2021time}.

	
	Thus, there is a pressing need for developing exploration strategies tailored to continuous-time RL, particularly and foremost  in the LQ setting, for achieving both stability and learning efficiency. Recent works, such as \cite{huang2024sublinear, szpruch2024optimal}, have employed constant or deterministic exploration schedules to regulate exploration parameters and proved to  achieve sublinear regret bounds. However, these approaches come with notable drawbacks. They require extensive manual tuning of the exploration hyperparameters, which considerably increases computational costs, yet these tuning efforts are typically not reflected in theoretical analyses and results including those concerning  regret bounds. Moreover, pre-determined exploration strategies lack adaptability by nature, as they do not react to the incoming data nor to the current learning states, typically resulting in slower convergence and increased variance from excessive exploration or premature convergence to suboptimal policies from insufficient exploration.
	
	This paper presents  a companion research of \cite{huang2024sublinear}. We continue to study the same class of stochastic LQ problems in \cite{huang2024sublinear}, which is also of an indefinite LQ type because control is absent in the  objective functional. We develop a data-driven adaptive exploration framework for both critic (the value functions) and actor (the control policies). Specifically,  we propose an adaptive exploration strategy that dynamically adjusts the entropy regularization parameter for the critic and the stochastic policy variance for the actor based on the agent's learning progress. Moreover, we provide theoretical guarantees by proving the almost sure convergence of both the adaptive exploration and policy parameters, along with a sublinear regret bound of $O(N^{\frac{3}{4}})$. This bound matches the best-known model-free result from \cite{huang2024sublinear}, which assumes a nonzero initial state and uses a fixed/deterministic exploration schedule, whereas we removes this assumption and learns entirely from data. Finally, we validate our theoretical results through numerical experiments, demonstrating a faster convergence and improved regret bounds compared with a model-based benchmark with fixed exploration. For a comparison with the previous paper \cite{huang2024sublinear}, we design experiments with both fixed and randomly generated model parameters. While theoretically the regret bounds in the two papers are of the same order, the numerical experiments show that our adaptive exploration strategy consistently outperforms and achieves lower regrets.
	
	A distinctive advantage of the RL approach over the traditional model-based one for indefinite LQ control is that the latter relies heavily on the generalized (or indefinite) Riccati equation, which is a highly complex and unconventional Riccati equation. In particular, it involves a positive definiteness constraint and may be singular due to the indefiniteness of the control weighting cost \cite{chen1998stochastic}. A central task for solving indefinite LQ control problems has been to solve this equation analytically or numerically (e.g. \cite{rami2000linear,rami2001solvability,yao2001primal,hu2003indefinite,qian2013existence}), which remains open for the most general case in terms of its well-posedness. However, RL does not need to involve this equation at all, in the same way that RL gets around the HJB PDE for a general stochastic control problem.
	
	 The remainder of this paper is organized as follows. Section \ref{sec_formulation_and_preliminaries} formulates the problem. Section \ref{sec_exploration} discusses the limitations of deterministic exploration strategies and presents our data-driven adaptive exploration mechanism. Section \ref{sec_rl_algo} describes the continuous-time LQ-RL algorithm with adaptive exploration. Section \ref{sec_result_algo} establishes theoretical guarantees, proving convergence properties and regret bounds. Section \ref{sec_experiments} presents numerical experiments that validate the effectiveness of our approach and compare it against model-free and model-based benchmarks with deterministic exploration schedules. Finally, Section \ref{sec_conclusion} concludes.

	\section{Problem Formulation and Preliminaries}
	\label{sec_formulation_and_preliminaries}
	

	\subsection{Stochastic LQ Control: Classical Setting}
	
	We consider the same stochastic LQ control problem  in \cite{huang2024sublinear}. The one-dimensional state process \( x^u = \{ x^u(t) \in \mathbb{R} : 0 \leq t \leq T \} \) evolves under the multi-dimensional control process \( u = \{ u(t) \in \mathbb{R}^l : 0 \leq t \leq T \} \) according to the stochastic differential equation (SDE):
	\begin{equation}
		\label{eq_classical_dynamics}
		\begin{aligned}
			\mathrm{d}x^u(t) =& (A x^u(t) + B^\top u(t)) \mathrm{d}t + \sum_{j=1}^{m}(C_j x^u(t) + D_j^\top u(t)) \mathrm{d}W^{(j)}(t),
		\end{aligned}
	\end{equation}
	where \( A \) and \( C_j \) are scalars, \( B \) and \( D_j \) are \( l \times 1 \) vectors, and the initial condition is \( x^u(0) = x_0 \). The noise process
	\(W = \{ (W^{(1)}(t), \dots, W^{(m)}(t))^\top \in \mathbb{R}^m : 0 \leq t \leq T \}\)
	is an \( m \)-dimensional standard Brownian motion. We assume \( \sum_{j=1}^{m} D_j D_j^\top > 0 \) to ensure well-posedness of the problem soon to be formulated.
	
	The objective is to find a control process \( u \) that maximizes the expected quadratic reward:
	\begin{equation}
		\label{eq_classical_lq}
		\max_{u} \mathbb{E} \left[ \int_0^T -\frac{1}{2} Q x^u(t)^2 \mathrm{d}t - \frac{1}{2} H x^u(T)^2 \right],
	\end{equation}
	where \( Q \geq 0 \) and \( H \geq 0 \) are scalar weighting parameters. Note that here we {\it maximize} the reward functional (but with the negative signs) -- instead of minimizing a cost functional as in the usual LQ literature -- to be consistent with the setting of \cite{huang2024sublinear}. If the model parameters \( A \), \( B \), \( C_j \), \( D_j \), \( Q \), and \( H \) are known, the problem has an explicit solution as in \cite[Chapter 6]{YZbook}.

	\subsection{Stochastic LQ Control: RL Setting}
	\label{sec_rl_formulation}
	
	We now consider the  {\it model-free}, RL version of the problem just formulated, following \cite{wang2020reinforcement}, without assuming a complete knowledge of the model parameters or relying on parameter estimation. 
	
In this case, there is no Riccati equation to solve (because its coefficients are unknown) and no explicit solutions to compute as in \cite[Chapter 6]{YZbook}. Instead, the RL agent randomizes controls and considers distribution-valued control processes of the form \(\pi = \{\pi(\cdot,t) \in \mathcal{P}(\mathbb{R}^l) : 0 \leq t \leq T\}\), where \(\mathcal{P}(\mathbb{R}^l)\) is the set of all probability density functions over \(\mathbb{R}^l\). The agent employs the control \(u=\{u(t) \in \mathbb{R}^l : 0 \leq t \leq T\}\), where $u(t)\in \mathbb{R}^l$ is sampled from $\pi(\cdot,t)$ at each $t$, to control the original state dynamic.

According to \cite{wang2020reinforcement}, the system dynamic under a randomized control \(\pi\) follows the SDE:
	\begin{equation}
		\mathrm{d}x^\pi(t) = \widetilde{b}(x^\pi(t),\pi(\cdot,t)) \mathrm{d}t + \Sj \widetilde{\sigma}_j(x^\pi(t),\pi(\cdot,t)) \mathrm{d}W^{(j)}(t),
		\label{rl_dynamics}
	\end{equation}
	where
	\begin{equation}
		\widetilde{b}(x,\pi) := A x + B^\top \int_{\mathbb{R}^l} u \pi(u) \mathrm{d}u,
		\label{b_tilde}
	\end{equation}
	\begin{equation}
		\widetilde{\sigma}_j(x,\pi) := \sqrt{\int_{\mathbb{R}^l} (C_j x + D_j^\top u)^2 \pi(u) \mathrm{d}u}. 
		\label{sigma_tilde}
	\end{equation}
	
	To encourage exploration, an entropy term is added to the objective function. 
The entropy-regularized value function associated with a given policy \(\pi\) is defined as
	\begin{equation}
		\begin{aligned}
			\label{eq_value_function_pi}
			J(t, x; \pi) = \mathbb{E} \biggl[ &\int_t^T \left(-\frac{1}{2}Q(x^{\pi}(s))^2 + \gamma p^{\pi}(s) \right) \mathrm{d}s - \frac{1}{2}H(x^{\pi}(T))^2 \Big| x^{\pi}(t) = x \biggl],
		\end{aligned}
	\end{equation}
	where \(p^{\pi}(t) = -\int_{\mathbb{R}^l} \pi(t,u) \log \pi(t,u) \mathrm{d}u\) denotes the differential entropy of \(\pi\), and \(\gamma \geq 0\) is the so-called temperature parameter, which represents the trade-off between exploration and exploitation. The optimal value function is then given by
	\(
	V(t,x) = \max_{\pi} J(t, x; \pi).
	\)
	
	Applying standard stochastic control techniques, \cite{huang2024sublinear} derives explicitly the optimal value function and optimal stochastic feedback policy of the above problem:
	\begin{equation}
		\label{eq_rl_value_function}
		V(t, x) = -\frac{1}{2} k_1(t) x^2 + k_3(t),
	\end{equation}
	\begin{equation}
		\label{eq_rl_policy}
		\pi(u \mid t, x) = \mathcal{N}\left(u \Big | \bar{\mu}x, \Sigma \right),
	\end{equation}
	where  \(\mathcal{N}(\cdot | \bar{\mu}x, \Sigma)\) represents a multivariate Gaussian distribution with mean \(\bar{\mu}x\) and covariance matrix \(\Sigma\) with \(\bar{\mu}=-\left(\sum_{j=1}^{m} D_j D_j^\top\right)^{-1} \left(B + \sum_{j=1}^{m} C_j D_j\right)\) and \(\Sigma=\frac{\gamma}{k_1(t)} \Mj\). The functions \(k_1\) and \(k_3\) are determined by ordinary differential equations:
	
	\begin{equation}
		\label{eq_ode_k}
		\begin{aligned}
			k_1^\prime(t)=&-\biggl[2A+2B^\top \bar{\mu} + \Sj \biggl( D_j^\top \bar{\mu}\bar{\mu}^\top D_j+2C_jD_j^\top\bar{\mu} + C_j^2 \biggr)\biggr] k_1(t) - Q, \qquad k_1(T)=H,\\
			k_3^\prime(t) =& \frac{k_1(t)}{2} \Sj D_j^\top \Sigma D_j - \frac{\gamma}{2} \log \biggl( (2\pi e)^l \det(\Sigma) \biggl), \qquad k_3(T)=0.
		\end{aligned}
	\end{equation}
	
	It is clear from \eqref{eq_ode_k} that \(k_1(t)>0\) is independent of \(\gamma\), and \(k_1\), \(k_3\) along with their derivatives are uniformly bounded over \([0, T]\).
	
We stress that the above are theoretical results that cannot be used to compute the optimal solutions because none of the parameters in the expressions is known; yet they offer valuable \textit{structural} insights for developing the  LQ-RL algorithms. Specifically, the optimal value function (critic) is quadratic in the state variable \(x\), while the optimal (randomized) feedback control policy (actor) follows a Gaussian distribution, with its mean exhibiting a linear relationship with \(x\). This intrinsic structure will play a pivotal role in reducing the complexity of function parameterization/approximation in our subsequent learning procedure.

	\subsection{Exploration in LQ-RL: Actor and Critic Perspectives}
	
	In our RL formulation above, both the actor and the critic engage in controls of exploration. The actor does this  through the variance in the Gaussian policy \eqref{eq_rl_policy}, which represents the level of additional randomness injected into the interactions with the environment.  The critic, on the other hand, regulates exploration via the entropy-regularized objective \eqref{eq_value_function_pi}, where the temperature parameter \(\gamma\) determines, if indirectly,  the extent of exploration. 
	Managing exploration properly is both important and challenging: excessive exploration can cause instability and hinder convergence, and insufficient exploration may prevent effective policy improvement. 
    The next section introduces a data-driven adaptive exploration strategy designed to address these challenges and optimize policy learning.

	\section{Data-driven Exploration Strategy for LQ-RL}
	\label{sec_exploration}
	
	This section presents the core contribution of this work: data-driven explorations by both  the actor and the critic. 
	
	\subsection{Limitations of Deterministic Exploration Strategies}
	\label{sec_limitations}
	
	A critical challenge in RL is to design exploration strategies that are both effective and efficient. In the setting of continuous-time LQ, existing methods \cite{huang2024sublinear, szpruch2024optimal} employ non-adaptive  strategies by setting exploration levels of both actors and critics  either as fixed constants or some deterministic schedules of time. While these strategies are proved to have theoretical guarantees, they exhibit several essential limitations in learning efficiency and practical implementations.
	
	One major limitation of deterministic explorations lies in their arbitrary nature. Whether fixed as a constant or a predefined sequence, such a schedule often requires extensive manual tuning in actual implementations. The tuning process in turn may introduce  significant computational and time costs for learning; yet  these costs are rarely incorporated into theoretical analyses, creating a disconnect between theoretical guarantees and practical performance.
	
	Another drawback of deterministic explorations is their lack of adaptability, as they follow fixed exploration trajectories regardless of the current states of the iterates, often leading to both excessive and insufficient explorations. While maintaining a constant exploration level is clearly ineffective, a deterministic schedule also suffers from a fundamental limitation: it adjusts the exploration parameter in a {\it predetermined} direction. Consequently, when the current exploration level is either too high or too low, the fixed schedule not only fails to correct this but may indeed exacerbate the situation by moving in the wrong direction. We will illustrate this issue with the experiments presented in Section \ref{sec_exploration_experiment}.

	To address these limitations, we propose an endogenous, data-driven approach that adaptively updates both the actor and critic exploration parameters, which will be shown in the subsequent sections.

	\subsection{Adaptive Critic Exploration}
	\label{sec_critic_exploration}
	
	\paragraph{Critic Parameterization} Motivated by the form of the optimal value function in \eqref{eq_rl_value_function}, we parameterize the value function with learnable parameters $\bm{\theta} \in \mathbb{R}^d$ and temperature parameter \(\gamma \in \mathbb{R}\):
	\begin{equation}
		\label{eq_value_parametrization}
		{J}(t, x; \bm{\theta}, \gamma) = -\frac{1}{2}{k}_1(t; \bm{\theta})x^2 + {k}_3(t; \bm{\theta}, \gamma),
	\end{equation}
	where both \(k_1\) and \(k_3\) and their derivatives are continuous. Both functions can be neural nets; e.g. \( k_1(\cdot; \bm{\theta}) \) can  be, for example, a positive neural network with a sigmoid activation.	
	Furthermore, these functions are constructed so that there are appropriate positive constants $c_1,c_2,c_3$ such that
	\begin{equation}
		\label{eq_k_bounds}
		\frac{1}{c_2} \leq {k}_1(t; \bm{\theta}) \leq {c}_2,  \big|{k}_1^\prime(t; \bm{\theta})\big| \leq {c}_1, \big|{k}_3^\prime(t; \bm{\theta}, \gamma)\big| \leq {c}_3,
	\end{equation}
	for all \(0 \leq t \leq T\), \(|\bm{\theta}| \leq c^{(\bm\theta)}\), and \(|\gamma| \leq c^{(\gamma)}\), where \(c^{(\bm\theta)}>0\) and \(c^{(\gamma)}>0\) are (fixed) hyperparameters. The values of \( c_1, c_2, c_3 \) are determined by the specific parametric forms of \( k_1(\cdot; \bm{\theta}) \) and \( k_3(\cdot; \bm{\theta}, \gamma) \), as well as \(c^{(\bm\theta)}\) and \(c^{(\gamma)}\).
	\footnote{The values of $c_1,c_2,c_3,c^{(\bm\theta)},c^{(\gamma)}$ do not affect the order of regret bound, which will be shown in the subsequent sections. The key consideration here is the boundedness of \(k_1\), \(k_1^\prime\) and \(k_3^\prime\).}
	The boundedness assumptions \eqref{eq_k_bounds} follow from the fact that when the model parameters are known, the corresponding functions satisfy the same conditions, as shown in Section \ref{sec_rl_formulation}.
	
	In what follows, we introduce the subscript \(n\) to denote values at the \(n\)-th iteration. For instance, \(\gamma_{n}\) represents the updated value of \(\gamma\) at iteration \(n\), which evolves dynamically as learning progresses.

	\paragraph{Data-Driven Temperature Parameter} The temperature parameter \(\gamma\) plays a crucial role in RL in controlling the weight on the entropy-regularized term in the objective function \eqref{eq_value_function_pi}. It governs the level of exploration by the critic 
for the purpose of  balancing between exploration and exploitation.
	
	In the related literature, \(\gamma\) has been either taken  as an exogenous constant hyperparameter (\cite{szpruch2021exploration,huang2024sublinear}) or a deterministic sequence of \(n\) (\cite{szpruch2024optimal}), which in turn requires extensive tunings in implementations. By contrast, we derive an adaptive temperature schedule, which is endogenous and data-driven, adjusted dynamically over iterations. Specifically, the adaptive update of \(\gamma_n\) is determined by the values of the random processes \(\bm\theta_n\) and a predefined deterministic sequence \(b_n\) as follows
	
	\begin{equation}
		\label{eq_gamma_definition}
		\gamma_{n} = \frac{c_\gamma \int_0^T k_1(t; \bm\theta_n) \dd t}{b_n T}, \quad \text{for } n = 0,1,2,\dots.
	\end{equation}
	where \(c_\gamma\) is a sufficiently large constant ensuring \(\frac{1}{c_\gamma}\) is smaller than the minimum eigenvalue of \(\Mj\),
 and \(b_n\) is a monotonically  increasing sequence satisfying \(b_n \uparrow \infty\) to be specified shortly.
As will become clear in the subsequent sections, this updating rule is chosen to ensure convergence and achieve the desired sublinear regret bounds.

	Incidentally, it follows from \eqref{eq_k_bounds} that
	\begin{equation}
		\label{eq_gamma_upper}
		\gamma_{n} = \frac{c_\gamma\int_0^T k_1(t; \bm\theta_n) \, \dd t}{b_n T} \leq \frac{c_\gamma c_2}{b_n}\rightarrow 0
	\end{equation}
	as \(n\rightarrow +\infty\). 

	\subsection{Adaptive Actor Exploration}
	\label{sec_actor_exploration}
	
	\paragraph{Actor Parameterization} Motivated by the structure of the optimal policy in \eqref{eq_rl_policy},  we parameterize the actor using learnable parameters \(\bm\phi \in \mathbb{R}^l\) and \(\Gamma \in \mathbb{S}^l_{++}\):
	\begin{equation}
		\label{eq_policy_parametrization}
		{\pi}(u \mid x; \bm{\phi}, \Gamma) = \mathcal{N}(u \mid \bm\phi x, \Gamma).
	\end{equation}
We write the corresponding entropy as \(p^\pi(t) = p(t;\bm{\phi}, \Gamma)\).

	\paragraph{Adaptive Actor Exploration}
	The variance parameter \(\Gamma\) represents the level of exploration controlled by the actor. A higher variance encourages broader exploration, allowing the agent to sample diverse actions, while a lower variance focuses more on exploitation. In the  existing literature (\cite{huang2024sublinear,szpruch2024optimal}), \(\Gamma_n\) is chosen to be a deterministic sequence of \(n\). Such a predetermined schedule does not account for the agent's experience and observed data. Introducing an adaptive \(\Gamma_n\) enables a more flexible and efficient exploration strategy.
	
	To achieve this, we employ the general policy gradient approach developed in  \cite{jia2021policypg}, which provides a foundation for updating \(\Gamma\) in a fully data-driven manner. It leads to the following moment condition:
	
	\begin{equation}
		\label{eq_pg_martingale_0}
		\begin{aligned}
			\mathbb{E}& \biggl[ \int_{0}^{T} \biggl\{ \frac{\partial \log \pi}{\partial \Gamma}\left(u(t) \mid x(t); \bm\phi, \Gamma\right) ( \mathrm{d} J(t, x(t); \bm\theta, \gamma) - \frac{1}{2} Q x(t)^2 \mathrm{d} t + \gamma p(t; \bm\phi, \Gamma) \mathrm{d} t ) + \gamma \frac{\partial p}{\partial \Gamma}(t; \bm\phi, \Gamma) \mathrm{d} t \biggr\} \biggr] = 0.
		\end{aligned}
	\end{equation}
	
	To enhance numerical efficiency in the resulting stochastic approximation algorithm, we reparametrize \(\Gamma\) as \(\Gamma^{-1}\). The chain rule implies that the derivative of \(\Gamma^{-1}\) with respect to \(\Gamma\) is a deterministic and time-invariant term, which can thus be omitted while preserving the validity of \eqref{eq_pg_martingale_0}. Consequently,
	
	\[
	\begin{aligned}\label{Zt}
		\mathbb{E} \biggl[ \int_{0}^{T} \biggl\{& \frac{\partial \log \pi}{\partial \Gamma^{-1}}\left(u(t) \mid x(t); \bm\phi, \Gamma\right) ( \mathrm{d} J(t, x(t); \bm\theta, \gamma) \\
		&- \frac{1}{2} Q x(t)^2 \mathrm{d} t + \gamma p(t; \bm\phi, \Gamma) \mathrm{d} t ) + \gamma \frac{\partial p}{\partial \Gamma^{-1}}(t; \bm\phi, \Gamma) \mathrm{d} t \biggr\} \biggr] = 0.
	\end{aligned}
	\]
	
	The corresponding stochastic approximation algorithm then gives rise  to the updating rule for the actor exploration parameter $\Gamma$:
	\begin{equation}
		\label{eq_Gamma_updates_without_projections}
		\Gamma_{n+1} \leftarrow \Gamma_n - a^{(\Gamma)}_n Z_n(T),
	\end{equation}
	where $a^{(\Gamma)}_n$ is the learning rate, and 
	
	\begin{equation}
		\label{eq_z_def}
		\begin{aligned}
			Z_n(s) = \int_{0}^{s}\biggl\{&\frac{\partial \log \pi}{\partial \Gamma^{-1}}\left(u_n(t) \mid t, x_n(t); \bm\phi_n, \Gamma_n\right) \biggl[\mathrm{d} J\left(t, x_n(t); \bm\theta_n, \gamma_n \right) - \frac{1}{2}Qx_n(t)^2 \mathrm{d} t \\
			&+ \gamma_n p\left(t; \bm\phi_n, \Gamma_n\right) \mathrm{d} t \biggr] + \gamma_n \frac{\partial p}{\partial \Gamma^{-1}}\left(t; \bm\phi_n,\Gamma_n \right) \mathrm{d} t \biggl\},
		\end{aligned}
	\end{equation}
	where $0 \leq s \leq T$.
	
	In sharp contrast with \cite{huang2024sublinear}, the fact that \(\Gamma_n\) is no longer a deterministic sequence that monotonically decreases to zero introduces significant analytical challenges. Specifically, the learning rates must be carefully chosen, and the convergence and convergence rate of \(\Gamma_n\) must be carefully analyzed to ensure the algorithm finally still achieves a sublinear regret bound. These problems, along with their implications for overall performance, will be discussed in the subsequent sections.

	\section{A Continuous-Time RL Algorithm with Data-driven Exploration}
	\label{sec_rl_algo}
	
	This section presents the development of continuous-time RL algorithms with data-driven exploration. We discuss the policy evaluation and policy improvement steps, followed by a projection technique, time discretization, and the final pseudocode.

	\subsection{Policy Evaluation}
	\label{sec_policy_evaluation}
	
	To update the critic parameter \(\bm\theta\), we employ policy evaluation (PE), a fundamental component of RL that focuses on learning the value function of a given policy.
	
	Based on the parameterizations of the value function and policy in \eqref{eq_value_parametrization} and \eqref{eq_policy_parametrization} respectively, we update \(\bm{\theta}\) by applying  the general continuous-time PE algorithm introduced by \cite{jia2021policy} along  with stochastic approximation:
	
	\begin{equation}
		\begin{aligned}
			\label{eq_theta_update_without_projection}
			\bm{\theta}_{n+1}& \leftarrow \bm\theta_n + a^{(\bm\theta)}_n\int_0^T \frac{\partial J}{\partial \bm\theta}(t, x_n(t); \bm\theta_n,\gamma_n) \biggl[ - \frac{1}{2} Qx_n(t)^2 \mathrm{d}t +\mathrm{d}J(t, x_n(t); \bm\theta_n, \gamma_n) - \gamma_n {p}(t; \bm{\phi}_n, \Gamma_n) \mathrm{d}t \biggr],
		\end{aligned}
	\end{equation}
	where \(a^{(\bm\theta)}_n\) is the corresponding learning rate.
	

	\subsection{Policy Improvement}
	\label{sec_policy_gradient}
	
	The remaining learnable parameter  is the mean of the stochastic Gaussian policy, \(\bm\phi\). We employ policy gradient (PG) established in  \cite{jia2021policypg} and utilize stochastic approximation to generate the update rule:
	
	\begin{equation}
		\label{eq_phi_updates_without_projections}
		\bm\phi \leftarrow \bm\phi + a^{(\bm\phi)}_n Y_n(T),
	\end{equation}
	where $a^{(\bm\phi)}_n$ is the learning rate and
	\begin{equation}
		\label{eq_y_def}
		\begin{aligned}
			Y_n(s) =\int_{0}^{s}\biggl\{&\frac{\partial \log \pi}{\partial \bm\phi}\left(u_n(t) \mid t, x_n(t); \bm\phi_n, \Gamma_n\right) \biggl[- \frac{1}{2}Q x_n(t)^2 \mathrm{d} t \\
			&+ \mathrm{d} J\left(t, x_n(t); \bm\theta_n, \gamma_n \right) + \gamma_n p\left(t; \bm\phi_n, \Gamma_n\right) \mathrm{d} t\biggr] + \gamma_n \frac{\partial p}{\partial \bm\phi}\left(t; \bm\phi_n, \Gamma_n \right) \mathrm{d} t \biggl\},
		\end{aligned}
	\end{equation}
	where $0 \leq s \leq T$.

	\subsection{Projections}
	\label{sec_projections}
	
	The parameter updates for \(\bm\theta\), \(\bm\phi\), and \(\Gamma\) follow stochastic approximation (SA), a foundational method introduced by \cite{robbins1951stochastic} and extensively studied in subsequent works \cite{lai2003stochastic, borkar2009stochastic, chau2014overview}. However, directly applying SA in our setting presents challenges such as extreme state values and unbounded estimation errors, which can destabilize the learning process. To address these issues, we incorporate the projection technique proposed by \cite{andradottir1995stochastic}, ensuring that parameter updates remain in a bounded region at every iteration. 
	
	Define \(\Pi_K(x) := \arg \min_{y \in K} |y-x|^2\),  the projection of a point \(x\) onto a set \(K\). We now introduce the projection sets for all the critic and actor parameters.
	
	First, we define the projection sets for the critic parameter \(\bm\theta\) and the temperature parameter \(\gamma\):
	\begin{equation}
		\label{eq_projection_sets_critic}
		K^{(\bm\theta)}=\left\{ \bm\theta\in \mathbb{R}^d \mid |\bm\theta| \leq c^{(\bm\theta)} \right\}, K^{(\gamma)}=\left\{ \gamma\in \mathbb{R} \mid |\gamma| \leq c^{(\gamma)} \right\},
	\end{equation}
	which are employed to enforce the boundedness conditions in \eqref{eq_k_bounds}. The update rules incorporating projection  for \(\bm\theta\) and \(\gamma\) are modified to
	\begin{equation}
		\label{eq_theta_update}
		\begin{split}
			\bm\theta_{n+1} \leftarrow \Pi_{K^{(\bm\theta)}}\biggl( & \bm\theta_n + a^{(\bm\theta)}_n\int_0^T \frac{\partial J}{\partial \bm\theta}(t,x_n(t); \bm\theta_n, \gamma_n) \\
			& \biggl[\mathrm{d} J\left(t, x_n(t); \bm\theta_n, \gamma_n \right) - \frac{1}{2}Qx_n(t)^2 \dd t  + \gamma_n {p}\left(t; \bm\phi_n,\gamma_n\right) \mathrm{d} t\biggr] \biggl),
		\end{split}
	\end{equation}
	\begin{equation}
		\label{eq_gamma_update}
		\begin{split}
			\gamma_{n+1} \leftarrow \Pi_{K^{(\gamma)}}\biggl( \frac{c_\gamma \int_0^T k_1(t; \bm\theta_n) \dd t}{b_n T} \biggl).
		\end{split}
	\end{equation}

	Next, we define the projection sets for the actor parameters \(\bm\phi\) and \(\Gamma\):
	\begin{equation}
		\label{eq_projection_sets_actor}
		\begin{aligned}
			&K^{(\bm\phi)}_n=\left\{ \bm\phi_n \in \mathbb{R}^l \Big| |\bm\phi_n|\leq c^{(\bm\phi)}_n  \right\}, \\
			&K^{(\Gamma)}_n=\left\{ \Gamma_n \in \mathbb{S}^{l}_{++} \Big| |\Gamma_n| \leq c^{(\Gamma)}_n, \Gamma_n - \frac{1}{b_n}I \in \mathbb{S}^l_{++}  \right\},
		\end{aligned}
	\end{equation}
	where \(\{c^{(\bm\phi)}_n \uparrow \infty\}\), \(\{c^{(\Gamma)}_n \uparrow \infty\}\), and \(\{b_n \uparrow \infty\}\) are positive, monotonically increasing sequences, with \(b_n\) the same deterministic sequence in \eqref{eq_gamma_definition}. The specifics of these sequences are given in Theorem \ref{thm_convergence_and_rate_gamma}. Clearly, the sequences of sets \(K^{(\bm\phi)}_n\) and \(K^{(\Gamma)}_n\) expand over time, eventually covering the entire spaces \(\mathbb{R}^l\) and \(\mathbb{S}^{l}_{++}\), respectively. This ensures that our algorithm remains model-free without needing to have prior knowledge of the model parameters. The update rules with projection for \(\bm\phi\) and \(\Gamma\) are now
	\begin{equation}
		\label{eq_phi_update}
		\begin{split}
			\bm\phi_{n+1} \leftarrow& \Pi_{K^{(\bm\phi)}_{n+1}} \biggl( \bm\phi_n+a^{(\bm\phi)}_n Y_{n}(T) \biggl),
		\end{split}
	\end{equation}
	\begin{equation}
		\label{eq_Gamma_update}
		\begin{split}
			\Gamma_{n+1} \leftarrow& \Pi_{K^{(\Gamma)}_{n+1}} \biggl( \Gamma_n-a^{(\Gamma)}_n Z_{n}(T) \biggl),
		\end{split}
	\end{equation}
	where \(Y_{n}(T)\) and \(Z_{n}(T)\) are respectively determined by \eqref{eq_y_def} and \eqref{eq_z_def}.

	\subsection{Discretization}
	\label{sec_discretization}
	
	While our RL framework is continuous-time in both formulation and analysis, numerical implementation requires discretization at the final stage. 
To facilitate this, we divide the time horizon \([0, T]\) into uniform intervals of size \(\Delta t_n\) at the \(n\)-th iteration, resulting in discretized rules:\footnote{Analysis on discretization errors is covered and discussed in Theorems \ref{thm_Gamma_convergence} - \ref{thm_convergence_and_rate_phi}, and Remark \ref{remark_discretization_error}.}
	
	\begin{equation}
		\label{eq_theta_update_discrete}
		\begin{aligned}
			\bm\theta_{n+1} \leftarrow \Pi_{K^{(\bm\theta)}}\biggl(& \bm\theta_n+a^{(\bm\theta)}_n \sum_{k=0}^{\left\lfloor \frac{T}{\Delta t_n} -1 \right\rfloor} \frac{\partial J}{\partial \bm\theta}(t_k,x_n(t_k); \bm\theta_n, \gamma_n)\biggl[ J\left(t_{k+1}, x_n(t_{k+1}); \bm\theta_n, \gamma_n \right) - J\left(t_k, x_n(t_k); \bm\theta_n, \gamma_n \right) \\
			&- \frac{1}{2}Qx_n(t_k)^2 \Delta t_n + \gamma_n {p}\left(t_k; \bm\phi_n, \Gamma_n\right) \Delta t_n\biggl] \biggl),
		\end{aligned}
	\end{equation}
	
	\begin{equation}
		\label{eq_gamma_update_discrete}
		\begin{split}
			\gamma_{n+1} \leftarrow \Pi_{K^{(\gamma)}}\biggl( \frac{c_\gamma}{b_n T} \sum_{k=0}^{\left\lfloor \frac{T}{\Delta t_n} -1 \right\rfloor} k_1(t_k; \bm\theta_n) \Delta t_n \biggl).
		\end{split}
	\end{equation}
Moreover, 	\(Y_n(T)\) and \(Z_n(T)\) in the update rules \eqref{eq_phi_update} and \eqref{eq_Gamma_update} are approximated by
	\begin{equation}
		\label{eq_phi_update_discrete}
		\begin{aligned}
			\hat{Y}_n(T) = \sum_{k=0}^{\left\lfloor \frac{T}{\Delta t_n} -1 \right\rfloor} \Delta t_n \biggl\{&\frac{\partial \log \pi}{\partial \bm\phi}\left(u_n(t_k) \mid t_k, x_n(t_k); \bm\phi_n, \Gamma_n\right)\\
			&\biggl[ (J\left(t_{k+1}, x_n(t_{k+1}); \bm\theta_n, \gamma_n \right)  - J\left(t_k, x_n(t_k); \bm\theta_n, \gamma_n \right)) \frac{1}{\Delta t_n} \\
			&- \frac{1}{2}Qx_n(t_k)^2 + \gamma_n {p}\left(t_k; \bm\phi_n, \Gamma_n\right)\biggl]+\gamma_n \frac{\partial {p}}{\partial \bm\phi}\left(t_k; \bm\phi_n, \Gamma_n \right) \biggl\} ,
		\end{aligned}
	\end{equation}
	
	\begin{equation}
		\label{eq_Gamma_update_discrete}
		\begin{aligned}
			\hat{Z}_n(T) = \sum_{k=0}^{\left\lfloor \frac{T}{\Delta t_n} -1 \right\rfloor} \Delta t_n \biggl\{&\frac{\partial \log \pi}{\partial \Gamma^{-1}}\left(u_n(t_k) \mid t_k, x_n(t_k); \bm\phi_n, \Gamma_n\right)\\
			&\biggl[ (J\left(t_{k+1}, x_n(t_{k+1}); \bm\theta_n, \gamma_n \right) - J\left(t_k, x_n(t_k); \bm\theta_n, \gamma_n \right)) \frac{1}{\Delta t_n} \\
			&- \frac{1}{2}Qx_n(t_k)^2 + \gamma_n {p}\left(t_k; \bm\phi_n,\Gamma_n\right) \biggl]+
			\gamma_n \frac{\partial {p}}{\partial \Gamma^{-1}}\left(t_k; \bm\phi_n, \Gamma_n \right) \biggl\} .
		\end{aligned}
	\end{equation}

	\subsection{LQ-RL Algorithm Featuring Data-Driven Exploration}
	\label{sec_details_algo1}
	
	The preceding analysis gives rise to the following RL algorithm that integrates adaptive exploration:

			
			
			
			

	\begin{algorithm}[H]
		\caption{Data-driven Exploration LQ-RL Algorithm}
		\label{algo_rl_lq}
		\begin{algorithmic}
			
			\State {\textbf{Input:}} Initial values of learnable parameters \(\bm\theta_0, \bm\phi_0, \gamma_0, \Gamma_0\)
			
			\For{\(n = 0\) to \(N\)}
			\State Initialize \(k = 0\), time \(t = t_k = 0\), and state \(x_n(t_k) = x_0\).
			\While{\(t < T\)}
			\State Sample action \(u_n(t_k) \) from policy by (Eq.~\eqref{eq_policy_parametrization}).
			\State Uupdate state \(x_n(t_{k+1})\) by LQ dynamics (Eq.~\eqref{eq_classical_dynamics}).
			\State Update time: \(t_{k+1} \gets t_k + \Delta t\), and set \(t \gets t_{k+1}\).
			\EndWhile
			\State Collect trajectory data: \(\{(t_k, x_n(t_k), u_n(t_k))\}_{k \geq 0}\).
			\State Update value function parameters \(\bm\theta_{n+1}\) by (Eq.~\eqref{eq_theta_update_discrete}).
			\State Update policy mean parameters \(\bm\phi_{n+1}\) by (Eq.~\eqref{eq_phi_update_discrete}).
			\State Update critic exploration parameter \(\gamma_{n+1}\) by (Eq.~\eqref{eq_gamma_update_discrete}).
			\State Update actor exploration parameter \(\Gamma_{n+1}\) by (Eq.~\eqref{eq_Gamma_update_discrete}).
			\EndFor
			
			\State \textbf{Output:} Final parameters \(\bm\theta_N, \bm\phi_N, \gamma_N, \Gamma_N\).
			
		\end{algorithmic}
	\end{algorithm}

	\section{Regret Analysis}
	\label{sec_result_algo}
	
	This section contains the main contribution of this work: establishing a sublinear regret bound for the LQ-RL algorithm with data-driven exploration, presented in Algorithm \ref{algo_rl_lq}. We prove that the algorithm achieves a regret bound of \(O(N^{\frac{3}{4}})\) (up to a logarithmic factor), matching the best-known model-free bound for this class of continuous-time LQ problems as reported in \cite{huang2024sublinear}.
	
	To quickly recap the major differences from  \cite{huang2024sublinear}: 1) the actor exploration parameter \(\Gamma_n\) therein is a deterministic monotonically decreasing sequence of  \(n\) while we apply policy gradient for updating \(\Gamma_n\); 2) the static temperature parameter \(\gamma\) is used in \cite{huang2024sublinear}, whereas in our algorithm \(\gamma\) is adaptively updated; 3) \cite{huang2024sublinear} only considers the case when the initial state is nonzero (\(x_0 \neq 0\)), and our analysis removes this assumption.
	
	In the remainder of the paper, we use \(c\) (and its variants) to denote  generic positive constants. These constants depend solely on the model parameters \(A, B, C_j, D_j, Q, H, x_0, T, \gamma, m, l\), and their values may vary from one instance to another.

	\subsection{Convergence Analysis on $\gamma_n$ and $\Gamma_n$}
	\label{sec_convergence_gamma}
	
	To start, we explore the almost sure convergence of the critic and actor exploration parameters $\gamma_n$ and $\Gamma_n$, together with the convergence rate of \(\Gamma_n\) in terms of the mean-squared error (MSE).
	
	\begin{theorem}
		\label{thm_convergence_and_rate_gamma}
		Consider Algorithm \ref{algo_rl_lq} with fixed positive hyperparameters \(c_1, c_2, c_3\) and a sufficiently large fixed constant \(c_\gamma\). Define the sequences:
		\[
		\begin{aligned}
			&a^{(\Gamma)}_n  = a^{(\bm\phi)}_n  = \frac{\alpha^{\frac{3}{4}}}{(n+\beta)^{\frac{3}{4}}}, \quad
			b_n = 1 \vee \frac{(n+\beta)^{\frac{1}{4}}}{\alpha^{\frac{1}{4}}}, \\
			&c^{(\bm\phi)}_n=1 \vee (\log \log n)^{\frac{1}{6}}, c^{(\Gamma)}_n = 1 \vee \log n, \Delta t_n = T(n+1)^{-\frac{5}{8}},
		\end{aligned}
		\]
		where \(\alpha > 0\) and \(\beta > 0\) are constants. Then, the following results hold:
		
		\begin{enumerate}[label=(\alph*)]
			\item As \(n \to \infty\), \(\gamma_n\) converges almost surely to \(0\), and \(\Gamma_n\) converges almost surely to the zero matrix \(\bm{0}\).
			\item For all \(n\), 
			\(
			\mathbb{E}[|\Gamma_n|^2] \leq c \frac{(\log n)^{p_2} (\log \log n)^{\frac{4}{3}}}{n^{\frac{1}{2}}},
			\)
			where \(c\) and \(p_2\) are positive constants.
		\end{enumerate}
	\end{theorem}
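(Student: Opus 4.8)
The plan is to establish parts (a) and (b) together, since the almost sure convergence of $\gamma_n$ is essentially immediate while the convergence of $\Gamma_n$ will follow from the MSE bound in (b) via a Borel--Cantelli argument along a subsequence combined with a control of the increments. For $\gamma_n\to 0$ a.s., note that the bound \eqref{eq_gamma_upper} already gives $0\le\gamma_n\le c_\gamma c_2/b_n$ deterministically, and with $b_n = 1\vee (n+\beta)^{1/4}/\alpha^{1/4}\uparrow\infty$ this forces $\gamma_n\to 0$; no stochastic analysis is needed here. The real work is the MSE estimate for $\Gamma_n$, which simultaneously delivers $\Gamma_n\to\bm 0$ a.s.

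For part (b), I would set up a stochastic-approximation recursion for $W_n := |\Gamma_n|^2$ (or work directly with $\Gamma_n$ in the $\mathbb{S}^l$ norm). Starting from the projected update \eqref{eq_Gamma_update_discrete}--\eqref{eq_Gamma_update}, write $\Gamma_{n+1} = \Pi_{K^{(\Gamma)}_{n+1}}(\Gamma_n - a^{(\Gamma)}_n \hat Z_n(T))$. Because projection onto a convex set is nonexpansive toward any point of that set and $\bm 0$ is a limit of the (nested, expanding) sets, I would expand $|\Gamma_{n+1}|^2 \le |\Gamma_n - a^{(\Gamma)}_n\hat Z_n(T)|^2 + (\text{projection slack})$ and then decompose $\hat Z_n(T)$ into (i) its conditional mean, the "drift," which by the moment condition \eqref{eq_pg_martingale_0} and the Gaussian structure of $\pi$ should behave like a negative-definite linear term in $\Gamma_n^{-1}$ pushing $\Gamma_n$ toward the target scale $\gamma_n/k_1 \cdot \Mj$ (which itself shrinks like $1/b_n$ since $\gamma_n\asymp 1/b_n$), (ii) a martingale-difference noise term whose conditional second moment I must bound using the boundedness \eqref{eq_k_bounds} of $k_1,k_1',k_3'$, the boundedness of $x_n$ on $[0,T]$ (itself needing a moment bound on the state SDE under the current Gaussian policy with parameters in the projection sets), and the $\log$-growth of $c^{(\Gamma)}_n$, and (iii) the discretization bias of size $O(\Delta t_n)$ coming from replacing $\mathrm dJ$ by finite differences. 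Taking conditional expectations gives a recursion of the schematic form
\[
\mathbb{E}[W_{n+1}] \le (1 - c\, a^{(\Gamma)}_n)\,\mathbb{E}[W_n] + c\, a^{(\Gamma)}_n\, \frac{1}{b_n^2} + c\,(a^{(\Gamma)}_n)^2 (c^{(\Gamma)}_n)^2 + c\, a^{(\Gamma)}_n \Delta t_n^2 ,
\]
and plugging in $a^{(\Gamma)}_n\asymp n^{-3/4}$, $b_n\asymp n^{1/4}$ (so $1/b_n^2\asymp n^{-1/2}$), $c^{(\Gamma)}_n\asymp\log n$, $\Delta t_n\asymp n^{-5/8}$ and unrolling the recursion yields the claimed $\mathbb{E}[|\Gamma_n|^2]\le c (\log n)^{p_2}(\log\log n)^{4/3} n^{-1/2}$, where the $(\log\log n)^{4/3}$ factor tracks the $c^{(\bm\phi)}_n = (\log\log n)^{1/6}$ dependence entering through the coupling of $\hat Z_n$ with $\bm\phi_n$ and the $\log n$ powers absorb the $c^{(\Gamma)}_n$ noise contribution.

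For the a.s. convergence $\Gamma_n\to\bm 0$ in part (a), I would argue that $\sum_n \mathbb{E}[W_n]/(\text{something summable})$ is finite, or more robustly, use the supermartingale-type inequality the recursion induces: Robbins--Siegmund applied to a suitably damped version of $W_n$ (or directly to $W_n + \sum_{k\ge n}(\text{tail of the forcing terms})$) gives a.s. convergence of $W_n$ to a finite limit, and since $\mathbb{E}[W_n]\to 0$ the limit must be $0$ a.s.; alternatively, Borel--Cantelli along $n_k$ with $\sum_k \mathbb{E}[W_{n_k}]<\infty$ combined with a uniform-in-$k$ bound on $\max_{n_k\le n<n_{k+1}}|W_n - W_{n_k}|$ (controlled by the learning-rate tails $\sum a^{(\Gamma)}_n$ squared and the noise level) closes the gap between consecutive subsequence points.

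The main obstacle I anticipate is making the ``drift is genuinely contractive'' step rigorous while $\Gamma_n$ is being projected onto the moving sets $K^{(\Gamma)}_n$ that include the constraint $\Gamma_n - \tfrac1{b_n}I\in\mathbb{S}^l_{++}$: one has to verify that the projection never fights the drift in a harmful direction and that the conditional mean of $\hat Z_n(T)$, once the discretization and the entropy-gradient terms $\gamma_n\partial p/\partial\Gamma^{-1}$ are accounted for, really has the sign that pulls $|\Gamma_n|$ down at rate proportional to $a^{(\Gamma)}_n$ — this requires carefully exploiting the explicit Gaussian form of $\partial\log\pi/\partial\Gamma^{-1}$ together with the structure of the ODEs \eqref{eq_ode_k}, and tracking how the error in $\bm\theta_n$ (not yet shown to converge at this stage) and in $\bm\phi_n$ feeds in. I would handle this by keeping those cross-terms as controlled perturbations bounded via \eqref{eq_k_bounds} and the projection radii, so that the $\Gamma_n$ analysis is self-contained and does not circularly depend on the later convergence theorems for $\bm\theta_n,\bm\phi_n$.
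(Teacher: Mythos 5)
Your treatment of $\gamma_n$ is fine and matches the paper (the deterministic bound $\gamma_n\le c_\gamma c_2/b_n$ suffices). The gap is in the core of the $\Gamma_n$ analysis: the recursion you posit for $W_n=|\Gamma_n|^2$, namely $\mathbb{E}[W_{n+1}]\le(1-c\,a^{(\Gamma)}_n)\mathbb{E}[W_n]+\cdots$, cannot be established. The conditional mean of $Z_n(T)$ is \emph{quadratic} in $\Gamma_n$, $h^{(\Gamma)}=\tfrac12\bigl(\int_0^T k_1\,\dd t\bigr)\,\Gamma_n\bigl(\sum_j D_jD_j^\top\bigr)\Gamma_n-\tfrac{\gamma_n T}{2}\Gamma_n$, whose fixed point is the \emph{moving target} $\Gamma_n^*=\frac{c_\gamma}{b_n}\Mj\neq\bm 0$; near and below that scale the entropy term makes the drift repulsive, pushing $\Gamma_n$ \emph{away} from $\bm 0$, so no uniform contraction toward zero at rate $a^{(\Gamma)}_n$ exists. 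Even toward the target, the restoring force degenerates with $\lambda_{\min}(\Gamma_n)\ge 1/b_n$, so the true effective contraction rate is $\hat a_n=a^{(\Gamma)}_n/b_n\asymp n^{-1}$, not $a^{(\Gamma)}_n\asymp n^{-3/4}$. Your projection step is also broken as stated: nonexpansiveness $|\Pi_K(y)-x|\le|y-x|$ requires $x\in K$, but $\bm 0\notin K^{(\Gamma)}_{n+1}$ because of the constraint $\Gamma-\frac{1}{b_n}I\in\mathbb S^l_{++}$. The paper resolves both issues at once by analyzing $U^{(\Gamma)}_n=\Gamma_n-\Gamma_n^*$ (with $\Gamma_n^*\in K^{(\Gamma)}_{n+1}$ for large $n$), absorbing the target's motion into summable shift terms $G_n=\Gamma_n^*-\Gamma_{n+1}^*$, obtaining $\langle U^{(\Gamma)}_n,h^{(\Gamma)}\rangle\ge\frac{\tilde c}{b_n}|U^{(\Gamma)}_n|^2$, and adding $|\Gamma_n^*|^2\asymp b_n^{-2}$ at the end.

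These are not cosmetic differences. With the correct contraction rate $\hat a_n\asymp n^{-1}$, the martingale-noise contribution $(a^{(\Gamma)}_n)^2\cdot\mathrm{Var}$ is of the \emph{same} order as the target term up to logarithms, not lower order as your schematic suggests; getting the $n^{-1/2}(\log n)^{p_2}(\log\log n)^{4/3}$ rate requires the sharp variance bound (eighth powers of $c^{(\bm\phi)}_n,c^{(\Gamma)}_n,\log b_n$ and the factor $\exp\{c(c^{(\bm\phi)}_n)^4\}$, which is where $(\log\log n)^{4/3}$ actually comes from) together with an induction using the sequence condition $\hat a_n\le\hat a_{n+1}(1+w\hat a_{n+1})$ (the paper's Lemma \ref{lemma_an_app}); your crude $(c^{(\Gamma)}_n)^2$ noise factor and ``unroll the recursion'' step would not close. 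Likewise, your Robbins--Siegmund/Borel--Cantelli route for almost sure convergence applied directly to $W_n$ fails because the positive (repulsive) part of the drift contributes terms of order $a^{(\Gamma)}_n\gamma_n(c^{(\Gamma)}_n)^2\asymp n^{-1}(\log n)^2$, which are not summable; the paper instead applies Robbins--Siegmund to $|U^{(\Gamma)}_n|^2$, where the extra terms ($G_n$, bias) are summable, and then rules out a nonzero limit via $\sum a^{(\Gamma)}_n/b_n=\infty$.
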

	
	These results guarantee that the adaptive exploration parameters \(\gamma_n\) and \(\Gamma_n\) diminish almost surely, and the convergence rate of \(\Gamma_n\) is essential for the proof of the regret bound. The remainder of this subsection is devoted to the proof of Theorem \ref{thm_convergence_and_rate_gamma}.

	
	Define the conditional expectation of \(Z_n(T)\) given the current parameter iterates as
	\[
	h^{(\Gamma)}(\bm\phi_n, \Gamma_n; \bm\theta_n, \gamma_n) = \mathbb{E}[Z_n(T) \mid \bm\theta_n, \bm{\phi}_n, \gamma_n, \Gamma_n],
	\]
	as well as  the deviation 
	\[
	\xi^{(\Gamma)}_{n} = Z_n(T) - h^{(\Gamma)}(\bm\phi_n, \Gamma_n; \bm\theta_n, \gamma_n).
	\]
	The update rule \eqref{eq_Gamma_update} can be rewritten as
	\begin{equation}
		\label{eq_Gamma_update_app}
		\Gamma_{n+1} = \Pi_{K^{(\Gamma)}_{n+1}} \left(\Gamma_n - a^{(\Gamma)}_n[h^{(\Gamma)}(\bm\phi_n, \Gamma_n; \bm\theta_n, \gamma_n) + \xi^{(\Gamma)}_{n}]\right).
	\end{equation}
	
	For reader's convenience, we break the proof of  Theorem \ref{thm_convergence_and_rate_gamma} into several steps, most of which adapt the general stochastic approximation methodology (e.g., \cite{andradottir1995stochastic} and \cite{broadie2011general}) to the specific setting here.

	\subsubsection{A Variance Upper Bound}
	
	The following result establishes  an upper bound on the variance of the increment \(Z_n(T)\).
	
	\begin{lemma}
		\label{lemma_nosie_Gamma_app}
		There exists a constant \(c > 0\) depending solely on the model parameters such that
		\begin{equation}
			\label{eq_noise_upper2}
			\begin{aligned}
				&\operatorname{Var}\left( Z_n(T) \Big| \bm\theta_n, \bm\phi_n, \gamma_n, \Gamma_n \right) \leq c \left( 1 + |\bm\phi_n|^{8} + |\Gamma_n|^{8} + (\log b_n)^8 + \gamma_n^8 \right) \exp{\{c|\bm\phi_n|^4\}}.
			\end{aligned}
		\end{equation}
	\end{lemma}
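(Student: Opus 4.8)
The plan is to obtain the variance bound by controlling $\E[|Z_n(T)|^2 \mid \bm\theta_n,\bm\phi_n,\gamma_n,\Gamma_n]$, since $\operatorname{Var}(Z_n(T)\mid\cdot) \le \E[|Z_n(T)|^2\mid\cdot]$. Recall from \eqref{eq_z_def} that $Z_n(T)$ is an integral over $[0,T]$ of three ingredients: the score $\frac{\partial\log\pi}{\partial\Gamma^{-1}}(u_n(t)\mid t,x_n(t);\bm\phi_n,\Gamma_n)$; the ``reward-plus-$dJ$'' term $\mathrm dJ(t,x_n(t);\bm\theta_n,\gamma_n)-\tfrac12 Qx_n(t)^2\,\mathrm dt+\gamma_n p(t;\bm\phi_n,\Gamma_n)\,\mathrm dt$; and the entropy-gradient drift $\gamma_n\frac{\partial p}{\partial\Gamma^{-1}}(t;\bm\phi_n,\Gamma_n)\,\mathrm dt$. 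For a Gaussian $\pi=\mathcal N(\bm\phi x,\Gamma)$ the score in the $\Gamma^{-1}$ parametrization is (up to the deterministic constant absorbed earlier) a quadratic form in the centered sample $u_n(t)-\bm\phi_n x_n(t)$, hence a polynomial of degree $2$ in $u_n(t)$ and $x_n(t)$; and $p(t;\bm\phi_n,\Gamma_n)=\tfrac12\log((2\pi e)^l\det\Gamma_n)$ with $\frac{\partial p}{\partial\Gamma^{-1}}$ a deterministic matrix depending on $\Gamma_n$ only. So the whole integrand, after substituting $\mathrm dJ$ via Itô's formula applied to the quadratic $J(t,x;\bm\theta_n,\gamma_n)=-\tfrac12 k_1(t;\bm\theta_n)x^2+k_3(t;\bm\theta_n,\gamma_n)$, becomes a polynomial (of bounded degree, roughly $4$) in $x_n(t)$ and $u_n(t)$ with coefficients that are bounded functions of $t$ times powers of $|\bm\phi_n|$, $|\Gamma_n|$, $|\Gamma_n^{-1}|$, $\gamma_n$, and $|p(t;\bm\phi_n,\Gamma_n)|\lesssim 1+|\log\det\Gamma_n|\lesssim 1+l\log b_n + \log|\Gamma_n|$ (using $\Gamma_n\succeq b_n^{-1}I$ and $|\Gamma_n|\le c_n^{(\Gamma)}$ from the projection set \eqref{eq_projection_sets_actor}).

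The key steps, in order, would be: \textbf{(i)} Write $Z_n(T)$ explicitly by expanding $\mathrm dJ$ with Itô's formula, separating the finite-variation part from the martingale (Brownian) part; the martingale part is $-\int_0^T k_1(t;\bm\theta_n)x_n(t)\sum_j(C_jx_n(t)+D_j^\top u_n(t))\,\mathrm dW^{(j)}(t)$ multiplied by the score. \textbf{(ii)} Apply the conditional Cauchy–Schwarz / Burkholder–Davis–Gundy inequality and Jensen on the time integral so that $\E[|Z_n(T)|^2\mid\cdot]$ is dominated by $\int_0^T \E[\,(\text{integrand})^2\mid\cdot]\,\mathrm dt$ plus the quadratic-variation contribution; both reduce to bounding $\E[\text{monomials in }x_n(t),u_n(t)\mid\cdot]$ of degree up to $8$. \textbf{(iii)} Establish uniform-in-$t$ moment bounds $\E[|x_n(t)|^{2k}\mid\cdot]\le c_k\exp\{c|\bm\phi_n|^4\}$: conditional on the iterates, $x_n$ solves the linear SDE \eqref{eq_classical_dynamics} with the randomized control $u_n(t)\sim\mathcal N(\bm\phi_n x_n(t),\Gamma_n)$, so $x_n$ is (conditionally) a linear SDE whose drift and diffusion coefficients are linear in $x_n$ with coefficients $A+B^\top\bm\phi_n$, $C_j+D_j^\top\bm\phi_n$, etc.; a standard Grönwall argument on $\E[|x_n(t)|^{2k}]$ gives a bound of the form $|x_0|^{2k}\exp\{c k(1+|\bm\phi_n|^2)^2 T\}$, and then $\E[|u_n(t)|^{2k}\mid\cdot]\le c_k(|\bm\phi_n|^{2k}\E[|x_n(t)|^{2k}\mid\cdot]+|\Gamma_n|^k)$ by the Gaussian conditional law. \textbf{(iv)} Collect powers: each monomial of degree $\le 8$ contributes, after (iii), at most $c(1+|\bm\phi_n|^{8})\exp\{c|\bm\phi_n|^4\}$, and multiplying by the coefficient bounds from step (i) — which carry the extra factors $|\Gamma_n|^{8}$ (from $\Gamma_n^{-1}$ inside the score together with $|\Gamma_n|$ bounds; note $|\Gamma_n^{-1}|\le b_n$ but we re-express conservatively), $(\log b_n)^8$ (from the entropy terms), and $\gamma_n^8$ — yields exactly the claimed right-hand side after absorbing all numerical constants and the fixed $c_1,c_2,c_3,T,l,m$ into a single $c$.

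I expect the main obstacle to be step (iii), specifically getting the conditional moment bounds on $x_n(t)$ with the \emph{exponential} dependence $\exp\{c|\bm\phi_n|^4\}$ rather than something worse, and making sure that all the $|\Gamma_n^{-1}|$ factors appearing through the score $\frac{\partial\log\pi}{\partial\Gamma^{-1}}$ and through $\widetilde\sigma_j$ can indeed be re-bounded purely in terms of the quantities on the right-hand side of \eqref{eq_noise_upper2}. Since $\Gamma_n\in K^{(\Gamma)}_n$ forces $\Gamma_n\succeq b_n^{-1}I$, we have $|\Gamma_n^{-1}|\le b_n$, and $b_n\le 1\vee(n+\beta)^{1/4}\alpha^{-1/4}$ is not one of the allowed terms on the right-hand side — so the bookkeeping must be arranged so that every $|\Gamma_n^{-1}|$ is paired with a compensating $\det\Gamma_n$ or $|\Gamma_n|$ factor inside the Gaussian-score moments (this is automatic because $\frac{\partial\log\pi}{\partial\Gamma^{-1}}$ for the Gaussian is $\tfrac12(\Gamma-(u-\bm\phi x)(u-\bm\phi x)^\top)$, which involves $\Gamma_n$ but \emph{not} $\Gamma_n^{-1}$, the reparametrization to $\Gamma^{-1}$ being exactly what removes the inverse). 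Verifying this cancellation carefully, and then checking that the surviving polynomial degree is $\le 8$ so the stated eighth powers suffice, is the delicate part; the rest is routine SDE moment estimation and Cauchy–Schwarz.
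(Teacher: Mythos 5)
Your proposal follows essentially the same route as the paper's proof: decompose $Z_n$ via It\^o's formula applied to $J(t,x_n(t);\bm\theta_n,\gamma_n)$ into drift and martingale parts, bound the conditional second moments of the integrands as polynomials in $x_n(t)$, $u_n(t)$ with coefficients in $|\bm\phi_n|,|\Gamma_n|,\gamma_n,\log\det\Gamma_n$ (the latter controlled by $\log b_n$ through the projection set), and then apply state-moment bounds carrying the $\exp\{c|\bm\phi_n|^4\}$ factor -- the only difference being that the paper imports those moment bounds from \cite[Lemma B.1]{huang2024sublinear} while you sketch the Gr\"onwall argument directly, and your observation that the $\Gamma^{-1}$-parametrized score $\tfrac12(\Gamma-(u-\bm\phi x)(u-\bm\phi x)^\top)$ eliminates all $\Gamma_n^{-1}$ factors is exactly what makes the paper's intermediate estimates involve only positive powers of $|\Gamma_n|$. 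The argument is correct and matches the paper's proof in substance.
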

	
	\begin{proof} The proof is similar to that of \cite[Lemma B.2]{huang2024sublinear} except that we need to account for the estimates on $\Gamma_n$; so we will be brief here.
	It follows from \eqref{eq_z_def} together with Ito's lemma (applied to \(J\left(t, x_n(t); \bm\theta_n, \gamma_n \right)\)) that
		\begin{equation}
			\label{eq_dz2_app}
			\begin{aligned}
				\dd Z_n(t) \triangleq& Z_n^{(1)}(t) \dd t + \Sj Z_n^{(2, j)}(t) \dd W_n^{(j)}(t),
			\end{aligned}
		\end{equation}
for which we have the following estimates
		\[
		\begin{aligned}
			\E[|Z_n^{(1)}(t)|^2 |& \bm\theta_n, \bm\phi_n, \gamma_n, \Gamma_n, x_n(t)] \leq c (1+|\Gamma_n|^4)(1+x_n(t)^4+|\bm\phi_n|^4x_n(t)^4+\gamma_n^2+\gamma_n^2(\log( \det (\Gamma_n)))^2),
		\end{aligned}
		\]
		\[
		\begin{aligned}
			\E&[|Z_n^{(2,j)}(t)|^2 | \bm\theta_n, \bm\phi_n, \gamma_n, \Gamma_n, x_n(t)] \leq c (1+|\Gamma_n|^3)(1+x_n(t)^4+|\bm\phi_n|^2x_n(t)^4+|\Gamma_n|^2x_n(t)^4+|\bm\phi_n|^2|\Gamma_n|^2x_n(t)^2).
		\end{aligned}
		\]
		Taking expectations in the above conditional on  $x_n(t)$ and applying \cite[Lemma B.1]{huang2024sublinear},  we get
		\begin{equation}
			\label{eq_Z2_square_app}
			\begin{aligned}
				&\E[|Z_n^{(1)}(t)|^2 + \Sj|Z_n^{(2,j)}(t)|^2 | \bm\theta_n, \bm\phi_n, \gamma_n, \Gamma_n]
				\leq c \left( 1 + |\bm\phi_n|^{8} + |\Gamma_n|^{8} + (\log b_n)^8 + \gamma_n^8 \right) \exp{\{c|\bm\phi_n|^4\}}.
			\end{aligned}
		\end{equation}
This establishes the desired inequality.
	\end{proof}

	\subsubsection{A Mean Increment}
	\label{sec_mean_Gamma_app}
	
	We now analyze the mean increment \(h^{(\Gamma)}(\bm\phi_n, \Gamma_n; \bm\theta_n, \gamma_n)\) in the updating rule \eqref{eq_Gamma_update_app}. 
Taking integration and expectation in \eqref{eq_dz2_app}:
	\begin{equation}
		\begin{aligned}
			\label{eq_h2_app}
			&h^{(\Gamma)}(\Gamma_n; \bm\theta_n, \gamma_n)
			= \frac{1}{2}\int_0^Tk_1(t; \bm\theta_n) \dd t \Gamma_n \Bigg(\sum_j D_j D_j^\top \Bigg) \Gamma_n - \frac{\gamma_n T}{2}\Gamma_n.
		\end{aligned}
	\end{equation}
	
	Given that \(h^{(\Gamma)}(\Gamma_n; \bm\theta_n, \gamma_n)\) is quadratic in \(\Gamma_n\), we apply \eqref{eq_k_bounds} to establish the bound:
	\begin{equation}
		\label{eq_h2n_upper_app}
		\left| h^{(\Gamma)}(\Gamma_n; \bm\theta_n, \gamma_n) \right| \leq c (1 + |\Gamma_n|^2 + \gamma_n^2).
	\end{equation}
	

	\subsubsection{Almost Sure Convergence of $\gamma_n$ and $\Gamma_n$}
	\label{sec_convergence_Gamma_app}
	
	We now prove Part (a) of Theorem \ref{thm_convergence_and_rate_gamma}. Indeed we will present and prove a more general result that involves  a bias term \(\beta^{(\Gamma)}_n = \mathbb{E}[ \xi^{(\Gamma)}_{n} \mid \mathcal{G}_n]\), which captures implementation errors such as the discretization error (see Remark \ref{remark_discretization_error}). When \(\beta^{(\Gamma)}_n = \mathbf{0}\), the result simplifies to Theorem \ref{thm_convergence_and_rate_gamma}-(a).
	
	\begin{theorem}
		\label{thm_Gamma_convergence}
		Assume $\xi^{(\Gamma)}_{n}$ satisfies $\E\left[ \xi^{(\Gamma)}_{n} \Big|\g_n\right] = \beta^{(\Gamma)}_n$ and
		\begin{equation}\label{eq_Gamma_noise_assumption}
			\begin{aligned}
				&\E\left[ \left|\xi^{(\Gamma)}_{n} - \beta^{(\Gamma)}_n \right|^2 \Big| \g_n \right]
				\leq c (1 + |\bm\phi_n|^8 + |\Gamma_n|^8 + (\log b_n)^8 + \gamma_n^8) \exp{\{c|\bm\phi_n|^4\}}, \\
			\end{aligned}
		\end{equation}
		where $c>0$ is a constant independent of $n$, and $\{\g_n\}$ is the filtration generated by $\{\bm\theta_m, \bm\phi_{m}, \gamma_m, \Gamma_m; m=0,1,2,...,n\}$.
		Moreover, assume
		\begin{equation} \label{eq_Gamma_assumption_app}
			\begin{aligned}
				(i)&  0<a^{(\Gamma)}_n\leq1 \text{ } \forall n,  \sum_n a^{(\Gamma)}_n = \infty,  \sum_n a^{(\Gamma)}_n |\beta^{(\Gamma)}_n| < \infty; \\
				(ii)&  c^{(\bm\phi)}_n\uparrow \infty,  c^{(\Gamma)}_n\uparrow \infty, \text{ and for } 0\leq q_1, q_2, q_3 , 0\leq q_4 \leq 4,\\
				&\sum (a^{(\Gamma)}_n)^2 (c^{(\Gamma)}_n)^ {q_1} (c^{(\bm\phi)}_n)^ {q_2} (\log b_n)^{q_3}   e^{c(c^{(\bm\phi)}_n)^{q_4}} < \infty; \\
				(iii)&  b_n\geq1,  b_n\uparrow \infty,  \sum \frac{a^{(\Gamma)}_n}{b_n}=\infty,  \sum |\frac{1}{b_n}-\frac{1}{b_{n+1}}|<\infty.
			\end{aligned}
		\end{equation}
		
		Then, $\gamma_n \xrightarrow{a.s.} 0$ and $\Gamma_n \xrightarrow{a.s.} \bm{0}$.
	\end{theorem}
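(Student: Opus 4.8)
The plan is to treat \eqref{eq_Gamma_update_app} as a projected stochastic approximation recursion and handle the two exploration parameters in sequence: first establish the almost sure convergence of $\Gamma_n$, then deduce the convergence of $\gamma_n$ from its explicit formula \eqref{eq_gamma_definition} together with the boundedness of $k_1$. The second part is in fact the easier of the two: since $\gamma_n$ is deterministically slaved to $\bm\theta_n$ via $\gamma_n = c_\gamma\int_0^T k_1(t;\bm\theta_n)\,\dd t/(b_nT)$, and $k_1(t;\bm\theta_n)\le c_2$ uniformly by \eqref{eq_k_bounds} (with the projection $\Pi_{K^{(\gamma)}}$ only ever binding for finitely many $n$ once $c_\gamma c_2/b_n < c^{(\gamma)}$), we get $0\le\gamma_n\le c_\gamma c_2/b_n\to 0$ pathwise by assumption $(iii)$ that $b_n\uparrow\infty$. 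So the whole content is the convergence $\Gamma_n\xrightarrow{a.s.}\bm 0$.

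For $\Gamma_n\to\bm 0$, I would adapt the Robbins–Siegmund / Andrad\'ottir-type supermartingale argument. Consider the Lyapunov function $|\Gamma_n|^2$ (Frobenius norm squared). Using nonexpansiveness of $\Pi_{K^{(\Gamma)}_{n+1}}$ and the fact that $\bm 0$ is the limit of the constraint sets (more carefully: $\Gamma_n - \frac{1}{b_n}I\in\mathbb S^l_{++}$ forces $|\Gamma_n|\ge 1/b_n\to 0$ from below, while $|\Gamma_n|\le c^{(\Gamma)}_n$ is a soft cap), expand
\[
|\Gamma_{n+1}|^2 \le \Big|\Gamma_n - a^{(\Gamma)}_n\big(h^{(\Gamma)}(\bm\phi_n,\Gamma_n;\bm\theta_n,\gamma_n)+\xi^{(\Gamma)}_n\big)\Big|^2 + (\text{projection slack}),
\]
then take conditional expectation given $\g_n$. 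The cross term contributes $-2a^{(\Gamma)}_n\langle\Gamma_n, h^{(\Gamma)}(\Gamma_n;\bm\theta_n,\gamma_n)+\beta^{(\Gamma)}_n\rangle$; by \eqref{eq_h2_app}, $\langle\Gamma_n,h^{(\Gamma)}(\Gamma_n;\bm\theta_n,\gamma_n)\rangle = \frac12\int_0^T k_1\,\dd t\,\langle\Gamma_n,\Gamma_n(\sum_j D_jD_j^\top)\Gamma_n\rangle - \frac{\gamma_n T}{2}|\Gamma_n|^2$, and since $\sum_j D_jD_j^\top>0$ the cubic term is nonnegative, so the dominant drift is "downhill" whenever $|\Gamma_n|^2$ exceeds a quantity of order $\gamma_n$. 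The noise/second-order term is controlled by $(a^{(\Gamma)}_n)^2\cdot\mathbb E[|\cdot|^2\mid\g_n]$, which by \eqref{eq_Gamma_noise_assumption} and $(ii)$ is summable once we bound $|\bm\phi_n|\le c^{(\bm\phi)}_n$, $|\Gamma_n|\le c^{(\Gamma)}_n$ from the projections and $\gamma_n\le c^{(\gamma)}$ — this is exactly why the hypothesis $(ii)$ has the form it does, with $q_1,q_2,q_3,q_4$ chosen to match the powers $8,8,8,4$ in \eqref{eq_Gamma_noise_assumption}. The bias term $\sum a^{(\Gamma)}_n|\beta^{(\Gamma)}_n|<\infty$ from $(i)$ handles the discretization error. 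Applying the Robbins–Siegmund theorem yields that $|\Gamma_n|^2$ converges a.s. and $\sum_n a^{(\Gamma)}_n\big(c_3\text{-type cubic}(\Gamma_n) - \gamma_n T|\Gamma_n|^2\big)<\infty$; combined with $\sum_n a^{(\Gamma)}_n=\infty$ (from $(i)$) and the $\sum_n a^{(\Gamma)}_n/b_n=\infty$ condition, a standard liminf argument forces the a.s.\ limit of $|\Gamma_n|^2$ to be $0$.

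The main obstacle, I expect, is the interplay between the \emph{lower} constraint $\Gamma_n\succ\frac{1}{b_n}I$ in the projection set $K^{(\Gamma)}_n$ and the goal $\Gamma_n\to\bm 0$: the target point is being chased by a shrinking constraint, so one cannot simply invoke an off-the-shelf projected-SA convergence theorem with a fixed feasible set. I would deal with this by showing the drift $h^{(\Gamma)}$ alone (if unprojected) would already push $\Gamma_n$ below $\frac1{b_n}I$ only for negligible amounts, i.e.\ quantifying the extra increment injected by the lower-projection as $O(\|\frac1{b_n}I - \frac1{b_{n+1}}I\|)$, which is summable by the last clause of $(iii)$, $\sum|\frac1{b_n}-\frac1{b_{n+1}}|<\infty$; this term can then be absorbed into the bias/perturbation budget of the Robbins–Siegmund argument. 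A secondary technical point is that $h^{(\Gamma)}$ depends on $\bm\theta_n$ and $\gamma_n$, which are themselves evolving; but since all we use is the uniform bound \eqref{eq_k_bounds} on $\int_0^T k_1(t;\bm\theta_n)\,\dd t$ and $0\le\gamma_n\le c^{(\gamma)}$ (plus $\gamma_n\to 0$ from the easy part), the coupling is benign and does not require a two-timescale analysis — the $\gamma_n$ decay only \emph{helps}, by eventually removing the one indefinite-sign term in \eqref{eq_h2_app}.
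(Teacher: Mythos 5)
Your treatment of $\gamma_n$ is fine and matches the paper, but the core of your argument for $\Gamma_n$ has a genuine gap: a Robbins--Siegmund argument run directly on the Lyapunov function $|\Gamma_n|^2$ does not go through under the stated hypotheses. The drift inner product is $\langle\Gamma_n,h^{(\Gamma)}(\Gamma_n;\bm\theta_n,\gamma_n)\rangle=\tfrac12\int_0^T k_1\,\dd t\,\operatorname{tr}(\Gamma_n^3\Sj D_jD_j^\top)-\tfrac{\gamma_n T}{2}|\Gamma_n|^2$, which is \emph{not} sign-definite, so you cannot conclude from Robbins--Siegmund that $\sum_n a^{(\Gamma)}_n\bigl(\text{cubic}-\gamma_nT|\Gamma_n|^2\bigr)<\infty$; the theorem requires the subtracted term to be nonnegative. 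Nor can the bad part be absorbed: moving $a^{(\Gamma)}_n\gamma_nT|\Gamma_n|^2$ into the $(1+\kappa_n)$ factor needs $\sum a^{(\Gamma)}_n\gamma_n<\infty$, but $\gamma_n\asymp 1/b_n$ and assumption (iii) \emph{forces} $\sum a^{(\Gamma)}_n/b_n=\infty$ (and dumping it into $\eta_n$ costs $\sum \frac{a^{(\Gamma)}_n}{b_n}(c^{(\Gamma)}_n)^2=\infty$ as well). So the very condition you invoke for the liminf step destroys the summability your supermartingale step needs. Your handling of the projection is also unjustified: $\bm 0\notin K^{(\Gamma)}_{n+1}$, so nonexpansiveness at $\bm 0$ fails, and the per-step slack from clipping eigenvalues up to $1/b_{n+1}$ is of order $1/b_{n+1}^2$ (e.g.\ $\sum n^{-1/2}=\infty$ for $b_n\asymp n^{1/4}$), not $O\bigl(|\tfrac1{b_n}-\tfrac1{b_{n+1}}|\bigr)$ as you claim.

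The paper's proof repairs exactly these two points by re-centering: it studies $U^{(\Gamma)}_n=\Gamma_n-\Gamma_n^*$ with the \emph{moving} target $\Gamma_n^*=\frac{c_\gamma}{b_n}\Mj$, which is the root of $h^{(\Gamma)}$ under the adaptive $\gamma_n$ of \eqref{eq_gamma_definition}. Because $c_\gamma\lambda_{\min}(\Mj)>1$, $\Gamma_n^*$ lies in $K^{(\Gamma)}_{n+1}$ for large $n$, so projection nonexpansiveness applies at $\Gamma_n^*$ with no slack; the target's motion enters only through $G_n=\Gamma_n^*-\Gamma_{n+1}^*$, which is summable precisely by $\sum|\tfrac1{b_n}-\tfrac1{b_{n+1}}|<\infty$. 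Moreover, factoring $h^{(\Gamma)}=\tfrac12\int_0^Tk_1\,\dd t\;\Gamma_n(\Sj D_jD_j^\top)U^{(\Gamma)}_n$ and using the lower constraint $\Gamma_n\succeq\tfrac1{b_n}I$ (which is thus a help, not an obstacle) gives the nonnegative coercive term $\zeta^{(\Gamma)}_n\ge \frac{a^{(\Gamma)}_nT}{b_nc_2}\lambda_{\min}(\Sj D_jD_j^\top)|U^{(\Gamma)}_n|^2$, so Robbins--Siegmund applies legitimately, and $\sum a^{(\Gamma)}_n/b_n=\infty$ then rules out a positive limit of $|U^{(\Gamma)}_n|^2$; finally $\Gamma_n^*\to\bm 0$ gives $\Gamma_n\to\bm 0$. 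Without this re-centering (or an equivalent two-regime argument you would have to supply), your sketch does not close.
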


	\begin{proof}
		It is straightforward to see that \(\gamma_n\) almost surely converges to \(0\).
		To prove the almost sure convergence of \(\Gamma_n\), the key idea  is to establish inductive upper bounds on \(|\Gamma_n|^2\), namely to  bound \(|\Gamma_{n+1}|^2\) by  a function of \(|\Gamma_n|^2\).
		
		
		Define the deviation term \(U^{(\Gamma)}_n = \Gamma_n - \Gamma_n^*\), where \(\Gamma_n^*\) is given by:
		\begin{equation}
			\label{eq_phi_2n_star0}
			\Gamma_n^* = \frac{\gamma_n T}{\int_0^T k_1(t; \bm\theta_n) \dd t} \Mj.
		\end{equation}
		
		By the adaptive temperature parameter \(\gamma_n\) in \eqref{eq_gamma_definition}, \(\Gamma_n^*\) can be rewritten as:
		\begin{equation}
			\label{eq_phi_2n_star}
			\Gamma_n^* = \frac{c_\gamma}{b_n} \Mj,
		\end{equation}
		where we recall \(c_\gamma > 0\) is such that  \(\frac{1}{c_\gamma}\) is smaller than the minimum eigenvalue of \(\Mj\), denoted as \(\lambda_{\min}(\Mj)\). Thus \(\Gamma_n^* > \frac{1}{b_n} I\). Since \(b_n \uparrow \infty\), \(\Gamma_n^*\) decreases in $n$, leading to the positive matrix difference:
		\(
		G_n = \Gamma_n^* - \Gamma_{n+1}^* > 0.
		\)
		
		We now show the almost sure convergence of \(U^{(\Gamma)}_n\) to 0.  Consider sufficiently large \(n\) such that \(\Gamma_n^* \in K^{(\Gamma)}_{n+1}\). By applying the general projection inequality $|\Pi_K(y) - x|^2 \leq |y - x|^2$, we have
		
		\[\begin{aligned}
			& \E\left[|U^{(\Gamma)}_{n+1}|^2 \Big| \g_n \right] \\
			\leq & \E\left[| U^{(\Gamma)}_n -  a^{(\Gamma)}_n[ h^{(\Gamma)}(\Gamma_n; \bm\theta_n, \gamma_n) + \xi^{(\Gamma)}_n] + G_n |^2 \Big| \g_n \right] \\
			\leq & |U^{(\Gamma)}_n|^2 - 2a^{(\Gamma)}_n \langle U^{(\Gamma)}_n,  h^{(\Gamma)}(\Gamma_n; \bm\theta_n, \gamma_n) \rangle + (1 + |U^{(\Gamma)}_n|^2)(a^{(\Gamma)}_n | \beta^{(\Gamma)}_n| + |G_n|) \\
			& + 4(a^{(\Gamma)}_n)^2 \biggl( |h^{(\Gamma)}(\Gamma_n; \bm\theta_n, \gamma_n)|^2 + | \beta^{(\Gamma)}_n|^2 + |\frac{G_n}{a^{(\Gamma)}_n}|^2 + \E\left[ \left|\xi^{(\Gamma)}_n -  \beta^{(\Gamma)}_n \right|^2 \Big| \g_n\right] \biggr).
		\end{aligned}\]
		
		Recall that $\frac{1}{b_n}| \leq |\Gamma_n| \leq c^{(\Gamma)}_n$ almost surely. By \eqref{eq_projection_sets_critic}, \eqref{eq_projection_sets_actor}, \eqref{eq_h2n_upper_app}, and \eqref{eq_Gamma_noise_assumption}, we can further derive that
		\[\begin{aligned}
			& \E\left[|U^{(\Gamma)}_{n+1}|^2 \Big| \g_n \right] \leq (1 + \kappa^{(\Gamma)}_n) |U^{(\Gamma)}_n|^2  - \zeta^{(\Gamma)}_n + \eta^{(\Gamma)}_n,
		\end{aligned}\]
		where
		$\kappa^{(\Gamma)}_n = a^{(\Gamma)}_n | \beta^{(\Gamma)}_n| + |G_n|$, \\
		$\zeta^{(\Gamma)}_n = 2a^{(\Gamma)}_n \langle U^{(\Gamma)}_n,  h^{(\Gamma)}(\Gamma_n; \bm\theta_n, \gamma_n) \rangle$, and
		\[
		\begin{aligned}
			\eta^{(\Gamma)}_n = a^{(\Gamma)}_n | \beta^{(\Gamma)}_n| + |G_n| + 4(a^{(\Gamma)}_n)^2\biggl\{& c (1 + (c^{(\Gamma)}_n)^4) + c (1 + (c^{(\bm\phi)}_n)^8 + (c^{(\Gamma)}_n)^8 + (\log b_n)^8)\exp{\{c(c^{(\bm\phi)}_n)^4\}} \\
			&+ | \beta^{(\Gamma)}_n|^2 + |\frac{G_n}{a^{(\Gamma)}_n}|^2 \biggl\}.
		\end{aligned}
		\]
		
		Next, we prove that \(\sum |G_n| < \infty\) and \(\sum |G_n|^2 < \infty\). By the assumption (iii) in \eqref{eq_Gamma_assumption_app} along with \eqref{eq_phi_2n_star}, we obtain
		\begin{equation}
			\label{eq_g_n}
			\begin{aligned}
				\sum_{n=0}^{\infty} |G_n| &= \sum_{n=1}^{\infty} |\Gamma_n^* - \Gamma_{n+1}^*| < c \Sj |\frac{1}{b_n} - \frac{1}{b_{n+1}}| < \infty.\\
			\end{aligned}
		\end{equation}
		
		Furthermore, since $b_n \uparrow \infty$, $n_0 = \inf{\{n^\prime \in \mathbb{N}: |G_n|<1 \text{ for all } n\geq n^\prime\}}<\infty$.
		Therefore, \eqref{eq_g_n} yields
		\begin{equation}
			\label{eq_g_n2}
			\begin{aligned}
				\sum_{n=1}^{\infty} |G_n|^2 < \sum_{n=1}^{n_0-1} |G_n|^2 + \sum_{n=n_0}^{\infty} |G_n| < \infty.
			\end{aligned}
		\end{equation}
		
		By the assumptions (i)-(ii) of \eqref{eq_Gamma_assumption_app}, as well as \eqref{eq_g_n} and \eqref{eq_g_n2}, we know $\sum \kappa^{(\Gamma)}_n<\infty$ and $\sum \eta^{(\Gamma)}_n<\infty$. It then follows from \cite[Theorem 1]{robbins1971convergence} that $\left|U^{(\Gamma)}_n\right|^2$ converges to a finite limit and $\sum \zeta^{(\Gamma)}_n<\infty$ almost surely.
		
		It suffices to show $|U^{(\Gamma)}_n| \to 0$ almost surely. The equations \eqref{eq_projection_sets_actor} and \eqref{eq_h2_app} imply
		\[
		\begin{aligned}
			\zeta^{(\Gamma)}_n \geq& \frac{a^{(\Gamma)}_n T}{b_n c_2} \lambda_{\min}(\Sj D_j D_j^\top) |U^{(\Gamma)}_n|^2.
		\end{aligned}
		\]
		
		Now, suppose $|U^{(\Gamma)}_n|^2 \rightarrow c$ almost surely, where \(0<c<\infty\) is a constant. Then there exists a set $Z\in \f$ with $\p(Z) > 0$ so that for every $\omega\in Z$, there exists a measurable set $Z \in \mathcal{F}$ with $\mathbb{P}(Z) = 1$ such that, for every $\omega \in Z$, there exists a constant $0<\delta(\omega)<c$ satisfying
		\[
		|U^{(\Gamma)}_n|^2 = |\Gamma_n - \Gamma_n^*|^2 \geq c-\delta(\omega) > 0 \quad \text{for sufficiently large } n.
		\]
		
		Thus, by the assumption (iii) of \eqref{eq_Gamma_assumption_app},
		\[
		\sum \zeta^{(\Gamma)}_n \geq  \frac{T}{c_2} (c-\delta(\omega)) \lambda_{\min}(\Sj D_j D_j^\top) \sum \frac{a^{(\Gamma)}_n}{b_n} =\infty.
		\]
		This is a contradiction, proving that  $\Gamma_n \xrightarrow{a.s.} \Gamma_n^*$.
		However, $\Gamma_n^*$ converges to $\bm{0}$ almost surely. Hence, $\Gamma_n \xrightarrow{a.s.} \bm{0}$ as well.
		
	\end{proof}

	\begin{remark}
		\label{remark_example_gamma}
		A case satisfying the assumptions in \eqref{eq_Gamma_assumption_app} is when $a^{(\Gamma)}_n =1\wedge\frac{\alpha^{\frac{3}{4}}}{(n+\beta)^{\frac{3}{4}}}$ and $b_n = 1\vee\frac{(n+\beta)^{\frac{1}{4}}}{\alpha^{\frac{1}{4}}}$, where constants $\alpha>0$, $\beta>0$. $c^{(\bm\phi)}_n = 1\vee (\log \log n)^{\frac{1}{6}}, c^{(\Gamma)}_n = 1\vee \log n, \beta^{(\bm\phi)}_n=0$. 
		This is because $\sum \frac{1}{n} = \infty$; $\sum \frac{(\log n)^{p} (\log \log n)^{q}}{n^{r}} <\infty$, for any $p, q  > 0$ and $r > 1$; and $\sum |n^{-\frac{1}{4}} - (n+1)^{-\frac{1}{4}}| < \frac{1}{4} \sum n^{-\frac{5}{4}} < \infty$ (by the mean--value theorem).
				Additionally, it is interesting to note that the assumptions \(\sum \frac{a^{(\Gamma)}_n}{b_n} = \infty\) and \(\sum |\frac{1}{b_n}-\frac{1}{b_{n+1}}|<\infty\) are new compared to \cite{huang2024sublinear}. This is due to the implementation of data-driven exploration.
	\end{remark}

	\subsubsection{Convergence Rate of $\Gamma_n$}
	\label{sec_convergence_rate}
	
	We need the following lemmas.
	\begin{lemma}
		\label{lemma_an_app}
		For any $W > 0$ and any $0<q<1$, there exist positive numbers $\alpha > \frac{1}{W}$ and $\beta \geq \max(\frac{1}{W \alpha -1}, W^{\frac{2q-1}{1-q}}\alpha^\frac{q}{1-q})$ such that the sequences $\hat{a}_n = \frac{\alpha}{n + \beta}$ and $a_n=\frac{\alpha^q}{(n+\beta)^q}$ satisfy $\hat{a}_n \leq \hat{a}_{n+1}(1 + W \hat{a}_{n+1})$ and $a_n \leq a_{n+1}(1 + W a_{n+1})$ for all $n \geq 0$.
	\end{lemma}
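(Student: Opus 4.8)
The plan is to prove the two recursive inequalities separately. Note first that the existence clause is essentially automatic---any \(\alpha>1/W\) and any \(\beta\) above the displayed maximum are admissible---so the real content is that these explicit thresholds suffice, which is what I will verify.

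\emph{The inequality for \(\hat a_n\).} Substituting \(\hat a_n=\alpha/(n+\beta)\) and clearing the (positive) denominators, \(\hat a_n\le\hat a_{n+1}(1+W\hat a_{n+1})\) is equivalent to \(1+\tfrac{1}{n+\beta}\le 1+\tfrac{W\alpha}{n+1+\beta}\), i.e.\ to \((W\alpha-1)(n+\beta)\ge 1\) for every \(n\ge 0\). Since \(\alpha>1/W\) makes \(W\alpha-1>0\) and \(n+\beta\ge\beta\ge 1/(W\alpha-1)\), this holds. Taking \(n=0\) moreover yields the auxiliary estimate \(1+\tfrac1\beta\le W\alpha\), which I will feed into the second part.

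\emph{The inequality for \(a_n\).} Substituting \(a_n=\alpha^q/(n+\beta)^q\) and clearing denominators, \(a_n\le a_{n+1}(1+Wa_{n+1})\) is equivalent to \(\bigl(1+\tfrac{1}{n+\beta}\bigr)^q\le 1+\tfrac{W\alpha^q}{(n+1+\beta)^q}\). By Bernoulli's inequality for exponents in \((0,1)\), namely \((1+x)^q\le 1+qx\) for \(x\ge 0\), it suffices to show \(\tfrac{q(n+1+\beta)^q}{n+\beta}\le W\alpha^q\) for all \(n\ge 0\). The map \(m\mapsto q(m+1)^q/m\) has logarithmic derivative \(\tfrac{q}{m+1}-\tfrac1m=\tfrac{(q-1)m-1}{m(m+1)}<0\) on \((0,\infty)\), hence is decreasing, so the left-hand side is largest at \(n=0\); thus it is enough to prove \(\tfrac{q(1+\beta)^q}{\beta}\le W\alpha^q\). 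Writing \(\tfrac{q(1+\beta)^q}{\beta}=q\beta^{q-1}\bigl(1+\tfrac1\beta\bigr)^q\) and using \(1+\tfrac1\beta\le W\alpha\) from the first part, this is at most \(q\beta^{q-1}(W\alpha)^q\), so it remains to check \(q\beta^{q-1}\le W^{1-q}\), i.e.\ \(\beta^{1-q}\ge qW^{q-1}\). Here the hypothesis \(\beta\ge W^{(2q-1)/(1-q)}\alpha^{q/(1-q)}\) is precisely \(\beta^{1-q}\ge W^{2q-1}\alpha^q\), while \(\alpha>1/W\) gives \((W\alpha)^q>1>q\), hence \(\alpha^q>qW^{-q}\) and so \(W^{2q-1}\alpha^q>qW^{q-1}\); combining, \(\beta^{1-q}\ge qW^{q-1}\), as needed. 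Finally \(q\beta^{q-1}(W\alpha)^q\le W^{1-q}(W\alpha)^q=W\alpha^q\), completing the argument.

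\emph{Main obstacle.} The only delicate point is obtaining exactly the stated \(\beta\)-threshold: a crude estimate of \(\bigl(1+\tfrac1\beta\bigr)^q\) produces the wrong power of \(\alpha\). The device is to recycle the byproduct \(1+\tfrac1\beta\le W\alpha\) of the first inequality inside the estimate for the second, which converts a \(1/\beta\)-factor into an \(\alpha\)-factor and makes the exponents of \(W\) and \(\alpha\) align with \(W^{(2q-1)/(1-q)}\alpha^{q/(1-q)}\). Everything else is routine use of Bernoulli's inequality, monotonicity of \(t\mapsto t^q\), and the reduction to \(n=0\).
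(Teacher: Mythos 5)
Your proof is correct and follows essentially the same route as the paper's: both reduce each inequality to an equivalent form, settle the $\hat a_n$ case via $\beta \geq \frac{1}{W\alpha-1}$, and handle the $a_n$ case by a monotonicity reduction to $n=0$ combined with the elementary estimate $(1+x)^q \leq 1+qx$ (your Bernoulli step is the paper's mean--value bound in disguise) and the byproduct $\beta+1 \leq W\alpha\beta$. The only difference is cosmetic ordering, plus you spell out the final verification $\beta^{1-q} \geq qW^{q-1}$ that the paper leaves implicit.
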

	
	\begin{proof}
		First,
		\[
		\begin{aligned}
			\hat{a}_n \leq \hat{a}_{n+1}(1 + W \hat{a}_{n+1})\Leftrightarrow n + 1 + \beta \leq W \alpha n + W \alpha \beta,\\
		\end{aligned}
		\]
		the latter being  true for any $n$ when $\alpha > \frac{1}{W}>0$, $\beta \geq \frac{1}{W \alpha -1}>0$.
		
		Next, we have
		\begin{equation}
			\label{eq_lemma_an}
			\begin{aligned}
				&a_n \leq a_{n+1}(1 + W a_{n+1})\\
				\Leftrightarrow &(n+\beta+1)^q - (n+\beta)^q \leq W \alpha^q \biggl( \frac{n+\beta}{n+\beta+1} \biggl)^q.
			\end{aligned}
		\end{equation}
For the latter inequality in \eqref{eq_lemma_an}, notice that the left-hand side decreases in $n$ while the right-hand side increases in $n$. So to show that this inequality is true for all $n$, it is sufficient to show that it is true when $n=0$, which is
		$(\beta+1)^q - \beta^q \leq W\alpha^q \frac{\beta^q}{(\beta+1)^q}$.
		
		Now, for $\beta \geq \frac{1}{W \alpha -1}$, we have $\beta+1 \leq W\alpha\beta$. On the other hand, the mean--value theorem yields $(\beta+1)^q-\beta^q \leq q\beta^{q-1}$. Hence, 
		\[
		\begin{aligned}
			W^{\frac{2q-1}{1-q}}\alpha^\frac{q}{1-q}\leq&\beta \Rightarrow (\beta+1)^q - \beta^q \leq W\alpha^q \frac{\beta^q}{(\beta+1)^q}.\\
		\end{aligned}
		\]
	\end{proof}

	\begin{lemma}
		\label{lemma_gn_bound}
		Let the sequence $\hat{G}_n = n^{-q} - (n+1)^{-q}>0$ for some $q > 0$. Then $\hat{G}_n^2 = O(n^{-2q-2})$.
	\end{lemma}
	
	\begin{proof}
		We have
		\[
		\hat{G}_n^2 = \left[\frac{(n+1)^{q} - n^{q}}{(n+1)^{q} n^{q}}\right]^2 < \left[\frac{(n+1)^{q} - n^{q}}{n^{2q}}\right]^2.
		\]
Applying the mean--value theorem to the function $f(x) = x^q$ and noting that $f'(x) = qx^{q-1}$ is a decreasing function for $x > 0$, we conclude
		
		\[
		\left[\frac{(n+1)^{q} - n^{q}}{n^{2q}}\right]^2 < \frac{(qn^{q-1})^2}{n^{4q}} = q^2 n^{-2q-2}.
		\]
	\end{proof}

	The following theorem specializes to Theorem \ref{thm_convergence_and_rate_gamma}-(b) when the bias term \(\beta^{(\Gamma)}_n = 0\).
	
	\begin{theorem}
		\label{thm_Gamma_rate}
		Under the same setting of Theorem \ref{thm_Gamma_convergence}, if the sequence \(\{\hat{a}_n\}\), defined as \(\{\frac{a^{(\Gamma)}_n}{b_n}\}\), satisfies
		\[
		\hat{a}_n \leq \hat{a}_{n+1}(1 + w \hat{a}_{n+1})
		\]
		for some sufficiently small constant \(w>0\), and  the sequences \(\{\frac{b_n^3}{a^{(\Gamma)}_n} |\beta^{(\Gamma)}_n|^2\}\) and \(\{\frac{b_n^3 |G_n|^2}{(a^{(\Gamma)}_n)^3}\}\) are non-decreasing in $n$, then there exists an increasing sequence \(\{\hat{\eta}^{(\Gamma)}_n\}\) and a constant \(c'>0\) such that
		\[
		\E[|\Gamma_{n+1} - \Gamma_{n+1}^*|^2] \leq c' \hat{a}_n \hat{\eta}^{(\Gamma)}_n.
		\]
		In particular, if the parameters \(b_n, a^{(\Gamma)}_n, c^{(\bm\phi)}_n, c^{(\Gamma)}_n, \beta^{(\Gamma)}_n\) are set as in Remark \ref{remark_example_gamma}, then
		\[
		\E[|\Gamma_{n}|^2] \leq c \frac{(e \vee \log n)^{p_2} (1 \vee \log\log n)^{\frac{4}{3}}}{n^{\frac{1}{2}}}
		\]
		where \(p_2\) is the same constant appearing in Theorem \ref{thm_Gamma_convergence}.
	\end{theorem}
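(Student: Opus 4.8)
The plan is to run a recursive mean-squared-error estimate for $U^{(\Gamma)}_n = \Gamma_n - \Gamma_n^*$, building on the one-step inequality already derived in the proof of Theorem~\ref{thm_Gamma_convergence}. Recall that for $n$ large enough that $\Gamma_n^* \in K^{(\Gamma)}_{n+1}$, we have (after taking full expectations) a bound of the form
\[
\E[|U^{(\Gamma)}_{n+1}|^2] \leq (1 + \kappa^{(\Gamma)}_n)\E[|U^{(\Gamma)}_n|^2] - \E[\zeta^{(\Gamma)}_n] + \E[\eta^{(\Gamma)}_n],
\]
and from \eqref{eq_h2_app} together with the lower bound $\Gamma_n \succ \tfrac{1}{b_n}I$ one gets the \emph{strict contraction} $\E[\zeta^{(\Gamma)}_n] \geq \tfrac{a^{(\Gamma)}_n T}{b_n c_2}\lambda_{\min}(\Sj D_j D_j^\top)\,\E[|U^{(\Gamma)}_n|^2]$. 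Writing $\hat a_n = a^{(\Gamma)}_n/b_n$ and $\rho := \tfrac{T}{c_2}\lambda_{\min}(\Sj D_j D_j^\top)$, this yields a recursion of the template
\[
m_{n+1} \leq (1 - \rho\hat a_n + \kappa^{(\Gamma)}_n)\,m_n + \eta^{(\Gamma)}_n, \qquad m_n := \E[|U^{(\Gamma)}_n|^2].
\]
First I would absorb the $\kappa^{(\Gamma)}_n = a^{(\Gamma)}_n|\beta^{(\Gamma)}_n| + |G_n|$ term: since $\sum_n(a^{(\Gamma)}_n|\beta^{(\Gamma)}_n| + |G_n|) < \infty$ (assumption (i) and \eqref{eq_g_n}), the product $\prod_n(1+\kappa^{(\Gamma)}_n)$ converges, so up to a bounded multiplicative constant it can be dropped — equivalently, $\kappa^{(\Gamma)}_n$ is dominated by $\rho\hat a_n$ for $n$ large (using $\hat a_n \asymp n^{-1}$ versus $\kappa^{(\Gamma)}_n = O(n^{-5/4})$ from Lemma~\ref{lemma_gn_bound}), so the effective recursion is $m_{n+1} \leq (1-\rho\hat a_n)m_n + \eta^{(\Gamma)}_n$.

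The core step is the standard induction for such Robbins–Monro-type recursions: I claim $m_{n+1} \leq c'\hat a_n\hat\eta^{(\Gamma)}_n$ for an increasing sequence $\hat\eta^{(\Gamma)}_n$ chosen so that $\eta^{(\Gamma)}_n \leq \rho\,c'\,\hat a_n^2\hat\eta^{(\Gamma)}_n$ (roughly). Concretely, $\eta^{(\Gamma)}_n$ is a sum of terms $a^{(\Gamma)}_n|\beta^{(\Gamma)}_n|$, $|G_n|$, and $(a^{(\Gamma)}_n)^2$ times a factor like $(c^{(\Gamma)}_n)^8(c^{(\bm\phi)}_n)^8(\log b_n)^8 e^{c(c^{(\bm\phi)}_n)^4}$ plus $|\beta^{(\Gamma)}_n|^2$ plus $|G_n/a^{(\Gamma)}_n|^2$; dividing by $\hat a_n^2 = (a^{(\Gamma)}_n)^2/b_n^2$ produces, respectively, $b_n^2|\beta^{(\Gamma)}_n|/a^{(\Gamma)}_n$, $b_n^2|G_n|/(a^{(\Gamma)}_n)^2$, $b_n^2 \times (\text{polylog}) \times e^{c(c^{(\bm\phi)}_n)^4}$, $b_n^2|\beta^{(\Gamma)}_n|^2/(a^{(\Gamma)}_n)^2$, and $b_n^2|G_n|^2/(a^{(\Gamma)}_n)^4$. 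The monotonicity hypotheses on $\{\tfrac{b_n^3}{a^{(\Gamma)}_n}|\beta^{(\Gamma)}_n|^2\}$ and $\{\tfrac{b_n^3|G_n|^2}{(a^{(\Gamma)}_n)^3}\}$, together with $c^{(\bm\phi)}_n, c^{(\Gamma)}_n$ increasing and $b_n$ increasing, are exactly what make the resulting $\hat\eta^{(\Gamma)}_n$ non-decreasing; then the condition $\hat a_n \leq \hat a_{n+1}(1+w\hat a_{n+1})$ with $w$ small enough relative to $\rho$ closes the induction $c'\hat a_n\hat\eta^{(\Gamma)}_n \mapsto c'\hat a_{n+1}\hat\eta^{(\Gamma)}_{n+1}$ in the usual way. (Lemma~\ref{lemma_an_app} guarantees such $\alpha,\beta$ exist for the stated $\hat a_n \asymp n^{-1}$.)

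Finally I would plug in the explicit schedules from Remark~\ref{remark_example_gamma}: $a^{(\Gamma)}_n \asymp n^{-3/4}$, $b_n \asymp n^{1/4}$, so $\hat a_n \asymp n^{-1}$; $c^{(\bm\phi)}_n = 1\vee(\log\log n)^{1/6}$ so $e^{c(c^{(\bm\phi)}_n)^4} = e^{c(\log\log n)^{2/3}}$, which is $o((\log n)^{\epsilon})$ for every $\epsilon>0$ — this is why the $(\log n)^{p_2}$ factor appears with $p_2$ absorbing the $(c^{(\Gamma)}_n)^8 = (\log n)^8$ contribution and the subpolynomial exponential; the $(\log\log n)^{4/3}$ factor comes from $(c^{(\bm\phi)}_n)^8 = (\log\log n)^{4/3}$; $\log b_n \asymp \log n$ is harmless. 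The bias term $\beta^{(\Gamma)}_n = 0$ in the undiscretized case kills the $\beta$-contributions, and Lemma~\ref{lemma_gn_bound} gives $|G_n|^2 = O(n^{-5/2})$ so $b_n^2|G_n|^2/(a^{(\Gamma)}_n)^4 = O(n^{1/2} \cdot n^{-5/2} \cdot n^{3}) = O(n)$ — I need to double-check this does not dominate; more carefully $\hat\eta^{(\Gamma)}_n$ collects the largest of these, and combined with $\hat a_n \asymp n^{-1}$ the product is $O(n^{-1} \cdot n^{1/2}(\log n)^{p_2}(\log\log n)^{4/3}) = O(n^{-1/2}(\log n)^{p_2}(\log\log n)^{4/3})$, as claimed. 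The main obstacle I anticipate is precisely this bookkeeping — verifying that every term in $\eta^{(\Gamma)}_n/\hat a_n^2$ is (i) non-decreasing under the stated hypotheses and (ii) of order at most $n^{1/2}$ up to polylog factors, since a single miscalibrated exponent (e.g. in the $|G_n/a^{(\Gamma)}_n|^2$ term or in matching $w$ against the contraction rate $\rho$) would break either the induction or the final rate. The exponential factor $e^{c(c^{(\bm\phi)}_n)^4}$ being only subpolynomial is the delicate point that makes the whole estimate go through with a mere logarithmic (rather than polynomial) loss.
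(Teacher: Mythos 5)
Your overall skeleton (a Robbins--Monro recursion for $U^{(\Gamma)}_n=\Gamma_n-\Gamma_n^*$, induction closed via $\hat a_n\leq\hat a_{n+1}(1+w\hat a_{n+1})$ with $w$ small relative to the contraction constant, then plugging in the schedules of Remark \ref{remark_example_gamma}) is the same as the paper's, and your asymptotic bookkeeping of the polylog factors is right. But there is a genuine gap in how you treat the moving-target drift $G_n$ (and the bias $\beta^{(\Gamma)}_n$). You recycle the one-step bound from the convergence proof, in which the cross terms are estimated crudely as $(1+|U^{(\Gamma)}_n|^2)(a^{(\Gamma)}_n|\beta^{(\Gamma)}_n|+|G_n|)$; the multiplicative part can indeed be absorbed as you say, but the \emph{additive} part leaves a standalone $|G_n|$ in $\eta^{(\Gamma)}_n$. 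In your induction you need $\eta^{(\Gamma)}_n\lesssim \hat a_n^2\hat\eta^{(\Gamma)}_n$, so this contributes $|G_n|/\hat a_n^2=b_n^2|G_n|/(a^{(\Gamma)}_n)^2\asymp n^{1/2}\cdot n^{-5/4}\cdot n^{3/2}=n^{3/4}$ to $\hat\eta^{(\Gamma)}_n$, and the resulting bound is only $c'\hat a_n\hat\eta^{(\Gamma)}_n\asymp n^{-1/4}$, not the claimed $n^{-1/2}$ (this deficit would also propagate into the regret bound). Moreover, the terms you actually produce, $b_n^2|\beta^{(\Gamma)}_n|/a^{(\Gamma)}_n$ and $b_n^2|G_n|/(a^{(\Gamma)}_n)^2$, are not the sequences $\frac{b_n^3}{a^{(\Gamma)}_n}|\beta^{(\Gamma)}_n|^2$ and $\frac{b_n^3|G_n|^2}{(a^{(\Gamma)}_n)^3}$ whose monotonicity the theorem hypothesizes, which is a sign the decomposition is not the intended one.

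The missing idea is that the rate proof cannot reuse the convergence proof's $\eta^{(\Gamma)}_n$: one must redo the one-step estimate, spending \emph{half} of the contraction (the paper works with $\tilde c=\frac{T}{2c_2}\lambda_{\min}(\Sj D_jD_j^\top)$) to absorb the first-order cross terms in $\beta^{(\Gamma)}_n$ and $G_n$ via Young's inequality, e.g. $2|U^{(\Gamma)}_n||G_n|\leq \frac{\tilde c\,\hat a_n}{2}|U^{(\Gamma)}_n|^2+\frac{2b_n|G_n|^2}{\tilde c\,a^{(\Gamma)}_n}$. This converts the linear penalties into the quadratic ones $\frac{b_n^3|\beta^{(\Gamma)}_n|^2}{a^{(\Gamma)}_n}$ and $\frac{b_n^3|G_n|^2}{(a^{(\Gamma)}_n)^3}$ inside $\hat\eta^{(\Gamma)}_n$ (this is exactly why those sequences appear in the hypotheses), and under the schedules these are $O(n^{1/2})$, so the induction delivers $\E[|U^{(\Gamma)}_{n+1}|^2]\leq c'\hat a_n\hat\eta^{(\Gamma)}_n=O(n^{-1/2}(\log n)^{p_2}(\log\log n)^{4/3})$. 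Your worry about $b_n^2|G_n|^2/(a^{(\Gamma)}_n)^4=O(n)$ is a normalization slip (the $|G_n/a^{(\Gamma)}_n|^2$ term enters $\eta^{(\Gamma)}_n$ already multiplied by $(a^{(\Gamma)}_n)^2$, so after dividing by $\hat a_n^2$ it is $b_n^2|G_n|^2/(a^{(\Gamma)}_n)^2=O(n^{-1/2})$, harmless, and in the paper it is simply dominated by $\frac{b_n|G_n|^2}{(a^{(\Gamma)}_n)^3}$); the real obstruction is the linear $|G_n|$ term described above. Finally, note that the last claim requires one more (easy) step you did not state: $\E[|\Gamma_n|^2]\leq \E[|U^{(\Gamma)}_n|^2]+\E[|\Gamma_n^*|^2]$ with $|\Gamma_n^*|\asymp b_n^{-1}$, whose square is also $O(n^{-1/2})$.
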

	
	\begin{proof}
		First, it follows from  \eqref{eq_projection_sets_actor} and \eqref{eq_h2_app} that
		
		\begin{equation}
			\begin{aligned}
				&\langle U^{(\Gamma)}_n,  h^{(\Gamma)}(\Gamma_n; \bm\theta_n, \gamma_n) \rangle 
				\geq \frac{T}{2 b_n c_2} \lambda_{\min}(\Sj D_j D_j^\top) |U^{(\Gamma)}_n|^2 := \frac{\Tilde{c}}{b_n} |U^{(\Gamma)}_n|^2,
			\end{aligned}
		\end{equation}
		where $\Tilde{c} =  \frac{T}{2 c_2} \lambda_{\min}(\Sj D_j D_j^\top) > 0$.
		
		Define \(n_1 = \inf \{n \in \mathbb{N} : \Gamma_n^* \in K^{(\Gamma)}_{n+1}\}\). For \(n \geq n_1\), we follow a similar reasoning as in the proof of Theorem \ref{thm_Gamma_convergence} to get
		\begin{equation}
			\label{eq_phi_rate_temp}
			\begin{aligned}
				\E\left[|U^{(\Gamma)}_{n+1}|^2 \Big| \g_n \right] \leq (1-\Tilde{c}\frac{a^{(\Gamma)}_n}{b_n})|U^{(\Gamma)}_n|^2 + 4\frac{(a^{(\Gamma)}_n)^2}{b_n^2}b_n^2 \biggl(& |h^{(\Gamma)}(\Gamma_n; \bm\theta_n, \gamma_n)|^2 + (1+|\frac{b_n}{2\Tilde{c}a^{(\Gamma)}_n}|)|\beta^{(\Gamma)}_n|^2 \\
				& + |\frac{b_nG_n^2}{2\Tilde{c}(a^{(\Gamma)}_n)^3}| + |\frac{G_n}{a^{(\Gamma)}_n}|^2 + \E\left[ \left|\xi^{(\Gamma)}_{n+1} -  \beta^{(\Gamma)}_n \right|^2 \Big| \g_n\right] \biggl).
			\end{aligned}
		\end{equation}
		Moreover, the assumptions in \eqref{eq_Gamma_assumption_app} imply that  $(1+|\frac{b_n}{2\Tilde{c}a^{(\Gamma)}_n}|)|\beta^{(\Gamma)}_n|^2 \leq c(\frac{b_n}{a^{(\Gamma)}_n}|\beta^{(\Gamma)}_n|^2)$ and $\frac{|G_n|^2}{(a^{(\Gamma)}_n)^2} \leq c (\frac{b_n|G_n|^2}{(a^{(\Gamma)}_n)^3})$  for some constant $c>0$.
		When $n \geq n_1$, in view of  \eqref{eq_projection_sets_critic}, \eqref{eq_projection_sets_actor}, \eqref{eq_h2n_upper_app}, and \eqref{eq_Gamma_noise_assumption}, it follows from \eqref{eq_phi_rate_temp} that
		\[ \E\left[|U^{(\Gamma)}_{n+1}|^2 \Big| \g_n \right] \leq (1-\Tilde{c} \hat{a}_n)|U^{(\Gamma)}_n|^2 + 4\hat{a}_n^2 \hat{\eta}^{(\Gamma)}_n, \]
		where
		\begin{equation}
			\begin{aligned}
				\hat{\eta}^{(\Gamma)}_n =& c b_n^2 \biggl(1 + (c^{(\bm\phi)}_n)^8 + (c^{(\Gamma)}_n)^8 + (\log b_n)^8 + \frac{b_n|\beta^{(\Gamma)}_n|^2}{a^{(\Gamma)}_n} + \frac{b_n|G_n|^2}{(a^{(\Gamma)}_n)^3}\biggr) \exp{\{c(c^{(\bm\phi)}_n)^4\}},
			\end{aligned}
		\end{equation}
		which is monotonically increasing because $c^{(\bm\phi)}_n$, $c^{(\Gamma)}_n$, $b_n$ are monotonically increasing and $\frac{b_n^3}{a^{(\Gamma)}_n}|\beta^{(\Gamma)}_n|^2$, $\frac{b_n^3|G_n|^2}{(a^{(\Gamma)}_n)^3}$ are non-decreasing by the assumptions. Taking expectations on both sides of the above and
		denoting $\rho_n = \E[|U^{(\Gamma)}_n|^2]$, we get
		\begin{equation}
			\label{eq_Gamma_rate_induction}
			\rho_{n+1} \leq (1-\Tilde{c} \hat{a}_n)\rho_n + 4\hat{a}_n^2 \hat{\eta}^{(\Gamma)}_n
		\end{equation}
		when $n\geq n_1$.
		
		Next, we show \(\rho_{n+1} \leq c^\prime \hat{a}_n \hat{\eta}^{(\Gamma)}_n\) for all \(n \geq 0\), where \(c' = \max\{ \frac{\rho_1}{\hat{a}_0 \hat{\eta}^{(\Gamma)}_0},  \frac{\rho_2}{\hat{a}_1 \hat{\eta}^{(\Gamma)}_1}, \cdots,  \frac{\rho_{n_1+1}}{\hat{a}_{n_1} \hat{\eta}^{(\Gamma)}_{n_1}}, \frac{4}{\Tilde{c}} \} + 1\). Indeed, it is true when $n\leq n_1$.  Assume that $\rho_{k+1} \leq c'a_k \hat{\eta}^{(\Gamma)}_n$ is true for $n_1 \leq k \leq n-1$.
		Then \eqref{eq_Gamma_rate_induction} yields
		\[
		\begin{aligned}
			\rho_{n+1} &\leq (1-\Tilde{c} \hat{a}_n)\rho_n + 4\hat{a}_n^2 \hat{\eta}^{(\Gamma)}_n\\
			&\leq (1-\Tilde{c}\hat{a}_n)c'\hat{a}_{n-1} \hat{\eta}_{1,n-1} + 4\hat{a}_n^2 \hat{\eta}^{(\Gamma)}_n \\
			&\leq (1-\Tilde{c} \hat{a}_n)c'\hat{a}_{n}(1 + w \hat{a}_n) \hat{\eta}^{(\Gamma)}_n + 4\hat{a}_n^2 \hat{\eta}^{(\Gamma)}_n \\
			&=c'\hat{a}_n\hat{\eta}^{(\Gamma)}_n + c'\hat{\eta}^{(\Gamma)}_n\hat{a}_n^2 \biggl(w - \Tilde{c} - \Tilde{c} w\hat{a}_n + \frac{4}{c'} \biggl).
		\end{aligned}
		\]
		Consider the function
		\[
		{f}(x) = c'\hat{\eta}^{(\Gamma)}_n x^2 \biggl(w - \Tilde{c} - \Tilde{c} w x + \frac{4}{c'} \biggl),
		\]
		which has two roots at $x_{1,2}=0$ and one root at $x_3=\frac{w - (\Tilde{c} - \frac{4}{c'})}{cw}$.
		Because  $\Tilde{c} - \frac{4}{c'} > 0$, we can  choose $0<w < \Tilde{c} - \frac{4}{c'}$ so that $x_3 < 0$.
		So ${f}(x) < 0$ when $x > 0$, leading to
		\[ c'\hat{\eta}^{(\Gamma)}_n\hat{a}_n^2 \biggl(w- \Tilde{c} - \Tilde{c}w\hat{a}_n + \frac{4}{c'} \biggl) < 0, \;\;\forall n
		\]
		because $\hat{a}_n>0$. We have now proved $\E [|U^{(\bm\phi)}_{n+1}|^2] \leq c' \hat{a}_n \hat{\eta}^{(\Gamma)}_n$.
		
		In particular, under the setting of Remark \ref{remark_example_gamma} and by Lemma \ref{lemma_gn_bound}, we can verify that $\frac{b_n^3}{a^{(\Gamma)}_n}|\beta^{(\Gamma)}_n|^2$ and $\frac{b_n^3|G_n|^2}{(a^{(\Gamma)}_n)^3}$ are non-decreasing sequences of $n$, and  $\hat{a}_n=\Theta(n^{-1})$. Then
		\begin{equation}
			\begin{aligned}
				\hat{\eta}^{(\Gamma)}_n\leq c n^{\frac{1}{2}} (e \vee \log n)^{p_2} (1 \vee \log \log n)^{\frac{4}{3}},
			\end{aligned}
		\end{equation}
		where $c$ and $p_2$ are positive constants. Since \(n \geq 1\), it follows that
		\[
        \begin{aligned}
		&\E[|\Gamma_{n+1} - \Gamma_{n+1}^*|^2] \leq c \frac{(e \vee \log n)^{p_2} (1 \vee \log\log                 n)^{\frac{4}{3}}}{n^{\frac{1}{2}}}\\
            \leq& c \frac{(e \vee \log (n+1))^{p_2} (1 \vee \log\log (n+1))^{\frac{4}{3}}}{(n+1)^{\frac{1}{2}}} \frac{(n+1)^{\frac{1}{2}}}{n^{\frac{1}{2}}}\\
            \leq& c\sqrt{2} \frac{(e \vee \log (n+1))^{p_2} (1 \vee \log\log (n+1))^{\frac{4}{3}}}{(n+1)^{\frac{1}{2}}}.
        \end{aligned}
		\]

		Finally, noting that $\Gamma_n^*$ converges to $\bm0$ with the rate of $\frac{1}{b_n}$ by \eqref{eq_phi_2n_star}, we have
		
		\[
		\begin{aligned}
			\E[|\Gamma_n|^2] =&\E[|\Gamma_n-\Gamma_n^* + \Gamma_n^*|^2]\leq \E[|\Gamma_n-\Gamma_n^* |^2] + \E[|\Gamma_n^*|^2]\\
			\leq& c' \frac{(e \vee \log n)^{p_2} (1 \vee \log\log n)^{\frac{4}{3}}}{n^{\frac{1}{2}}} + c' \frac{\alpha^{\frac{1}{2}}}{(n+\beta)^{\frac{1}{2}}}\\
			\leq& c \frac{(e \vee \log n)^{p_2} (1 \vee \log\log n)^{\frac{4}{3}}}{n^{\frac{1}{2}}}.
		\end{aligned}
		\]
		
		The proof is complete.
		
	\end{proof}

	\subsection{Convergence Analysis of $\bm\phi_n$}
	\label{sec_convergence_phi}
	
	In this sebsection, we  investigate the convergence properties of the actor parameter \(\bm\phi_n\) for both cases $x_0 \neq 0$ and $x_0=0$.
	
	\begin{theorem}
		\label{thm_convergence_and_rate_phi}
		Under the same setting of Theorem \ref{thm_convergence_and_rate_gamma}, we have
		
		\begin{enumerate}[label=(\alph*)]
			\item As \(n \to \infty\), \(\bm\phi_n\) converges almost surely to
			\(
			\bm\phi^* =  -\Bigg(\sum_{j=1}^{m} D_j D_j^\top\Bigg)^{-1}\Bigg(B + \sum_{j=1}^{m}C_jD_j\Bigg).
			\)
			\item The expected squared error satisfies
			\[
			\mathbb{E} [|\bm\phi_n - \bm\phi^*|^2] \leq c \frac{(\log n)^{p_1} (\log \log n)^{\frac{4}{3}}}{n^{\frac{1}{2}}}, \quad \text{if } x_0 \neq 0,
			\]
			\[
			\mathbb{E} [|\bm\phi_n - \bm\phi^*|^2] \leq c \frac{(\log n)^{p_1'} (\log \log n)^{\frac{4}{3}}}{n^{\frac{1}{4}}}, \quad \text{if } x_0 = 0,
			\]
where \(c\), \(p_1\), and \(p_1'\) are positive constants only dependent on the model parameters.
		\end{enumerate}
	\end{theorem}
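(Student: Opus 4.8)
The plan is to analyze the projected stochastic-approximation recursion \eqref{eq_phi_update} for $\bm\phi_n$ along the same route used for Theorems \ref{thm_Gamma_convergence} and \ref{thm_Gamma_rate}, so I will only spell out the parts that differ. Write $h^{(\bm\phi)}_n=\E[Y_n(T)\mid\g_n]$ for the mean increment, $\xi^{(\bm\phi)}_n=Y_n(T)-h^{(\bm\phi)}_n$ for the deviation, and $\beta^{(\bm\phi)}_n=\E[\xi^{(\bm\phi)}_n\mid\g_n]$ for the implementation (discretization) bias, as in Remark \ref{remark_discretization_error}. The first step is to show that $h^{(\bm\phi)}_n$ is an \emph{exact} contraction field toward $\bm\phi^*$. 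Applying It\^o's formula to $J(t,x_n(t);\bm\theta_n,\gamma_n)=-\tfrac12 k_1(t;\bm\theta_n)x_n(t)^2+k_3(t;\bm\theta_n,\gamma_n)$ in \eqref{eq_y_def}, using $\tfrac{\partial\log\pi}{\partial\bm\phi}(u\mid x;\bm\phi_n,\Gamma_n)=x\,\Gamma_n^{-1}(u-\bm\phi_n x)$ and $\tfrac{\partial p}{\partial\bm\phi}=0$ (the Gaussian entropy does not depend on the mean), and integrating over $u_n(t)\sim\N(\bm\phi_n x_n(t),\Gamma_n)$ with the moment identities $\E[v]=0$, $\E[v(D_j^\top v)]=\Gamma_n D_j$, $\E[v(D_j^\top v)^2]=0$, one sees that every term not coupled to $u_n$ as well as the martingale part of $\dd J$ drops out, leaving
\[
h^{(\bm\phi)}_n=-\Lambda_n\Big(\sum_{j=1}^m D_j D_j^\top\Big)(\bm\phi_n-\bm\phi^*),\qquad \Lambda_n:=\int_0^T k_1(t;\bm\theta_n)\,\E\big[x_n(t)^2\mid\g_n\big]\,\dd t>0 .
\]
Crucially the direction carries no $\gamma_n$ or $\Gamma_n$ correction, and $\Lambda_n$ depends on $\bm\theta_n$ only through $k_1(\cdot;\bm\theta_n)\ge 1/c_2$, so no convergence of the critic is needed.

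Next I would prove a variance bound for $Y_n(T)$ in the spirit of Lemma \ref{lemma_nosie_Gamma_app}: writing $\dd Y_n(t)=Y_n^{(1)}(t)\,\dd t+\sum_j Y_n^{(2,j)}(t)\,\dd W_n^{(j)}(t)$ after It\^o, estimating moments of $x_n(t)$ via \cite[Lemma B.1]{huang2024sublinear}, and using $\operatorname{tr}(\Gamma_n^{-1})\le l\,b_n$ (valid because $\Gamma_n-\tfrac1{b_n}I\in\mathbb{S}^l_{++}$ by \eqref{eq_projection_sets_actor}), one obtains
\[
\operatorname{Var}\big(Y_n(T)\mid\g_n\big)\ \le\ c\,b_n\big(1+|\bm\phi_n|^{q}+|\Gamma_n|^{q}+(\log b_n)^{q}+\gamma_n^{q}\big)\exp\{c\,|\bm\phi_n|^{4}\}
\]
for a suitable power $q$; the only new feature relative to the $\Gamma$-case is the extra factor $b_n$, coming from the $\Gamma_n^{-1}$ in the score. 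Substituting $c^{(\bm\phi)}_n=1\vee(\log\log n)^{1/6}$, $c^{(\Gamma)}_n=1\vee\log n$, $b_n=\Theta(n^{1/4})$, and using $\exp\{c(c^{(\bm\phi)}_n)^4\}\le(\log n)^{o(1)}$ together with $(c^{(\bm\phi)}_n)^8=(\log\log n)^{4/3}$, produces a monotone sequence $\hat\eta^{(\bm\phi)}_n=\Theta\big(n^{1/4}(\log n)^{p}(\log\log n)^{4/3}\big)$ playing the role of the noise sequence in the rate recursion.

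The split between $x_0\neq0$ and $x_0=0$ then reduces entirely to a lower bound on $\Lambda_n$, obtained from the closed ODE $\tfrac{\dd}{\dd t}\E[x_n(t)^2\mid\g_n]=\lambda_n\,\E[x_n(t)^2\mid\g_n]+\sum_j D_j^\top\Gamma_n D_j$ with $\lambda_n=2(A+B^\top\bm\phi_n)+\sum_j(C_j+D_j^\top\bm\phi_n)^2$ and initial value $x_0^2$. Since $\sum_j D_jD_j^\top>0$, $\lambda_n\to+\infty$ as $|\bm\phi_n|\to\infty$, so $\{\bm\phi_n:\lambda_n<0\}$ is bounded. If $x_0\neq0$ then $\E[x_n(t)^2\mid\g_n]\ge x_0^2 e^{\lambda_n t}\ge c\,x_0^2$ uniformly, hence $\Lambda_n\ge \hat c\,x_0^2$. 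If $x_0=0$ then $\sum_j D_j^\top\Gamma_n D_j\ge c/b_n$ because $\Gamma_n\succeq I/b_n$, and splitting into $\lambda_n\ge0$ (then $\E[x_n(t)^2\mid\g_n]\ge t\sum_j D_j^\top\Gamma_n D_j$) and $\lambda_n<0$ (then $|\lambda_n|$ is bounded on that bounded set and $\E[x_n(t)^2\mid\g_n]\ge\frac{1-e^{\lambda_n t}}{|\lambda_n|}\sum_j D_j^\top\Gamma_n D_j$) yields $\E[x_n(t)^2\mid\g_n]\ge c/b_n$ on a fixed subinterval of $[0,T]$, hence $\Lambda_n\ge\hat c/b_n$ — uniformly in $n$ and in $\bm\phi_n$. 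Set $\hat a_n:=a^{(\bm\phi)}_n$ when $x_0\neq0$ and $\hat a_n:=a^{(\bm\phi)}_n/b_n$ when $x_0=0$. Part (a) then follows as in Theorem \ref{thm_Gamma_convergence}: once $\bm\phi^*\in K^{(\bm\phi)}_n$ the projection inequality gives $\E[|\bm\phi_{n+1}-\bm\phi^*|^2\mid\g_n]\le(1+\kappa_n)|\bm\phi_n-\bm\phi^*|^2-\zeta_n+\eta_n$ with $\zeta_n\ge c\,\hat a_n\,\lambda_{\min}(\sum_j D_jD_j^\top)\,|\bm\phi_n-\bm\phi^*|^2\ge0$, $\sum\kappa_n<\infty$ (from $\sum a^{(\bm\phi)}_n|\beta^{(\bm\phi)}_n|<\infty$) and $\sum\eta_n<\infty$ (from $\sum (a^{(\bm\phi)}_n)^2\hat\eta^{(\bm\phi)}_n=\Theta(\sum n^{-5/4}\,\mathrm{polylog})<\infty$); then \cite[Theorem 1]{robbins1971convergence} gives a.s. convergence of $|\bm\phi_n-\bm\phi^*|^2$ and $\sum\zeta_n<\infty$, and since $\sum\hat a_n=\infty$ in either case ($\sum n^{-3/4}=\infty$, resp.\ $\sum n^{-1}=\infty$) a strictly positive limit is contradicted. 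For part (b) I would mimic Theorem \ref{thm_Gamma_rate}: verify $\hat a_n\le\hat a_{n+1}(1+w\hat a_{n+1})$ for small $w>0$ via Lemma \ref{lemma_an_app} (applicable since $\hat a_n$ has the form $\alpha^{\rho}(n+\beta)^{-\rho}$ with $\rho=3/4$, resp.\ $\rho=1$), check that the discretization-error and expanding-set terms stay non-decreasing after the $b_n$-rescaling (this is where $\Delta t_n=T(n+1)^{-5/8}$ being small enough and the monotonicity of $\{b_n^3|\beta^{(\bm\phi)}_n|^2/a^{(\bm\phi)}_n\}$ enter), run the induction $\rho_{n+1}\le(1-c\hat a_n)\rho_n+c\hat a_n^2\tilde\eta_n$ to obtain $\rho_n\le c'\hat a_n\tilde\eta_n$, and substitute: with $\tilde\eta_n=\hat\eta^{(\bm\phi)}_n$ for $x_0\neq0$ this gives $\rho_n\lesssim a^{(\bm\phi)}_n\hat\eta^{(\bm\phi)}_n=\Theta\big(n^{-3/4}\cdot n^{1/4}(\log n)^{p_1}(\log\log n)^{4/3}\big)=O\big(n^{-1/2}(\log n)^{p_1}(\log\log n)^{4/3}\big)$, while with $\tilde\eta_n=b_n^2\hat\eta^{(\bm\phi)}_n$ for $x_0=0$ it gives $\rho_n\lesssim \tfrac{a^{(\bm\phi)}_n}{b_n}\,b_n^2\hat\eta^{(\bm\phi)}_n=a^{(\bm\phi)}_n b_n\hat\eta^{(\bm\phi)}_n=\Theta\big(n^{-3/4}\cdot n^{1/4}\cdot n^{1/4}(\log n)^{p_1'}(\log\log n)^{4/3}\big)=O\big(n^{-1/4}(\log n)^{p_1'}(\log\log n)^{4/3}\big)$.

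I expect the $x_0=0$ case to be the main obstacle. There the ``signal'' $\Lambda_n$ driving $\bm\phi_n$ toward $\bm\phi^*$ is only of order $1/b_n\to0$, since the exploratory state started at the origin has size $O(\sqrt{|\Gamma_n|})$; proving the \emph{uniform} (in $n$ and $\bm\phi_n$) lower bound $\Lambda_n\gtrsim1/b_n$ requires precisely the sign analysis of $\lambda_n$ together with the fact that $\lambda_n$ can be negative only on a bounded set of $\bm\phi_n$. Meanwhile the noise variance does not shrink — it stays of order $b_n$ because of the $\Gamma_n^{-1}$ in the score — so the effective step-to-noise ratio worsens and the attainable rate degrades from $n^{-1/2}$ to $n^{-1/4}$. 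A secondary, routine difficulty is the bookkeeping of the discretization bias $\beta^{(\bm\phi)}_n$ and of the expanding projection sets $K^{(\bm\phi)}_n$, handled exactly as in Remark \ref{remark_discretization_error} and Theorem \ref{thm_Gamma_rate} by taking $\Delta t_n$ small enough that $\sum\hat a_n|\beta^{(\bm\phi)}_n|<\infty$ and the relevant rescaled sequences are non-decreasing.
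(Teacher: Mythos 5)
Your proposal is correct and follows essentially the same route as the paper: you derive the same mean-increment identity $h^{(\bm\phi)}_n=-\bigl(\sum_j D_jD_j^\top\bigr)\Lambda_n(\bm\phi_n-\bm\phi^*)$ (the paper's \eqref{eq_h1_app}--\eqref{eq_l_definition}), bound $\Lambda_n$ below by a constant when $x_0\neq0$ and by $c/b_n$ (via $\Gamma_n\succeq b_n^{-1}I$) when $x_0=0$, and then reuse the Robbins--Siegmund argument for (a) and the effective-learning-rate $\hat a_n=a^{(\bm\phi)}_n/b_n$ recursion of Theorem \ref{thm_Gamma_rate} for (b), exactly as the paper prescribes. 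The only difference is that you spell out details the paper delegates to \cite{huang2024sublinear} (the second-moment ODE with the sign analysis of $\lambda_n$, and the $b_n$-scaled variance bound for $Y_n(T)$), and your bookkeeping reproduces the stated $n^{-1/2}$ and $n^{-1/4}$ rates.
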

	
	\begin{proof}
		When \(x_0 \neq 0\), the proof is similar to that of \cite[Theorem 4.1]{huang2024sublinear} with only minor modifications needed. When \(x_0 = 0\), the proof becomes more delicate as the convergence behavior of \(\bm\phi\) now depends explicitly on the actor exploration parameter \(\Gamma\). Here, we  highlight the differences needed.
		
		Denoted
		\[
		h^{(\bm\phi)}(\bm\phi_n, \Gamma_n; \bm\theta_n, \gamma_n) = \E[Y_n(T) \mid \bm\theta_n, \bm{\phi}_n, \gamma_n, \Gamma_n],
		\]
		and the noise
		\[
		\xi^{(\bm\phi)}_{n} = Y_n(T) - h^{(\bm\phi)}(\bm\phi_n, \Gamma_n; \bm\theta_n, \gamma_n).
		\]
The updating rule \eqref{eq_phi_update} for \(\bm\phi\) is now
		\begin{equation}
			\label{eq_phi_update_app}
			\bm\phi_{n+1} = \Pi_{K^{(\bm\phi)}_{n+1}} (\bm\phi_n + a^{(\bm\phi)}_n[h^{(\bm\phi)}(\bm\phi_n, \Gamma_n; \bm\theta_n. \gamma_n) + \xi^{(\bm\phi)}_{n}]).
		\end{equation}
		
		Similar to \cite[Section B]{huang2024sublinear}, we obtain
		\begin{equation}
			\label{eq_h1_app}
			\begin{aligned}
				h^{(\bm\phi)}(\bm\phi_n, \Gamma_n; \bm\theta_n, \gamma_n) = -l(\bm\phi_n, \Gamma_n; \bm\theta_n, \gamma_n)(\bm\phi_n - \bm\phi^*),
			\end{aligned}
		\end{equation}
		where
		\begin{equation}
			\label{eq_l_definition}
			l(\bm\phi_n, \Gamma_n; \bm\theta_n, \gamma_n) = (\Sj D_j D_j^\top) \int_0^T k_1(t; \bm\theta_n)\mathbb{E} [x_n(t)^2] \dd t.
		\end{equation}
		
		When \(x_0 \neq 0\), we have \(l(\bm\phi_n, \Gamma_n; \bm\theta_n, \gamma_n) \geq \bar{c} I\), where \(0 < \bar{c}' < 1\) is independent of \(\Gamma_n\). The same proof of \cite[Theorem 4.1]{huang2024sublinear} then applies.
		
		However, when \(x_0 = 0\),
		\begin{equation}
			\label{eq_h1_l_lower_bound_eq0_app}
			\begin{aligned}
				l(\bm\phi_n, \Gamma_n; \bm\theta_n, \gamma_n) &\geq (\Sj D_j D_j^\top) \int_0^T \frac{c' |\Gamma_n| t}{c_2}  \dd t = \frac{(\Sj D_j D_j^\top) T^2 c'}{2 c_2} |\Gamma_n| \geq \bar{c}' |\Gamma_n| I.
			\end{aligned}
		\end{equation}
Thus \(l\) no longer admits a uniform lower bound strictly away from \(\bm{0}\) because \(\Gamma_n\rightarrow \bm{0}\). 
To get around, for establishing part (a), we make use of the condition \(\sum \frac{a^{(\bm\phi)}_n}{b_n} = \infty\), which ensures that the argument  in proving \cite[Theorem B.3]{huang2024sublinear} remains valid. (Notice that the hyperparameters \(a^{(\bm\phi)}_n\) and \(b_n\) specified in Theorem \ref{thm_convergence_and_rate_gamma} also satisfy this condition.) On the other hand, for proving part (b), we reinterpret \(\hat{a}_n = \frac{a^{(\bm\phi)}_n}{b_n}\) as the effective learning rate and follow the strategy outlined in the proof of Theorem \ref{thm_Gamma_rate}. This adjustment allows the analysis in \cite[Appendix B.6]{huang2024sublinear} to be carried over to our setting.
	\end{proof}

	The established convergence of the learned policy underpins the regret analysis of Algorithm \ref{algo_rl_lq}. Theorem \ref{thm_convergence_and_rate_phi} posits that the MSE of \(\bm\phi_n\) depends on whether the initial state \(x_0\) is zero or nonzero; yet both cases ultimately lead to the same sublinear regret bound, which will be shown in the next subsection.

	\subsection{Regret Bound}
	\label{sec_regret}
	
	The regret quantifies the (average) difference between the value function of the learned policy and that of the oracle optimal policy in the long run. 	
	For a stochastic Gaussian policy \(\pi = \mathcal{N}(\cdot|\bm\phi x, \Gamma)\), we denote
	\begin{equation}
		\begin{aligned}
			\label{jbar}
			\Bar{J}(\bm\phi, \Gamma) = \mathbb{E} \biggl[& \int_0^T \left(-\frac{1}{2}Qx^\pi(s)^2  \right) \mathrm{d}s - \frac{1}{2}Hx^\pi(T)^2 \Big| x^\pi(0)=x_0 \biggr].
		\end{aligned}
	\end{equation}
	
	This function evaluates policy \(\mathcal{N}(\cdot|\bm\phi x, \Gamma)\) under the original objective \textit{without} entropy regularization. The oracle value of the original problem is \(\Bar{J}(\bm\phi^*, \bm{0})\).

	\begin{theorem}
		\label{thm_regret}
		Under the same setting of Theorem \ref{thm_convergence_and_rate_gamma}, Algorithm \ref{algo_rl_lq} leads to the following regret bounds over \(N\) iterations:
		\[
		\begin{aligned}
			&\sum_{n=1}^{N}\mathbb{E} \big[\Bar{J}(\bm\phi^*, \bm{0}) - \Bar{J}(\bm\phi_n, \Gamma_n)\big]
			\leq c + c N^{\frac{3}{4}}(\log N)^{\frac{p_1+p_2}{2}+1} (\log\log N), \quad \text{if } x_0\neq0,\\
			&\sum_{n=1}^{N}\mathbb{E} \big[\Bar{J}(\bm\phi^*, \bm{0}) - \Bar{J}(\bm\phi_n, \Gamma_n)\big]
			\leq c+c N^{\frac{3}{4}} (\log N)^{\frac{p_1'+p_2}{2} \vee (p_1'+\frac{3}{2})} (\log\log N), \quad \text{if } x_0= 0,
		\end{aligned}
		\]
		where \(c > 0\) is a constant independent of \(N\), and \(p_1, p_1', p_2\) are the same constants given in Theorems \ref{thm_convergence_and_rate_gamma} and \ref{thm_convergence_and_rate_phi}.
	\end{theorem}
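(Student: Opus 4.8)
The plan is to convert the per-iteration suboptimality gap $\bar J(\bm\phi^*,\bm{0})-\bar J(\bm\phi_n,\Gamma_n)$ into the learning errors $|\bm\phi_n-\bm\phi^*|^2$ and $|\Gamma_n|$, and then sum the resulting bound using the mean-squared-error rates already established in Theorems \ref{thm_convergence_and_rate_gamma} and \ref{thm_convergence_and_rate_phi}. First I would obtain a workable expression for $\bar J$. Applying Ito's lemma to $(x^\pi(t))^2$ under $\pi=\mathcal N(\cdot\mid\bm\phi x,\Gamma)$, the second moment $m^{\bm\phi,\Gamma}(t):=\E[(x^\pi(t))^2]$ solves the scalar linear ODE $\tfrac{d}{dt}m^{\bm\phi,\Gamma}(t)=\rho(\bm\phi)\,m^{\bm\phi,\Gamma}(t)+\Sj D_j^\top\Gamma D_j$ with $m^{\bm\phi,\Gamma}(0)=x_0^2$, where $\rho(\bm\phi)=2A+2B^\top\bm\phi+\Sj(C_j+D_j^\top\bm\phi)^2$. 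Hence $\bar J(\bm\phi,\Gamma)=-\tfrac12 Q\int_0^T m^{\bm\phi,\Gamma}(t)\,dt-\tfrac12 H\,m^{\bm\phi,\Gamma}(T)$, which splits into a part driven by $x_0^2$ through $e^{\rho(\bm\phi)\,\cdot}$ and a nonnegative part that is linear in the ``exploration source'' $\Sj D_j^\top\Gamma D_j\le c|\Gamma|$.

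The second step is a per-step regret bound. The function $\rho$ is a strictly convex quadratic with Hessian $2\,\Sj D_j D_j^\top\succ 0$ and unique minimizer $\bm\phi^*$, so one has the exact identity $\rho(\bm\phi)-\rho(\bm\phi^*)=(\bm\phi-\bm\phi^*)^\top(\Sj D_j D_j^\top)(\bm\phi-\bm\phi^*)$, comparable to $|\bm\phi-\bm\phi^*|^2$; since $m^{\bm\phi,\Gamma}$ is monotone increasing both in $\rho(\bm\phi)$ and in the source, $(\bm\phi^*,\bm{0})$ is genuinely the oracle (the case $Q=H=0$ being trivial). Using $e^a-e^b\le(a-b)e^a$ for $a\ge b$, together with the a priori bound $|\bm\phi_n|\le c^{(\bm\phi)}_n$ (hence $\rho(\bm\phi_n)\le c(1+(c^{(\bm\phi)}_n)^2)$) and the moment estimates on $x_n(\cdot)$ from \cite[Lemma B.1]{huang2024sublinear}, I would arrive at
\[
\bar J(\bm\phi^*,\bm{0})-\bar J(\bm\phi_n,\Gamma_n)\ \le\ c\,e^{c(c^{(\bm\phi)}_n)^2}\bigl(x_0^2\,|\bm\phi_n-\bm\phi^*|^2+|\Gamma_n|\bigr).
\]
When $x_0\neq 0$ both terms are present; when $x_0=0$ the first term drops out of the bound, so the regret is entirely the (vanishing) cost of the injected exploration — this is the structural reason the two cases are handled separately, and why the $x_0=0$ case hinges on the rate of $\Gamma_n$ rather than on curvature in $\bm\phi$.

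The third step is summation. Taking expectations, using Jensen's inequality $\E|\Gamma_n|\le(\E|\Gamma_n|^2)^{1/2}$, and inserting the rates from Theorems \ref{thm_convergence_and_rate_gamma}-(b) and \ref{thm_convergence_and_rate_phi}-(b) gives $\E|\Gamma_n|=O\bigl(n^{-1/4}(\log n)^{p_2/2}(\log\log n)^{2/3}\bigr)$ and $\E|\bm\phi_n-\bm\phi^*|^2=O\bigl(n^{-1/2}(\log n)^{p_1}(\log\log n)^{4/3}\bigr)$ (resp. $O\bigl(n^{-1/4}(\log n)^{p_1'}(\log\log n)^{4/3}\bigr)$) for $x_0\neq 0$ (resp. $x_0=0$). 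Because $c^{(\bm\phi)}_n=1\vee(\log\log n)^{1/6}$, the factor $e^{c(c^{(\bm\phi)}_n)^2}$ is sub-polylogarithmic in $n$ and is absorbed into the $(\log n)$ slack. Then, via $\sum_{n\le N}n^{-q}(\log n)^a(\log\log n)^b\le c\,N^{1-q}(\log N)^a(\log\log N)^b$ for $q\in(0,1)$, the $\Gamma_n$ contribution sums to the dominant term of order $N^{3/4}$ up to logarithmic factors, while the $\bm\phi_n$ contribution is of lower order $N^{1/2}$ when $x_0\neq 0$ and of the same order $N^{3/4}$ when $x_0=0$; collecting the logarithmic factors yields the stated exponents, with the leading additive constant $c$ absorbing the finitely many early iterations where the per-step gap is controlled only by the (slowly growing) projection radii. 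Since the regret is expressed through the continuous-time functional $\bar J$ evaluated at $(\bm\phi_n,\Gamma_n)$, no further discretization error enters here — it has already been accounted for in the convergence theorems through $\Delta t_n$ (cf. Remark \ref{remark_discretization_error}).

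The main obstacle is the second step: one must verify that every constant multiplying $|\bm\phi_n-\bm\phi^*|^2$ and $|\Gamma_n|$ — arising from $e^{\rho(\bm\phi_n)T}$, from the ratios $\tfrac{e^{\rho t}-1}{\rho}$, and from the higher-moment estimates of $x_n(\cdot)$ — grows at most (sub-)polylogarithmically in $n$. This is precisely why the projection radius $c^{(\bm\phi)}_n$ must be taken to grow as slowly as $(\log\log n)^{1/6}$: a faster growth would make $e^{c(c^{(\bm\phi)}_n)^2}$ polynomially large and destroy the $N^{3/4}$ rate. The $x_0=0$ case is the more delicate of the two, since there is no lower bound on the curvature of $\bar J$ in $\bm\phi$ (the oracle value is zero and every zero-variance policy attains it), so the bound must be carried entirely by the coupling between $\Gamma_n$ and the second moment of $x_n$, using the MSE rate of $\Gamma_n$ from Theorem \ref{thm_convergence_and_rate_gamma}-(b).
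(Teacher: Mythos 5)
Your proposal is correct and follows essentially the same route as the paper: the closed-form representation $\Bar{J}(\bm\phi,\Gamma)=f(a(\bm\phi))+\bigl(\sum_{j} D_j^\top\Gamma D_j\bigr)g(a(\bm\phi))$ (which you re-derive from the second-moment ODE rather than citing the prior paper's lemmas), the decomposition of the per-iteration gap into a $\bm\phi$-error part and $\Gamma$-cost parts, and summation of the MSE rates from Theorems \ref{thm_convergence_and_rate_gamma} and \ref{thm_convergence_and_rate_phi}. The only minor difference is that you absorb the cross term $\bigl(\sum_{j} D_j^\top\Gamma_n D_j\bigr)\bigl[g(a(\bm\phi^*))-g(a(\bm\phi_n))\bigr]$ almost surely via the slowly growing projection radii (making the factor $e^{c(c^{(\bm\phi)}_n)^2}$ sub-polylogarithmic) instead of the paper's H\"older/Cauchy--Schwarz argument in expectation, which still yields, with if anything slightly smaller logarithmic exponents, the stated $O(N^{3/4})$ bounds in both the $x_0\neq0$ and $x_0=0$ cases.
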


	
	


	\begin{proof}
		Following \cite[Lemma B.7, B.8]{huang2024sublinear}, we have
		\begin{equation}
			\label{eq_j_bar}
			\Bar{J}(\bm\phi, \Gamma) = f(a(\bm\phi)) + (\Sj D_j^\top \Gamma D_j) g(a(\bm\phi)),
		\end{equation}
		where
		\begin{equation}
			\label{eq_a_phi}
			a(\bm\phi)= 2A + 2B^\top \bm\phi + \Sj (C_j^2 + 2C_jD_j^\top\bm\phi + D_j^\top\bm\phi\bm\phi^\top D_j),
		\end{equation}
		and
		\begin{equation}
			\label{eq_f_definition}
			f(a) =
			\begin{cases}
				\frac{x_0^2(-H - QT)}{2} & \text{if } a = 0, \\
				\frac{1}{2a}(Q - e^{aT}Q - He^{aT}a)x_0^2 & \text{if } a \neq 0,
			\end{cases}
		\end{equation}
		\begin{equation}
			\label{eq_g_definition}
			g(a) =
			\begin{cases}
				\frac{T(-2H - QT)}{4} & \text{if } a = 0, \\
				\frac{1}{2a^2}(QTa + Q + Ha - e^{aT}Q - He^{aT}a) & \text{if } a \neq 0.
			\end{cases}
		\end{equation}
			Clearly, both $f$ and $g$ are continuously differentiable, non-increasing, and strictly negative everywhere.
		
		Next, we present the proofs under  \(x_0 \neq 0\) and \(x_0 = 0\).
		
		\textbf{Case I: \(x_0 \neq 0.\)}		
		By \eqref{eq_j_bar}, we have
		\begin{equation}
			\label{eq_all_terms}
			\begin{aligned}
				\Bar{J}(\bm\phi^*, 0) - \Bar{J}(\bm\phi_n, \Gamma_n)
				=& [f(a(\bm\phi^*)) - f(a(\bm\phi_n))]-(\Sj D_j^\top \Gamma_n D_j)g(a(\bm\phi^*))  \\
				&+ (\Sj D_j^\top \Gamma_n D_j)[g(a(\bm\phi^*)) - g(a(\bm\phi_n))].
			\end{aligned}
		\end{equation}
The goal is to establish bounds for the three terms on the right-hand side of \eqref{eq_all_terms}, adapting the arguments of the proof  of \cite[Theorem 4.2]{huang2024sublinear} to the current setting. 		
		The first term, \(f(a(\bm\phi^*)) - f(a(\bm\phi_n))\), can be bounded exactly as in the proof  of \cite[Theorem 4.2]{huang2024sublinear}, since it does not involve \(\Gamma_n\). For the second term, \(\left(\sum_j D_j^\top \Gamma_n D_j\right)g(a(\bm\phi^*))\), and the third term, \(\left(\sum_j D_j^\top \Gamma_n D_j\right)[g(a(\bm\phi^*)) - g(a(\bm\phi_n))]\), although \(\Gamma_n\) is now stochastic rather than deterministic, the bounds can still be derived by utilizing part (b) of Theorem \ref{thm_convergence_and_rate_gamma} and applying the H\"older and Cauchy--Schwarz inequalities. These modifications allow the original argument in the proof  of \cite[Theorem 4.2]{huang2024sublinear} to follow through.
		
		
		\textbf{Case II: \(x_0 = 0\).} 		
		It follows from \eqref{eq_f_definition} that $f(a(\bm\phi))=0$ for any $\bm\phi$. Hence  		\begin{equation}
			\label{eq_all_terms2}
			\begin{aligned}
				&\bar{J}(\bm\phi^*, 0) - \bar{J}(\bm\phi_n, \Gamma_n)
				= -(\Sj D_j^\top \Gamma_n D_j)g(\bm\phi^*) + (\Sj D_j^\top \Gamma_n D_j)(g(\bm\phi^*) - g(\bm\phi_n)).
			\end{aligned}
		\end{equation}
		The bound for the term \(\left(\sum_j D_j^\top \Gamma_n D_j\right)g(a(\bm\phi^*))\) can be derived  exactly  as with the second term in Case I. Moreover, utilizing the MSE of \(\bm\phi_n\) for \(x_0=0\), we can also derive the bound for  the other term, \((\sum_j D_j^\top \Gamma_n D_j)(g(\bm\phi^*) - g(\bm\phi_n))\), analogously to Case I.
		
	\end{proof}

	\begin{remark}
		\label{remark_discretization_error}
		Discretization errors arising from the time step size \(\Delta t_n\)  are captured by the terms \(\beta^{(\bm\phi)}_n\) and \(\beta^{(\Gamma)}_n\). Theorems \ref{thm_Gamma_convergence} - \ref{thm_convergence_and_rate_phi} suggest that \(\beta^{(\bm\phi)}_n\) and \(\beta^{(\Gamma)}_n\) should be set to decrease at the order of \(n^{-\frac{3}{8}}\), which can be achieved by setting \(\Delta t_n=T(n+1)^{-\frac{5}{8}}\). We omit the details here, but refer to \cite{kloedennumerical,szpruch2024optimal,huang2024sublinear} for discussions on  time-discretization analyses.
	\end{remark}

	\section{Numerical Experiments}
	\label{sec_experiments}
	
	This section reports four numerical experiments to evaluate the performance of our proposed exploratory adaptive LQ-RL algorithm. The first one validates our theoretical findings by examining the convergence behaviors of \(\bm\phi\) and \(\Gamma\) and confirming the sublinear regret bound. The second one compares our model-free approach with a model-based benchmark (\cite{szpruch2024optimal}) adapted to our setting that incorporates state- and control-dependent volatility. The third one evaluates the effect of exploration strategies by comparing our approach to a deterministic exploration schedule \cite{huang2024sublinear}. We examine scenarios where exploration parameters are either overly conservative or excessively aggressive, demonstrating the advantages of dynamically adjusting exploration based on observed data rather than relying on predefined schedules. The final experiment extends the third one by introducing random model parameters and random initial exploration parameter values, capturing real-world situations where no prior knowledge of the environment is available. This last experiment further demonstrates the effectiveness of the data-driven exploration mechanism in adapting to the environment.

	For all the experiments, we set the control dimension and Brownian motion dimension to be \(l = 1\) and \(m = 1\). Under this setting, the LQ dynamics are simplified to
	\begin{equation}
		\label{eq_classical_dynamics_one_dim}
		\mathrm{d}x^u(t) = (A x^u(t) + B u(t)) \mathrm{d}t + (C x^u(t) + D u(t)) \mathrm{d}W(t).
	\end{equation}
	Moreover, we set all the model parameters \(A, B, C, D, Q, H, x_0,\) and \(T\) to be 1 and use a time step of \(\Delta t = 0.01\) for the first three experiments.
	See also Appendix \ref{sec_experiment_details} for the replicability of our experiments.

	\subsection{Numerical Validation of Theoretical Results}
	
	To validate the theoretical results on the convergence rates and regret bounds, we focus on the actor parameters \(\Gamma\) and \(\bm\phi\). We conduct 100 independent runs, each consisting of 100,000 iterations to capture long-term behavior. Convergence rates and regret are plotted in Figure \ref{fig:mf} using a log-log scale, where the logarithm of the MSE (for convergence) or regret is plotted against the logarithm of the number of iterations.
	
	\begin{figure}[ht]
		\centering
		\begin{subfigure}{0.48\textwidth}
			\includegraphics[width=\linewidth]{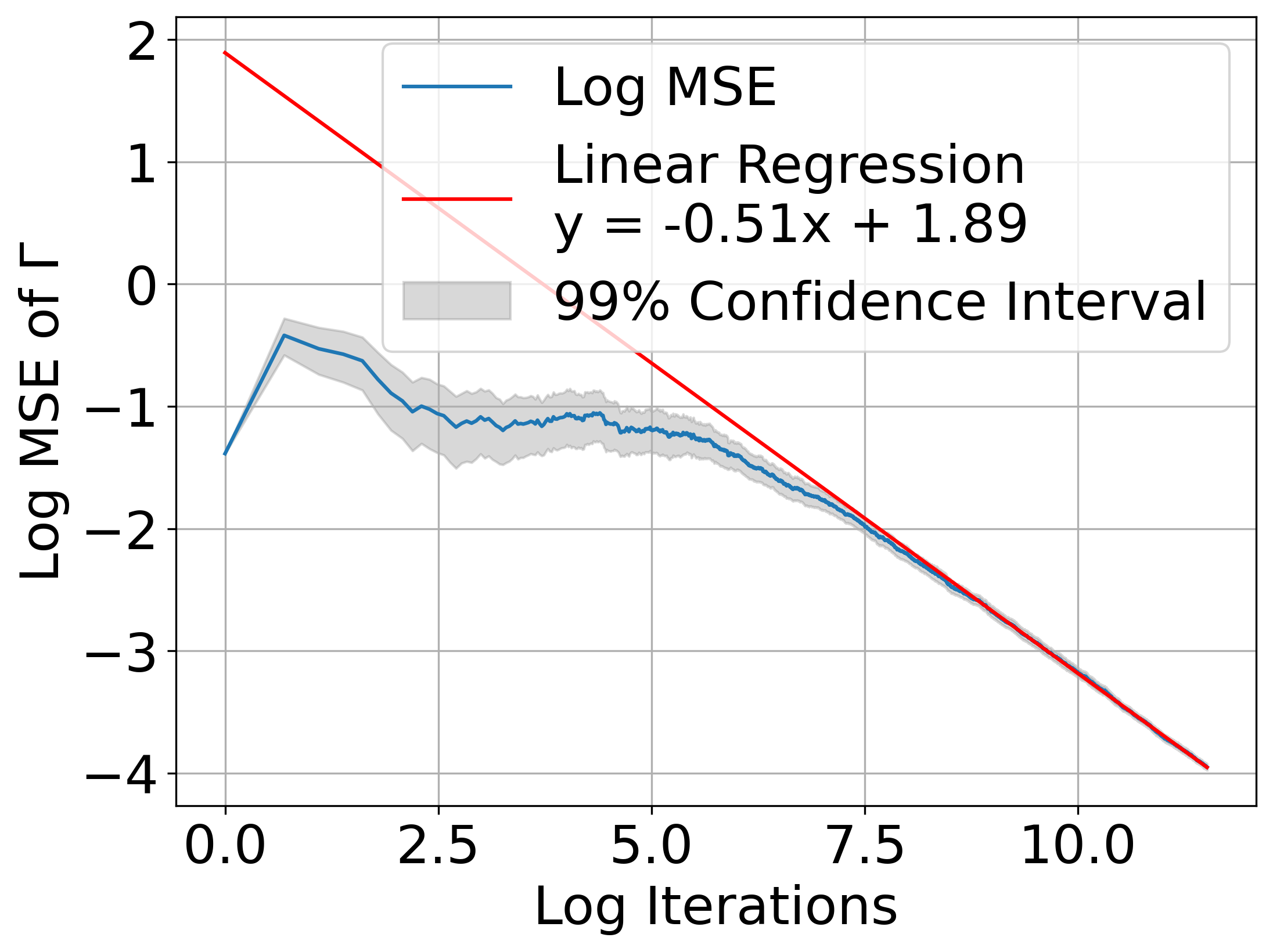}
			\caption{MSE of $\Gamma$}
			\label{figure_mf_1}
		\end{subfigure}\hfill
		\begin{subfigure}{0.48\textwidth}
			\includegraphics[width=\linewidth]{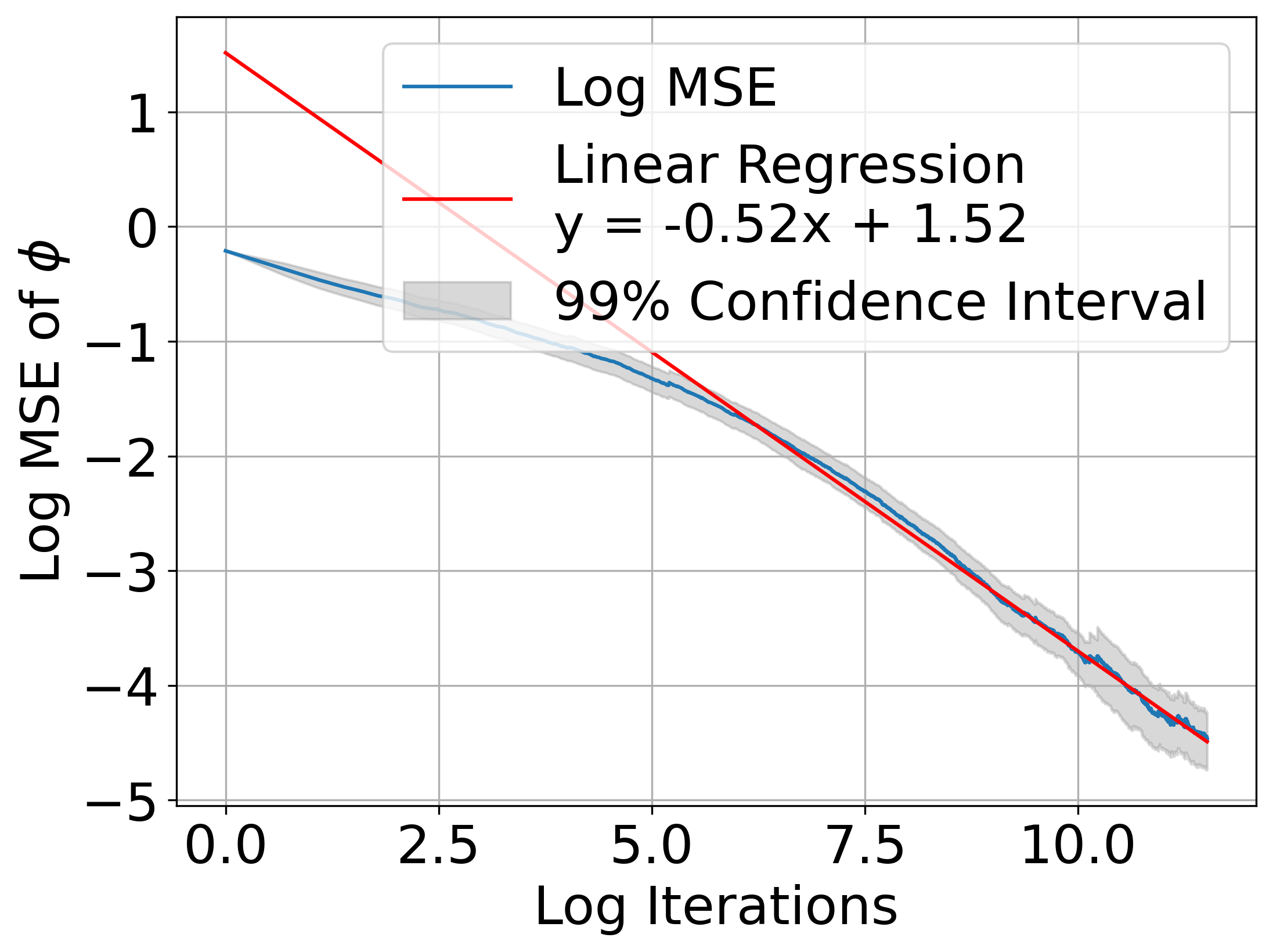}
			\caption{MSE of $\bm\phi$}
			\label{figure_mf_2}
		\end{subfigure}
		\\ 
		\begin{subfigure}{0.48\textwidth}
			\centering
			\includegraphics[width=\linewidth]{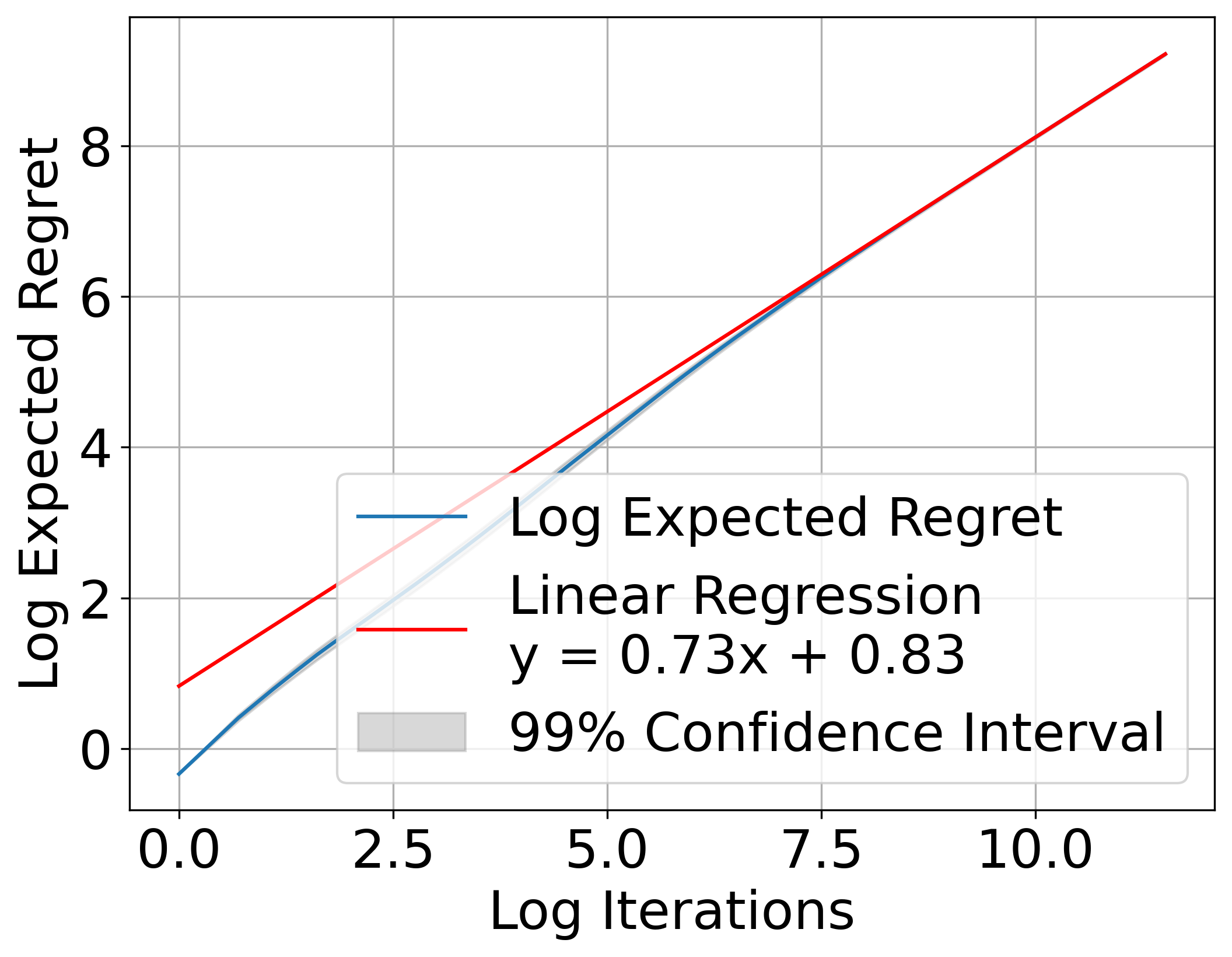}
			\caption{Regret}
			\label{figure_mf_3}
		\end{subfigure}
		\caption{\textbf{ Log-log plot of Algorithm \ref{algo_rl_lq}.} }
		\label{fig:mf}
	\end{figure}
	
	Figures \ref{figure_mf_1} and \ref{figure_mf_2} demonstrate that the convergence rates for \(\Gamma\) and \(\bm\phi\) are \(-0.51\) and \(-0.52\), respectively, which closely match the theoretical results in Theorems \ref{thm_convergence_and_rate_gamma} and \ref{thm_convergence_and_rate_phi} respectively. Additionally, Figure \ref{figure_mf_3} shows a regret slope of \(0.73\), aligning well with Theorem \ref{thm_regret}. Overall, these numerical results are consistent with the theoretical analysis, confirming  the long-term effectiveness of our learning algorithm.

	\subsection{Model-Free vs. Model-Based}
	
	Under the same setting, we compare our model-free LQ-RL algorithm with data-driven exploration to the recently developed model-based method with a deterministic exploration schedule \cite{szpruch2024optimal}, which estimates parameters \(A\) and \(B\) in the drift term under the assumption of a constant volatility. To cover the settings with state- and control-dependent volatilities, \cite{huang2024sublinear} modified  the algorithm in \cite{szpruch2024optimal} by incorporating estimations of also \(C\) and \(D\), which we adopt in our experiment. 
	
	\begin{figure}[ht]
		\centering
		\begin{subfigure}{0.48\textwidth}
			\includegraphics[width=\linewidth]{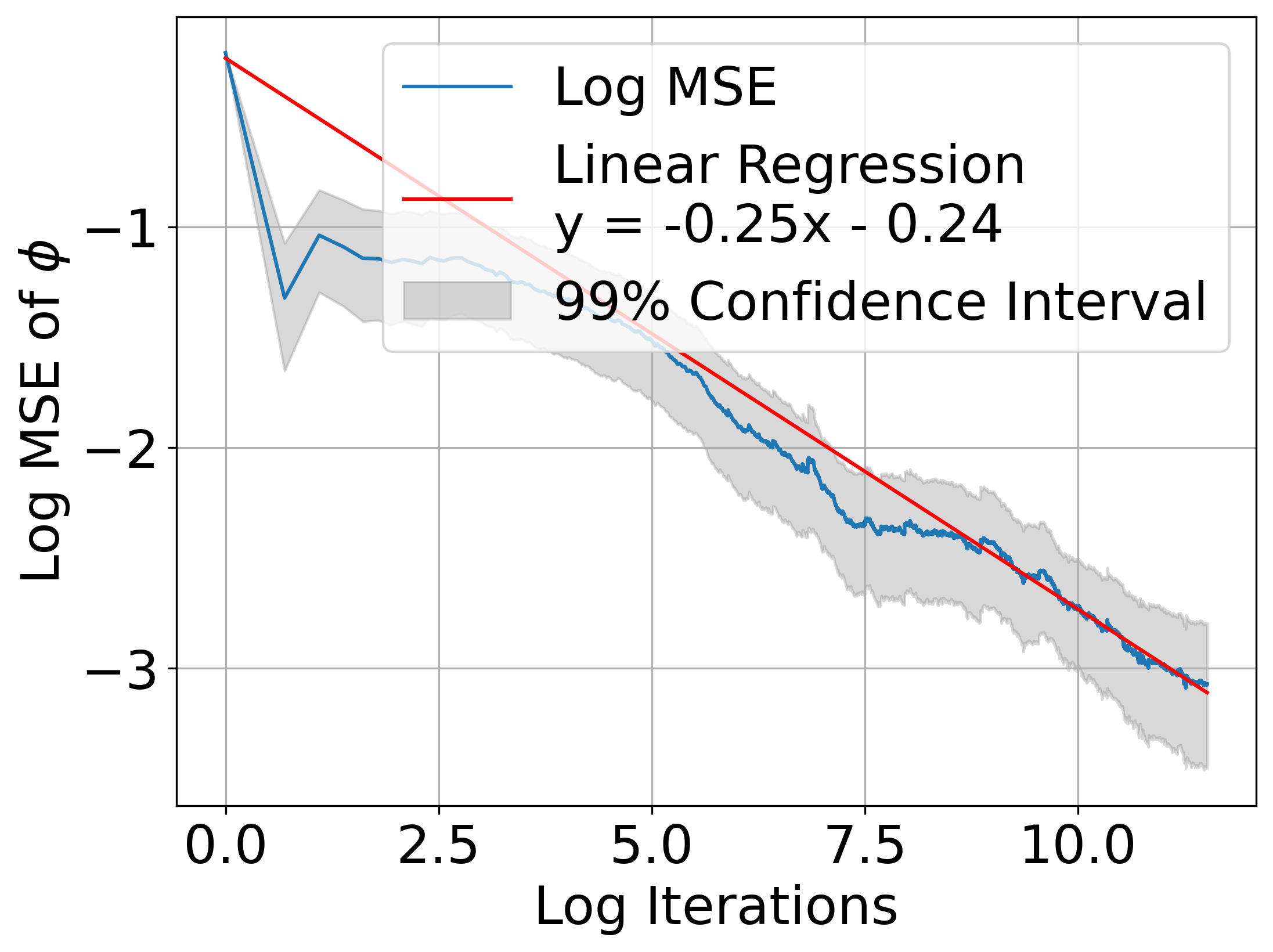}
			\caption{MSE of $\bm\phi$ in model-based algorithm}
			\label{figure_mb_1}
		\end{subfigure}\hfill
		\begin{subfigure}{0.48\textwidth}
			\includegraphics[width=\linewidth]{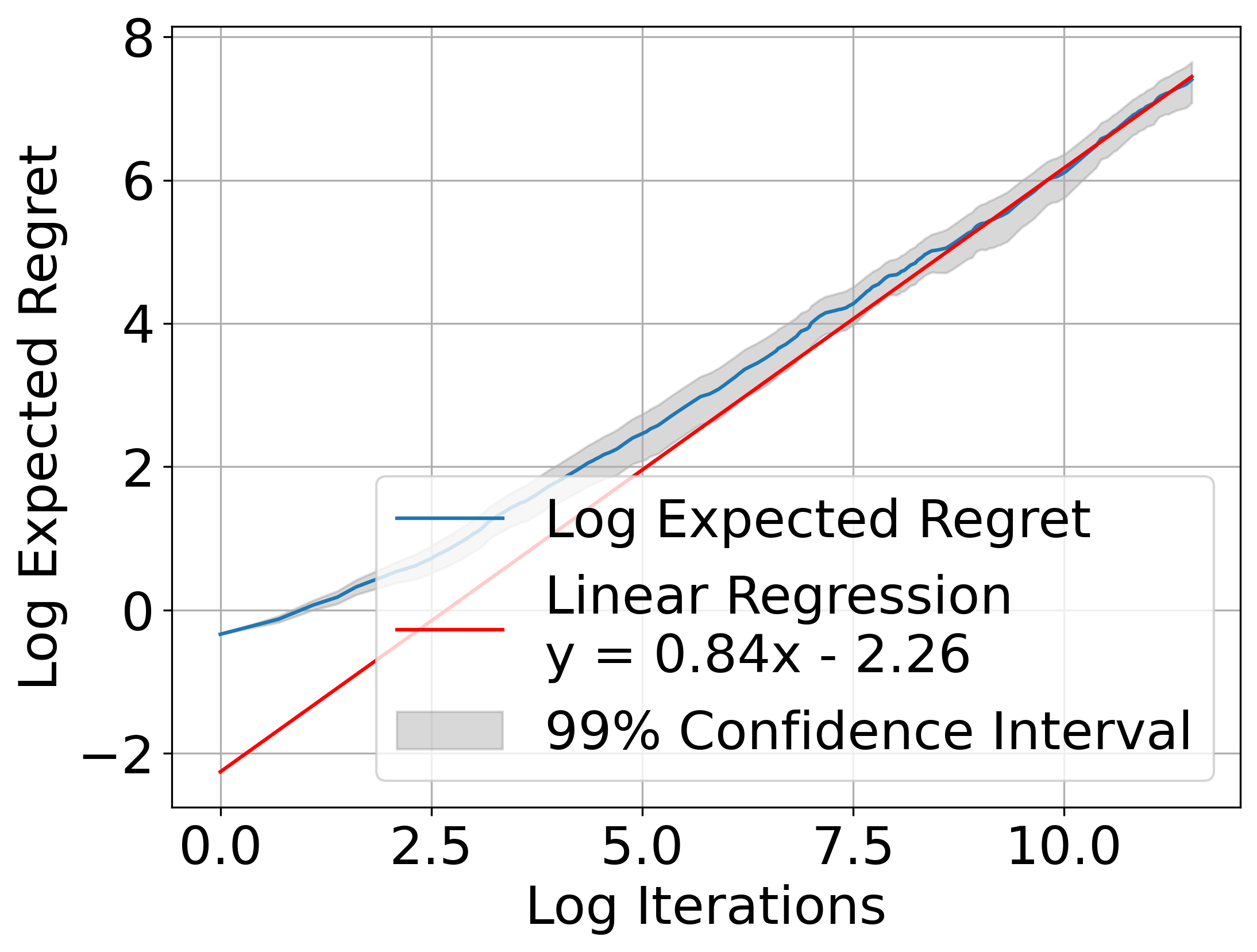}
			\caption{Regret of model-based algorithm}
			\label{figure_mb_3}
		\end{subfigure}
		\caption{\textbf{Log-log plot of model-based LQ-RL algorithm with fixed exploration schedule.} }
		\label{fig:mb}
	\end{figure}
	
	Figure \ref{figure_mb_1} shows that the model-based counterpart exhibits a significantly slower convergence rate for \(\bm\phi\) with a slope of \(-0.25\). Moreover, Figure \ref{figure_mb_3} indicates a worse regret with a slope of \(0.84\), considerably larger than that  of ours.\footnote{\(\Gamma\) follows a fixed schedule in the model-based benchmark; hence its plot is uninformative and  omitted.}

	\subsection{Adaptive vs. Fixed Explorations}
	\label{sec_exploration_experiment}
	
	Now, we compare two model-free continuous-time LQ-RL algorithms: ours with data-driven exploration (LQRL\_Adaptive) and the one by \cite{huang2024sublinear} with fixed exploration schedules (LQRL\_Fixed). Both algorithms follow the same fundamental framework, with the key distinction being how exploration is carried out. In Section \ref{sec_limitations}, we outlined several drawbacks associated with deterministic exploration schedules. To illustrate them more concretely, we analyze two scenarios where a pre-determined exploration is either excessive or insufficient.
	
	For comparison, we examine the trajectories of the actor exploration parameter \(\Gamma\) and the cumulative regrets over iterations.\footnote{The temperature parameter \(\gamma\) is not compared, as it remains fixed in \cite{huang2024sublinear}.} Both algorithms share exactly the same experimental setup, and we conduct 1,000 independent runs to observe the overall performances.
	
	\paragraph{Excessive Exploration with a Near-Optimal Policy}
	We first consider a scenario where the policy parameter is initialized at \(\bm\phi_0 = -1.8\), which is close to its optimal value \(\bm\phi^* = -2\). As the initial policy is already nearly optimal, exploration is not highly necessary. However, the initial exploration levels by both the critic and actor are set excessively high at \(\gamma_0 = \Gamma_0 = 20\).
	
	\begin{figure}[ht]
		\centering
		\begin{subfigure}{0.48\textwidth}
			\includegraphics[width=\linewidth]{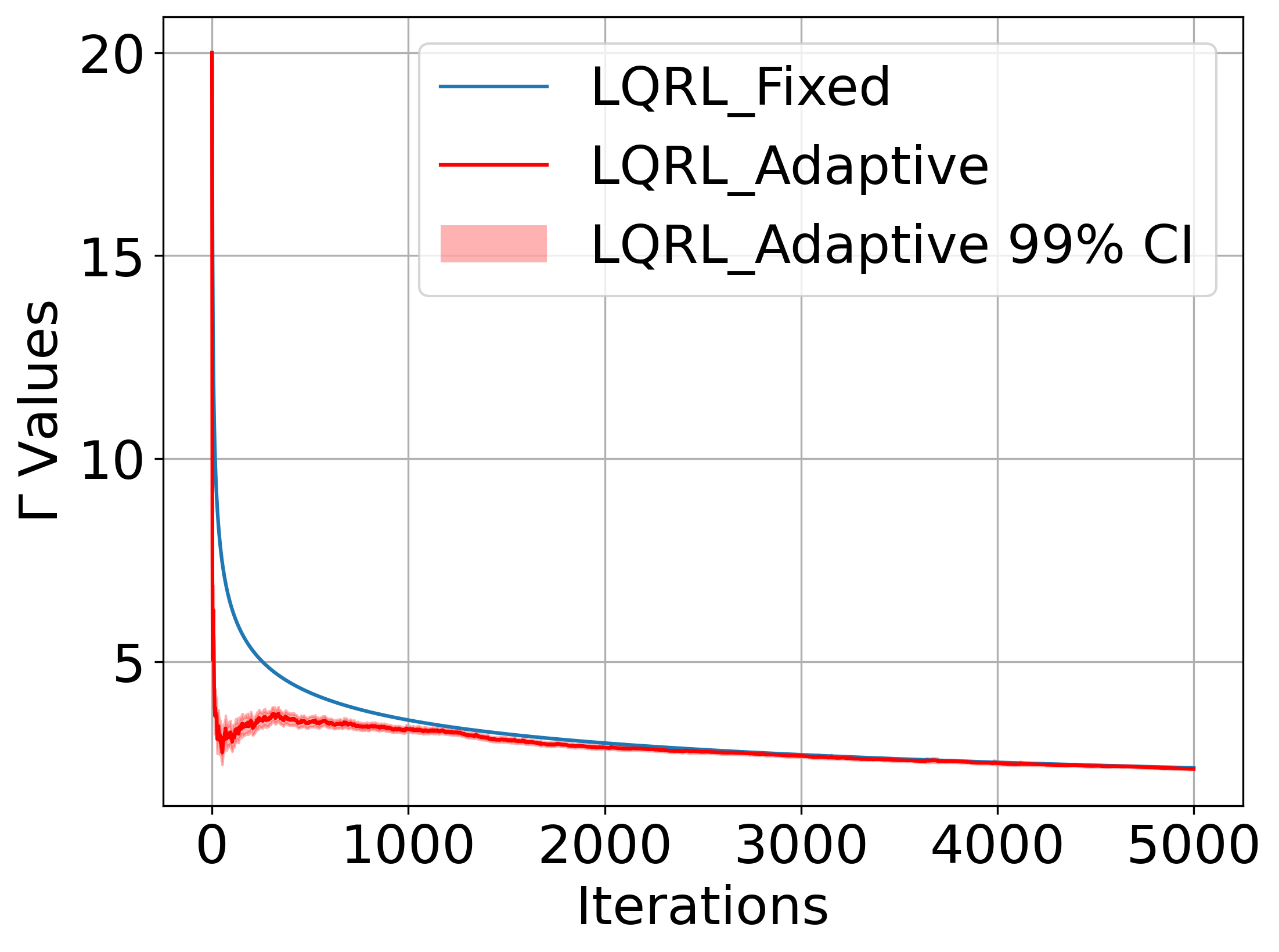}
			\caption{Trajectory of \(\Gamma\)}
			\label{figure_less_1}
		\end{subfigure}\hfill
		\begin{subfigure}{0.48\textwidth}
			\includegraphics[width=\linewidth]{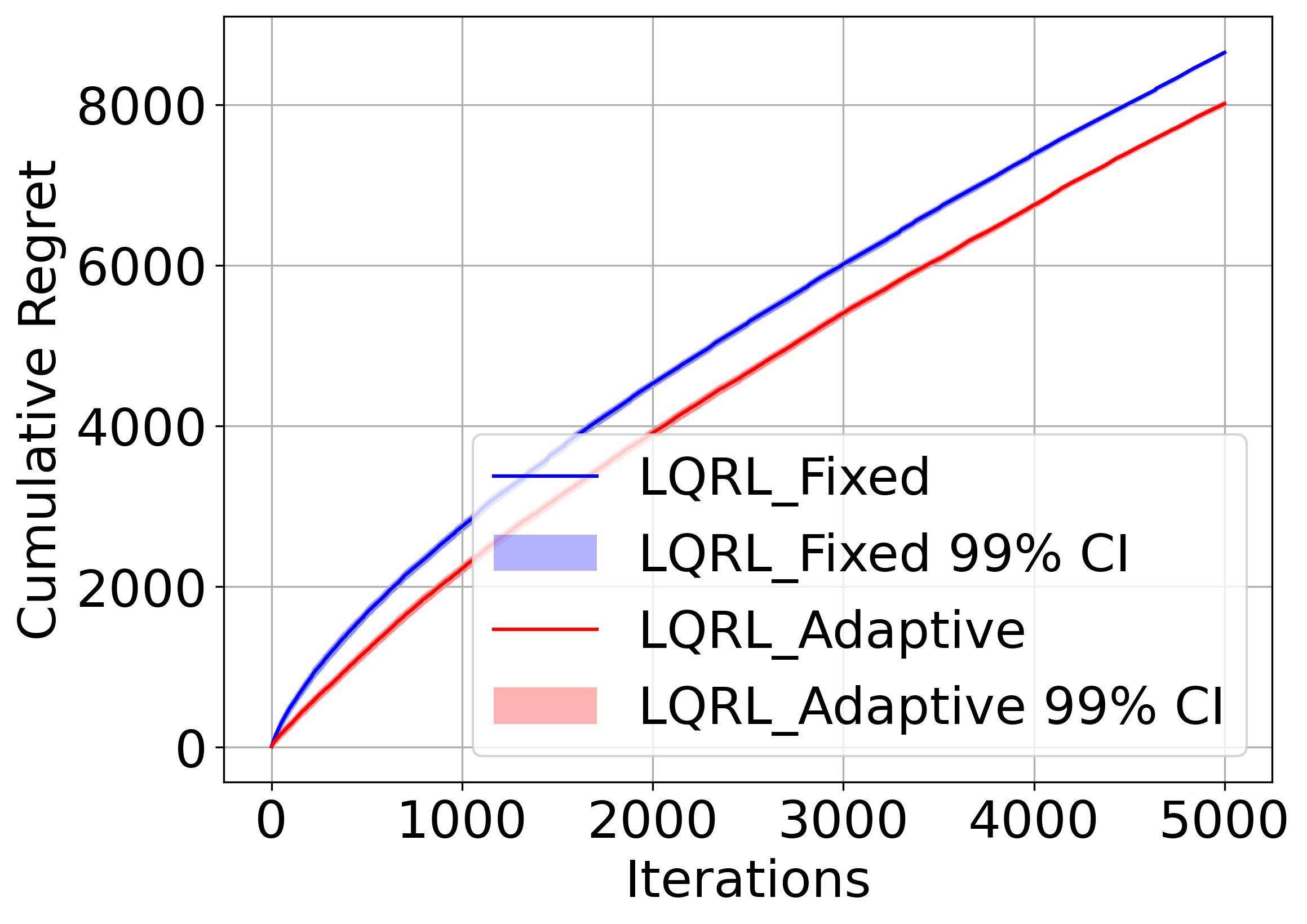}
			\caption{Cumulative regret}
			\label{figure_less_2}
		\end{subfigure}
		\caption{Comparison of exploration level and cumulative regret under excessive initial exploration.}
		\label{fig:less}
	\end{figure}
	
	Figure \ref{figure_less_1} shows that while both algorithms start with an unnecessarily  high level of \(\Gamma\), LQRL\_Adaptive rapidly adjusts \(\Gamma\) downward, having effectively learned that exploration is less needed in this setting. By contrast, LQRL\_Fixed follows a predetermined decay in \(\Gamma\) and remains at a consistently higher level. As a result, Figure \ref{figure_less_2} demonstrates that LQRL\_Adaptive achieves lower cumulative regret, indicating a more efficient learning process by dynamically adapting exploration to the circumstances.

	\paragraph{Insufficient Exploration with a Poor Policy}
	We now examine an opposite scenario where \(\bm\phi_0\) starts at \(0\), relatively far away from its optimal value \(\bm\phi^*=-2\). The initial exploration parameters are set too low at \(\gamma_0 = \Gamma_0 = 0.02\), creating a situation where increased exploration is crucial for effective learning.
	
	\begin{figure}[ht]
		\centering
		\begin{subfigure}{0.48\textwidth}
			\includegraphics[width=\linewidth]{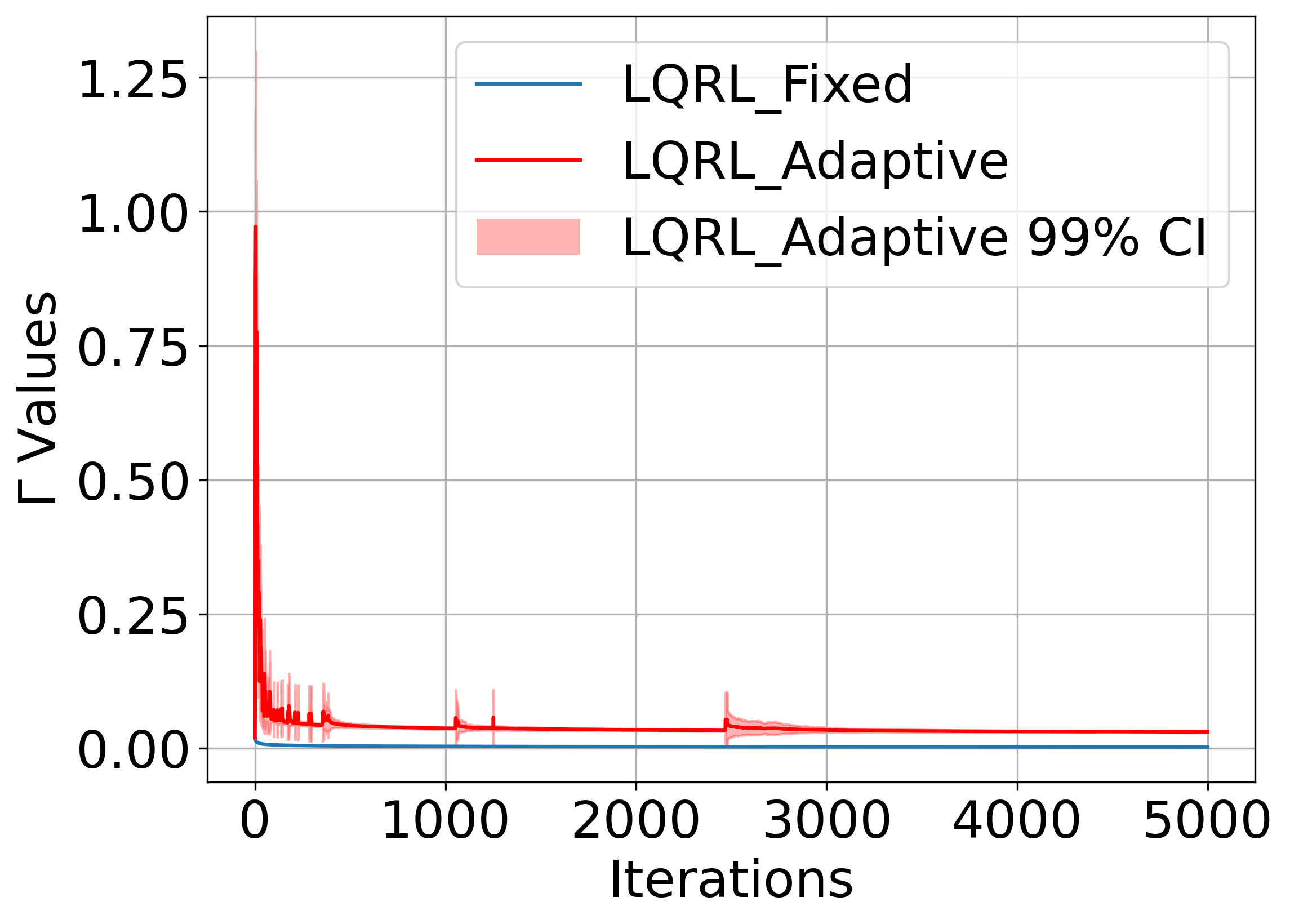}
			\caption{Trajectory of \(\Gamma\)}
			\label{figure_more_1}
		\end{subfigure}\hfill
		\begin{subfigure}{0.48\textwidth}
			\includegraphics[width=\linewidth]{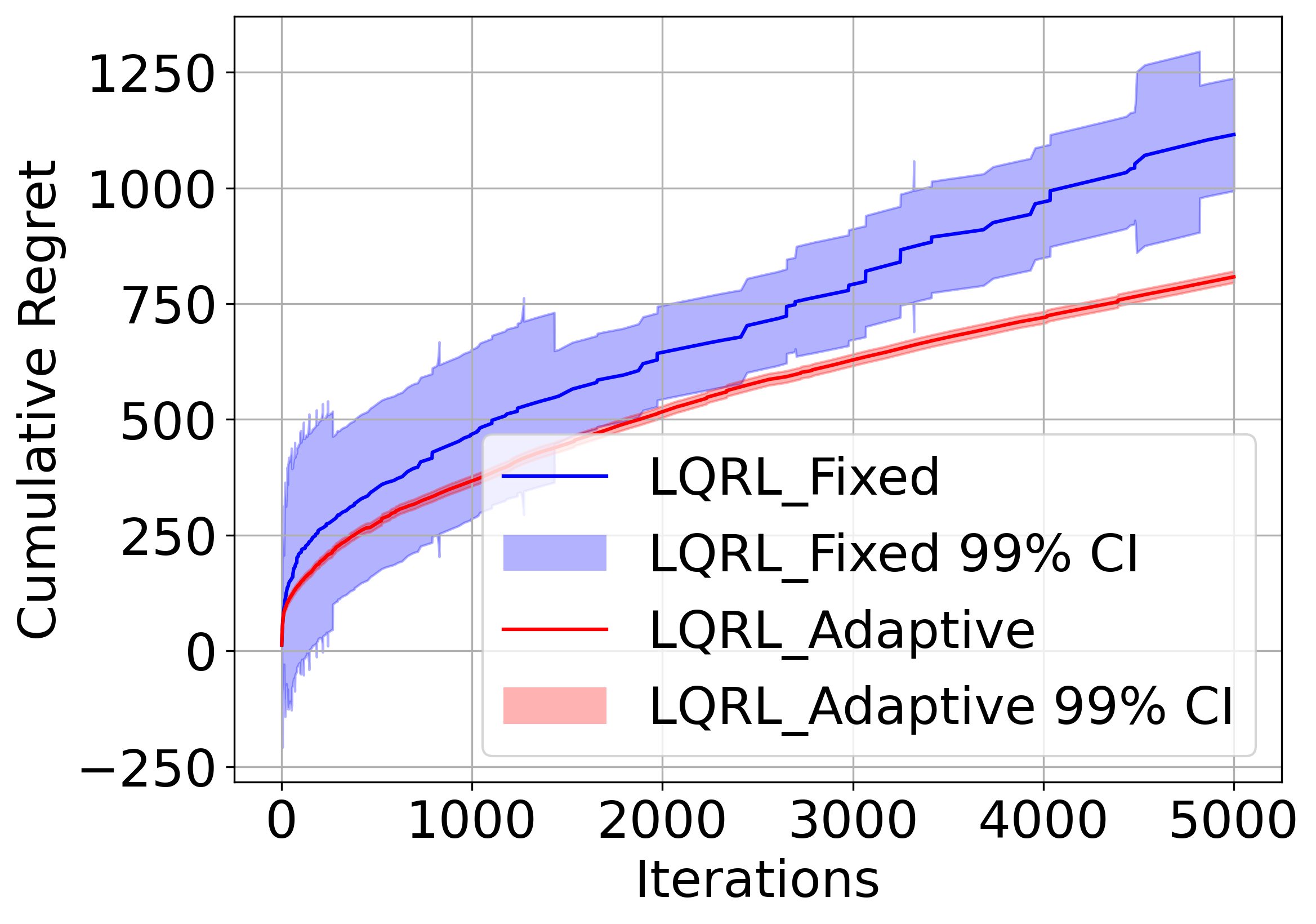}
			\caption{Cumulative regret}
			\label{figure_more_2}
		\end{subfigure}
		\caption{Comparison of exploration level and cumulative regret under insufficient initial exploration.}
		\label{fig:more}
	\end{figure}
	
	Figure \ref{figure_more_1} shows that LQRL\_Adaptive rapidly increases \(\Gamma\) from its initial low value of \(0.02\) to nearly \(1\), maintaining a higher exploration level than LQRL\_Fixed throughout. By contrast, LQRL\_Fixed follows a predetermined {\it decaying} schedule, reducing \(\Gamma\) even further over iterations, which is \textit{counterproductive} in this case where greater exploration is called for. As a result, Figure \ref{figure_more_2} demonstrates an even larger performance gap in cumulative regret compared to the previous scenario, further underscoring the benefits of adaptively adjusting exploration for learning.

	\subsection{Adaptive vs. Fixed Explorations: Randomized Model Parameters}
	\label{sec_random_experiment}
	
	In the final experiment, we evaluate the robustness of the comparison between LQRL\_Adaptive and LQRL\_Fixed in a setting where model parameters and initial exploration levels are randomly generated, mimicking real-world scenarios with no prior knowledge of the environment and no pre-tuning of exploration parameters. Specifically, for each simulation run, the model parameters \(A, B, C,\) and \(D\) are sampled from a uniform distribution \(\mathcal{U}(-5, 5)\), while the initial exploration levels of \(\gamma_0\) and \(\Gamma_0\) follow \(\mathcal{U}(0, 5)\). The initial policy parameter is set to be \(\bm\phi_0 = 0\) as a neutral starting point. To ensure the reliability of the experiment, we conduct \(10,000\) independent runs.
	
	\begin{figure}[ht]
		\centering
		\includegraphics[width=0.49\linewidth]{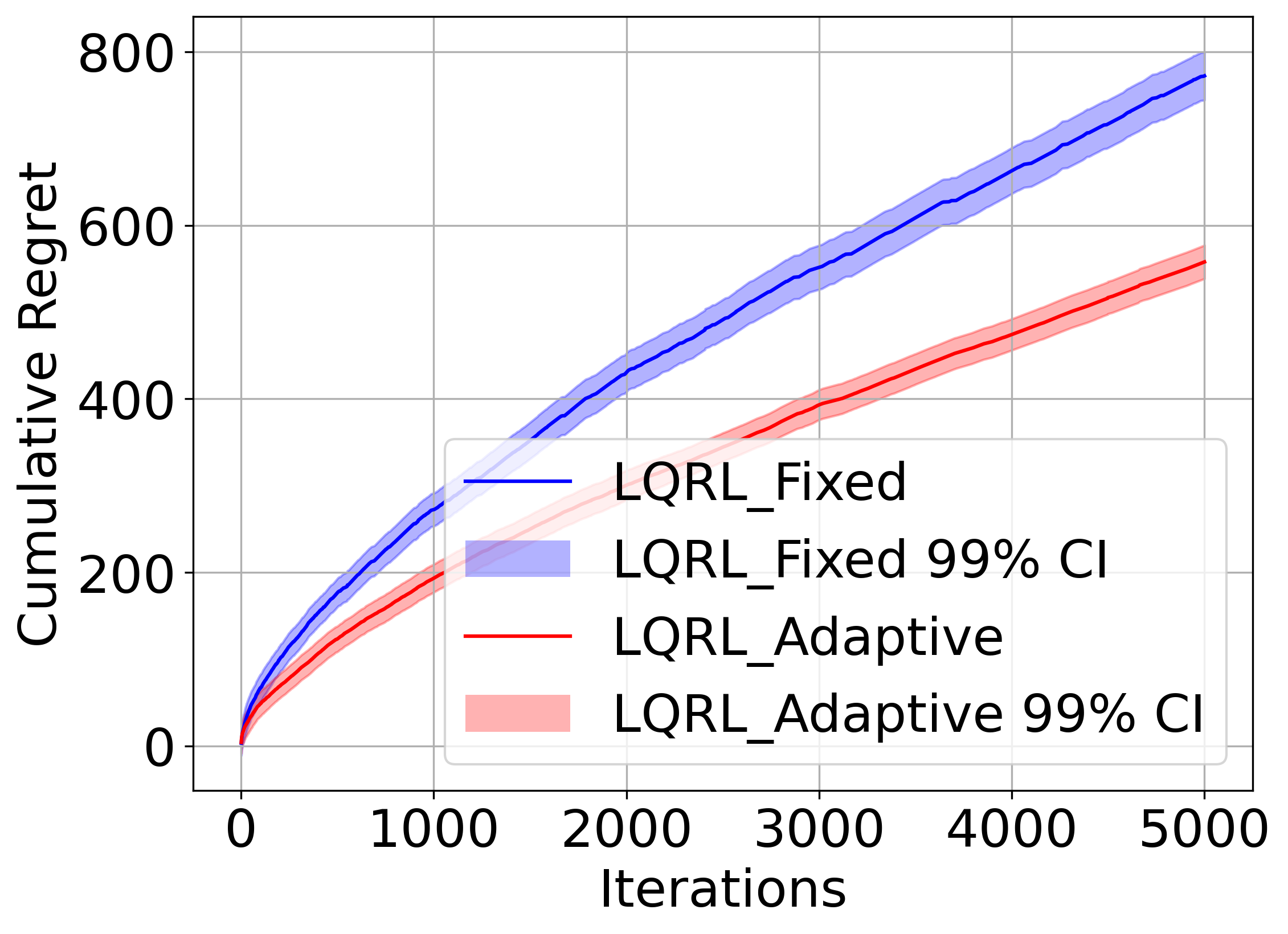}
		\caption{Comparison of regrets under randomized environments.}
		\label{fig:random}
	\end{figure}
	
	Figure \ref{fig:random} demonstrates that LQRL\_Adaptive significantly outperforms LQRL\_Fixed, with the performance gap widening as the number of iterations increases.

	\section{Conclusions}
	\label{sec_conclusion}
	
This paper is a continuation of \cite{huang2024sublinear}, aiming to develop a model-free continuous-time RL framework for a class of indefinite stochastic LQ problems with state- and control-dependent volatilities. A key contribution, compared to \cite{huang2024sublinear}, is the introduction of a data-driven exploration mechanism that adaptively adjusts both the temperature parameter controlled by the critic and the stochastic policy variance managed by the actor. This approach overcomes the limitations of fixed exploration schedules, which often require extensive tuning and incur unnecessary exploration costs. While the theoretically proved sublinear regret bound of $O(N^{\frac{3}{4}})$ is the same as that in \cite{huang2024sublinear}, numerical experiments confirm that our adaptive exploration strategy significantly improves learning efficiency compared to \cite{huang2024sublinear}.

Research on {\it model-free} continuous-time RL is still in the very early innings. To our best knowledge  \cite{huang2024sublinear,tang2024regret} are the only works on regret analysis in the realm, largely because the problem is extremely challenging.
We focus on the same LQ problem in \cite{huang2024sublinear} because this is the problem that we are able to solve at the moment, one nevertheless that may serve as a starting point for tackling more complex problems. Meanwhile, it remains an interesting open question whether adaptive exploration strategies also theoretically improve the regret bound, which is observed from our numerical experiments. Finally, our framework relies on entropy regularization and policy variance for exploration, which may be enhanced by goal-based constraints/rewards to improve sampling efficiency. All these provide promising avenues for future research, driving further advancements in data-driven exploration and continuous-time RL.

	\begin{appendices}
		\section{More Details for Numerical Experiments}
		\label{sec_experiment_details}
		
		To ensure full replicability, we set the random seed from 1 to 100 for each independent run in both LQRL\_Adaptive and the model-based benchmark in the first and second experiments. For the third experiment comparing LQRL\_Adaptive and LQRL\_Fixed, we set the random seed from 1 to 1,000 for each independent run in both scenarios. For the fourth experiment under the randomized environment, we set the random seed from 1 to 10,000.

		The setup for our first experiment using the model-free algorithm with data-driven exploration (LQRL\_Adaptive), Algorithm \ref{algo_rl_lq}, is as follows. The initial policy mean parameter is set to \(\bm\phi_0 = -1.1\), with the actor and critic exploration levels initialized at \(\Gamma_0 = 0.5\) and \(\gamma_0 = 2\), respectively. The learning rates are specified by \(a^{(\bm\phi)}_n = \frac{0.05}{(n+1)^{3/4}}\) and \(a^{(\Gamma)}_n = \frac{1}{(n+1)^{3/4}}\). For computational efficiency, the projection sets are defined as \([-2.25, -1.1]\) for \(\bm\phi_n\) and \([0,1]\) for \(\Gamma_n\). Although the projections were originally specified for theoretical convergence guarantees, they are slightly modified in our experiments to accelerate convergence. The projection sets for \(\bm\theta\) and \(\gamma\) remain unbounded. The deterministic sequence \(b_n\) is chosen as \(b_n = 20 \max(1, (n+1)^{1/4})\). Additionally, we define \({k}_1(t;\bm\theta) = 1\) and \({k}_3(t;\bm\theta,\gamma) = 0\), satisfying the conditions in \eqref{eq_k_bounds}, noting that our results do not depend on the explicit form of the value function.
		
		The model-based benchmark is adapted from \cite{szpruch2024optimal} and extended by \cite[Algorithm A.1]{huang2024sublinear} to accommodate state- and control-dependent volatility in our setting. The initial parameter estimations are set to be \(A = B = C = D = 10\), resulting in the same initial \(\bm\phi_0 = -1.1\) as in LQRL\_Adaptive. The initial exploration level \(\Gamma_0 = 0.5\) is also matched. The deterministic exploration schedule follows \(\Gamma_n = 0.5 (n+1)^{-1}\), consistent with \cite{huang2024sublinear}. Finally, the projection set for \(\bm\phi_n\) remains \([-2.25, -1.1]\), aligning with the setup used in LQRL\_Adaptive.

		For all the plots in the first and second experiments, for both our adaptive model-free algorithm and model-based benchmarks, regression lines are fitted using data from iterations 5,000 to 100,000. This avoids the influence of early-stage learning noise. In all the regret plots, we use the median and exclude extreme values; similar conclusions hold when using the mean instead.
		
		In our third experiment, comparing LQRL\_Adaptive and LQRL\_Fixed, most settings remain identical to the first experiment, except that the bounds \(c^{(\bm\phi)}_n\) and \(c^{(\Gamma)}_n\) are set to be 20 to accommodate the scenarios under consideration. For the fourth experiment, to account for potentially large values of \(\bm\phi\) and \(\Gamma\) due to random model parameters, we further increase the bounds \(c^{(\bm\phi)}_n\) and \(c^{(\Gamma)}_n\) to be 100.
		
	\end{appendices}

	\section*{Acknowledgment}
	The authors are supported by the Nie Center for Intelligent
Asset Management at Columbia University. Their work is also part of a Columbia-
CityU/HK collaborative project that is supported by the InnoHK Initiative, The
Government of the HKSAR, and the AIFT Lab.

	\bibliographystyle{ieeetr}
	\bibliography{ref}

\end{document}